\pgfplotsset{compat=1.18}
\definecolor{rgbblue}{RGB}{68,119,170}
\definecolor{rgbgreen}{RGB}{34,136,51}
\definecolor{rgbviolet}{RGB}{170,51,119}
\newcommand{\storeproof}[2]{
  \ifthenelse{\boolean{toggleappendix}}%
    {\expandafter\gdef\csname proof-#1\endcsname{#2}}%
    {\begin{proof}#2\end{proof}}
}
\newcommand{\showproof}[1]{%
  \ifthenelse{\boolean{toggleappendix}}%
    {\begin{proof}\csname proof-#1\endcsname\end{proof}}%
    {}
}
\newtheorem{theorem}{Theorem}
\numberwithin{theorem}{section}
\newtheorem{lemma}[theorem]{Lemma}
\newtheorem{example}[theorem]{Example}
\theoremstyle{remark}
\newtheorem{remark}[theorem]{Remark}
\newtheorem{assumption}[theorem]{Assumption}
\title{On the minimax optimality of Flow Matching through the connection to kernel density estimation}
\author{Lea Kunkel and Mathias Trabs\footnote{We thank the participants of the Oberwolfach Mini-Workshop 2508b "Statistical Challenges for Deep Generative Models" for inspiring discussions and helpful comments on this project.}}
\date{Ruhr University Bochum and Karlsruhe Institute of Technology}
\begin{document}
	\maketitle
	\begin{center}
	\begin{minipage}{0.8\textwidth}
    Flow Matching has recently gained attention in generative modeling as a simple and flexible alternative to diffusion models. While existing statistical guarantees adapt tools from the analysis of diffusion models, we take a different perspective by connecting Flow Matching to kernel density estimation.
    We first verify that the kernel density estimator matches the optimal rate of convergence in Wasserstein distance up to logarithmic factors, improving existing bounds for the Gaussian kernel. Based on this result, we prove that for sufficiently large networks, Flow Matching achieves the optimal rate up to logarithmic factors. 
If the target distribution lies on a lower-dimensional manifold, we show that the kernel density estimator profits from the smaller intrinsic dimension on a small tube around the manifold. The faster rate also applies to Flow Matching, providing a theoretical foundation for its empirical success in high-dimensional settings.

	\end{minipage}
\end{center}
\textbf{Keywords:} Generative models, continuous normalizing flows, rate of convergence, kernel density estimator, Wasserstein distance, distribution estimation \\
\textbf{MSC 2020:} 62E17, 62G07, 68T07

	\section{Introduction}
	The goal of generative models is to learn a map $\psi$ that transforms a random variable $Z$ drawn from a simple and known \textit{latent distribution}  $\mathbb{U}$ into a new variable $\psi(Z)$ whose distribution closely approximates the unknown target distribution $\mathbb{P}^*$. Hence, generative models are naturally linked to the classical non-parametric distribution estimation problem, enhanced by the additional requirement of enabling fast and efficient sampling from the learned distribution. 
    
	Flow Matching, a generative model introduced by \cite{lipman2023}, has recently attained significant interest due to its considerably more straightforward construction in comparison to diffusion models (building up on \cite{sohl2015}), which have been regarded as the state-of-the-art generative method. Instead of learning $\psi$ directly, like GANs \citep{goodfellow}, Flow Matching relies on learning a vector field $v$ that describes the direction of movement to transfer the probability mass from $\mathbb{U}$ to $\mathbb{P}^*$. This leads to the following ODE in time $t \in [0,1]$,
    \begin{equation*}
	\frac{d}{dt} \psi_t(x) = v_t(\psi_t(x)), \quad \psi_0(x) = x.
\end{equation*}
    The solution to this ODE yields a full path of probability densities via the push-forward distributions 
    \begin{equation*}
	p_t = [\psi_t]_{\#}p_0, 
\end{equation*}
    where $p_0$ is the density of $\mathbb{U}$. The vector field $v$ should be chosen such that $p_1$ is close to the density $p^*$ of $\mathbb{P}^*$.
    
   Originally, normalizing flows were used to transform a latent distribution into an approximation of an unknown distribution by an invertible map constructed according to the change of variable theorem. This method was defined by \cite{tabak2010} and \cite{tabak2013}, but popularized by \cite{rezende2015} and \cite{dinh2015}. For an overview see \cite{kobyzev2020}. Continuous normalizing flows (CNF), introduced by \cite{chen2018}, use a neural ordinary differential equation (ODE) to construct this mapping via a maximum likelihood method. However, the training requires simulations of the ODE, which results in high computational cost, see \cite{grathwohl2018}. The Flow Matching approach of \cite{lipman2023} circumvents this by using a regression objective to train a CNF, which avoids simulations during training,
   \begin{equation}\label{eq:PSI} 
	\Psi(\tilde{v}) \coloneqq	\mathbb{E}_{\substack{t\sim \mathcal{U}[0,1], \\Y \sim p^*, \\X_t \sim p_t\left(\cdot | Y\right)}}\big[\left|\tilde v_t(X_t)-v_t\left(X_t| Y\right)\right|^2\big],
\end{equation}
where $p_t(\cdot|Y)$ is a time dependent conditional density path associated to the conditional vector field $v_t(\cdot|Y)$ and both $p_t(\cdot|Y)$ and $v_t(\cdot|Y)$ are known by construction. This regression objective is in certain cases similar to the score matching objective \citep{hyvarinen2005,vincent2011}, which is used to train diffusion models based on stochastic differential equations, introduced by \cite{song2020}.

Flow Matching algorithms have been successfully used in many different applications that benefit from efficient sampling, such as text-to-speech \citep{guo2024} and text-to-image \citep{yang2024,esser2024} settings, the production of novel molecular or protein structures \citep{dunn2024,bose2024} or the construction of surrogate models in high energy physics \citep{bieringer2024classifier}. It has also been adjusted theoretically to different settings. \cite{atanackovic2024} adapted Flow Matching to the case of interacting particles, \cite{gat2024} explored the discrete setting. \cite{chen2024} generalized the Euclidean setting to the Riemannian setting, allowing for more general geometries. \cite{kerrigan2023} extended Flow Matching to function spaces. However, the statistical properties have only recently been studied by \cite{fukumizu2024} in the Wasserstein $2$ distance. They do not use the exact setting of \cite{lipman2023}, but rather introduce stopping times that depend on the number of samples, enabling the transfer of methods known from the statistical analysis of diffusion models.

In this work, we demonstrate that Flow Matching in the setting of \cite{lipman2023} is closely related to the classical kernel density estimation (KDE). This connection allows us to analyze Flow Matching from a new perspective. First, we show that the motivation of Flow Matching also holds for its empirical counterparts where we replace the unknown target density $p^*$ in \eqref{eq:PSI} with the empirical measure of the observations. For sufficiently large network classes, the resulting generative algorithm coincides exactly with a kernel density estimator, where the kernel is given by the density of the latent distribution. We show convergence rates for the kernel density estimator in Wasserstein $1$ distance in a setting that allows for the Gaussian kernel which is the most popular choice in Flow Matching applications. Our rates coincide with the minimax optimal rates from \cite{niles_weed2022} up to logarithmic factors. We then transfer our analysis to Flow Matching using neural network classes of parameterized vector fields. This particularly shows that even in case of perfect approximation, which corresponds to heavy overparameterization in practice, the Flow Matching estimator does not deteriorate to the empirical measure, but to an estimator that is minimax optimal. Put another way, Flow Matching can be, up to constants, at least as good as a KDE even without the presence of network-specific structures.

Unlike \cite{fukumizu2024}, we do not build on the similarities to diffusion models. Separating the error of the kernel density estimator allows us to analyse empirical risk minimization without the need for Chernoff-type bounds. This avoids the problems pointed out by \cite{yakovlev2025} in the analysis of diffusion models. Overall, the analysis of Flow Matching is more delicate, since Girsanov's theorem does not apply and thus the strategy of \cite{chen2023c} cannot be used. Therefore, our bound depends exponentially on the Lipschitz constant of the vector field. This is one of the reasons for the difficulties in the proof of \cite{fukumizu2024}\footnote{The bounds of the integral in the exponent of the first equation on p.\ 14 (in the ArXiV version) are not correct. The separation of the outer integral does not imply the same separation of the integral in the exponent. The correct factor in the integral is $\exp(2 \int_{T_0}^{t_j} L_u d u)$, where $L_u$ is the Lipschitz constant of the approximated vector field, which depends on $T_0$ and is thus not bounded by a universal constant.}. The main theoretical novelty of this work compared to standard approaches in the statistical literature on diffusions and Flow Matching is to use the KDE as a reference model for the empirical risk minimizer.
Therefore, we do not recover the classical trade-off in network size. Since in general, assuming that the Lipschitz constant of the vector field is bounded with a fixed constant requires an assumption on the unknown distribution \citep{kunkel2025distribution}, our approach requires significantly less assumptions on the unknown distribution. Our analysis allows us to show convergence rates in Wasserstein $1$ distance in the case where overparameterized neural networks are used for the vector field.

While these rates are minimax optimal up to a logarithmic constant, they suffer from the curse of dimensionality. In case $\mathbb{P}^*$ is supported on a lower-dimensional manifold, we show that, on a small tube around the manifold, the mass of the generated distribution concentrates according to the unknown distribution much faster, depending on the intrinsic dimension and thus circumventing the curse of dimensionality. This result again applies to both, the KDE and the Flow Matching estimator. For the former, our results are the first that do not require a structural adaption of the KDE to the manifold itself. Therefore, they contribute to the statistical understanding of the KDE under the manifold hypothesis previously studied by \cite{divol2022, Berenfeld2021}.

\paragraph{Related work}
    \cite{lipman2023} use a fixed latent distribution. Similar approaches for flows between two possibly unknown distributions $\mathbb{P}$ and $\mathbb{Q}$ are studied by \cite{tong2024}, \cite{liu2022} and \cite{albergo2022}. \cite{tong2024} also generalize the mentioned methods.
	\cite{gao2024} prove a suboptimal rate of convergence for certain forms of Flow Matching not exactly conforming to the construction on \cite{lipman2023} in the Wasserstein $2$ distance. \cite{benton2024} analyzed Flow Matching excluding the approximation error by imposing assumptions on the covariance that lead to global Lipschitz bounds on the vector field. They also focus on different constructions than \cite{lipman2023}. \cite{gong2025} study the properties of ReLU networks to approximate a vector field corresponding to higher order trajectories.

As \cite{lipman2023} point out, Flow Matching is closely related to diffusion models. For an overview of generative diffusion models, see \cite{cao2024}. Indeed, even in cases that are not constructed to be consistent with diffusion models, the approximation of a score function has similar properties to the approximation of the Flow Matching vector field. \cite{fukumizu2024} build up on this connection.
The statistical properties of score matching are an area of ongoing research, see for example \cite{chen2023a,chen2023b,chen2023c,oko2023,tang24,azangulov2024,zhang2024,yakovlev2025}. 
\cite{marzouk2023} study the statistical properties of CNFs trained by likelihood maximization.

The kernel density or Parzen-Rosenblatt estimator \citep{Rosenblatt1956,Parzen1962} is the classical method for estimating a smooth density. For an overview in the univariate case, see \cite{tsybakov2009}, Chapter 1.2 or \cite{Devroye2001} and in the multivariate case see \cite{scott1992}. Typically, the kernel density estimator is analyzed using the mean squared error or the $L_1$ distance.  Despite its longstanding use, kernel density estimation is still the subject of ongoing research, for example in density estimation on unknown manifolds \citep{berry2017,Berenfeld2021,divol2022,wu2022}. In particular, \cite{divol2022} uses the Wasserstein metric to evaluate the performance of a kernel density estimator, however he needs to assume properties of the kernel that are not satisfied by standard choices such as the Gaussian kernel.

    \paragraph{Outline} 
In \Cref{sec:empirical_flow_matching} we introduce the problem setting mathematically and recall all the definitions needed to define the conditional Flow Matching objective from \cite{lipman2023}. We introduce the empirical counterparts, validate the empirical conditional Flow Matching objective, and portray the connection to the kernel density estimator. In \Cref{sec:rate_of_convergence} we first separate the error corresponding to the kernel density estimator and the vector field approximation. We then proceed to bound the former error in \Cref{sec:kde_rates}. Incorporating theory of network approximation, we bound the latter error and obtain a rate of convergence for the entire Flow Matching procedure in \Cref{sec:vector_field_rate}. Subsequently, we show in \Cref{manifold} that the rate can be improved if the unknown distribution is supported on a lower-dimensional manifold. Finally, we illustrate our results in \Cref{sec:numerical_illustration}. All proofs are deferred to \Cref{sec:proofs}.

\subsection{Some notations}
	
We write  $X \sim p$ as a synonym for $X \sim \mathbb{P}$ in case $\mathbb{P}$ has a density $p$ with respect to the Lebesgue measure and $\mathbb{P}$ is the distribution of the random variable $X$. 
By $\lesssim$ we denote $a \leq c \cdot b ,$ where $a,b \in \mathbb{R}$ and $c$ is a constant independent of $n$. 
For $d \in \mathbb{N}$, we use the Euclidean norm $| \cdot |$ on $\mathbb{R}^d$ and use $\|\cdot\|$ for associated operator norms. For a set $A \subset \mathbb{R}^d$, we denote the tubular neighborhood of the set $A$ with radius $r>0$ with $B(r, A)$.
The supremum norm and the $L_1$-norm of a function $f \colon \Omega \rightarrow \mathbb{R}^{d^{\prime \prime}} $ with $d^{\prime}, d^{\prime\prime} \in \mathbb{R}$ and $\Omega \subset \mathbb{R}^{d^{\prime}}$ is defined as 
\begin{equation*}
    \| f\|_{\infty, \Omega} \coloneqq \sup_{x \in \Omega} |f(x)|\quad\text{and}\quad
\| f\|_{1, \Omega} \coloneqq  \int_{\Omega} |f(x)| \; \mathrm{d}x.
\end{equation*}
In case of $\Omega = \mathbb{R}^{d^{\prime}},$ we write $\| f\|_{\infty, \mathbb{R}^{d^{\prime}}} = \| f\|_{\infty}$ and $\| f\|_{1, \mathbb{R}^{d^{\prime}}} = \| f\|_{1}.$
   
For $d^{\prime \prime} =1, \alpha \in (0,1]$ and $\Lambda >0$, the Besov ball $B_{1, \infty}^{\alpha}(\Lambda, \infty)$ is defined as
\begin{equation}\label{def:Besovball}
	B_{1, \infty}^{\alpha}(\Lambda, \Omega) \coloneqq \big\{ f \in L_1(\Omega) \colon  |f|_{	B_{1, \infty}^{\alpha}(\Omega)}\coloneqq \sup_{t>0} t^{-\alpha} 	\omega_1(f, t)_1 \leq \Lambda\big\}
\end{equation}
where
\begin{equation*}
	\omega_1(f, t)_1:=\sup _{0<|h| \leq t}\int |f(x)-f(x+h)|\; \mathrm{d}x, \quad t>0, x \in \Omega.
\end{equation*} The difference is set to zero if $x +h \notin \Omega.$ Replacing $|f|_{B_{1, \infty}^\alpha(\Omega)}<\Lambda$ with $|f|_{B_{1, \infty}^\alpha(\Omega)}<\infty$ in \eqref{def:Besovball}, we obtain the Besov space $B_{1, \infty}^\alpha(\Omega)$. On $B_{1, \infty}^{\alpha}(\Omega)$ we define the norm $\|f\|_{	B_{1, \infty}^{\alpha}(\Omega)} = \|f\|_{L_1(\Omega)} + |f|_{	B_{1, \infty}^{\alpha}(\Omega)}$.
Compared to more commonly used smoothness classes, such as Hölder classes, the Besov class defined above only imposes integrated and not pointwise smoothness. This allows for local irregularities like jumps.

Finally, let $C^{k}(\Omega)$ denote the set of functions $f$ such that the coordinate functions are each $k$-times continuously differentiable and the supremum norm of the derivatives are finite.
For $k \in \mathbb{N}_0^d$ and $|k| = \sum_{i = 1}^{d}k_i$ set
$D^k=\frac{\partial^{|k|}}{\partial x_1^{k_1} \ldots \partial x_d^{k_d}},
$ where $\frac{\partial}{\partial x_i}$ denotes the weak partial derivative. 
 
\section{Empirical Flow Matching}\label{sec:empirical_flow_matching}

Assume we observe an i.i.d. sample $X^*_1,\dots,X^*_n$ from an unknown distribution $\mathbb{P}^*$ on a compact set $\mathcal{X} \subset [-1,1]^d.$ Further assume that $\mathbb{P}^*$ has a density $p^*$ with respect to the Lebesgue measure. The goal of generative modeling is to mimic the distribution $\mathbb{P}^*.$  

For a time dependent vector field $v \colon [0,1] \times \mathbb{R}^d \rightarrow \mathbb{R}^d$ we consider the flow $\psi \colon [0,1] \times \mathbb{R}^d \rightarrow \mathbb{R}^d$ given as the solution to the ODE
\begin{equation}\label{ODE}
	\frac{d}{dt} \psi_t(x) = v_t(\psi_t(x)), \quad \psi_0(x) = x.
\end{equation}
For a fixed latent distribution with Lebesgue density $p_0$, the vector field $v$ generates a probability density path $p \colon [0,1] \times \mathbb{R}^d \rightarrow \mathbb{R}_{>0}$ with $\int p_t(x) \mathrm{d}x= 1$ for all $t$ via the push-forward distributions \begin{equation*}
	p_t = [\psi_t]_{\#}p_0,  \quad \text{i.e. } \psi_t(Z) \sim p_t \text{ for } Z\sim p_0.
\end{equation*}
The ODE \eqref{ODE} corresponds to the Lagrangian description (in terms of particle trajectories) of the conservation of mass formula \cite[p.14]{Villani2008}. The change of variables formula links it to the Eulerian description: A necessary and sufficient condition for $v_t$ to generate $p_t$ is 
	\begin{equation}\label{nec_and_suf_cond}
		\frac{d}{d t} p_t+\operatorname{div}(p_t v_t)=0,
	\end{equation}
see \cite[p. 14]{Villani2008}
.\\

Approximating a given vector field $v$ that generates a certain density path $p_t$ using a function $\tilde{v}$ leads to the Flow Matching objective
\begin{equation}\label{eq:fmo}
	\mathbb{E}_{\substack{t \sim \mathcal{U}[0,1]\\X_t \sim p_t}} \big[|v_t(X_t) - \tilde{v}_t(X_t) |^2\big]
\end{equation}
by \cite{lipman2023}.

They have consider a probability path of the form
	\begin{equation}\label{marginal_prob_paths}
		p_t(x)=\int p_t(x | y) p^*(y) \; \mathrm{d}y,
	\end{equation}
where $p_t(\cdot | y)\colon \mathbb{R}^d \rightarrow \mathbb{R}$  is a \emph{conditional probability path} generated by some vector field $v_t(\cdot|y)\colon \mathbb{R}^d \rightarrow \mathbb{R}^d$ for $y \in \mathbb{R}^d$. The vector field generating \eqref{marginal_prob_paths} is then given by
\begin{equation}\label{marginal_vector_field}
	v_t(x)=\int v_t(x | y) \frac{p_t(x | y) p^*(y)}{p_t(x)} d y.
\end{equation}
In this setting \cite{lipman2023} show that the Flow Matching objective \eqref{eq:fmo} is equivalent to the conditional Flow Matching objective 
\begin{equation} \label{flow_matching_objective}
	\Psi(\tilde{v}) \coloneqq	\mathbb{E}_{\substack{t\sim \mathcal{U}[0,1], \\Y \sim p^*, \\X_t \sim p_t\left(\cdot | Y\right)}}\big[\left|\tilde{v}_t(X_t)-v_t\left(X_t| Y\right)\right|^2\big].
\end{equation}

As $p^*$ is unknown, in practice, the expectation in $Y\sim p^*$ is replaced by the empirical counterpart based on the observations $X_1^*,\dots,X_n^*$. This leads to the empirical counterparts of \eqref{marginal_prob_paths} and \eqref{marginal_vector_field} given by  
\begin{align}
	p^n_t(x) &= \frac{1}{n} \sum_{i = 1}^{n} p_t(x|X_i^*)\qquad\text{and}\qquad 
	v^n_t(x) = \sum_{i = 1}^{n} v_t(x|X_i^*)\frac{p_t(x|X_i^*)}{ \sum_{j = 1}^{n} p_t(x|X_j^*)	}. \label{eq:def_vt}
\end{align}
With this modification we recover the sufficient condition for $v^n_t$ to generate $p^n_t$ analogously to \cite[Theorem 1]{lipman2023}. 
	
\begin{lemma} \label{thm:vectorfield_generates_ppath}
	If  $v_t(\cdot |X_i^*)$ generates $p_t(\cdot |X_i^*)$ for all $i=1,\dots,n,$ then $v^n_t$ generates $p^n_t.$
\end{lemma}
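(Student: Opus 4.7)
The plan is to verify the continuity equation \eqref{nec_and_suf_cond} for the pair $(p^n_t,v^n_t)$, exploiting the fact that all operations involved (sums, divergence, time derivative) are linear and the empirical marginal is just a finite average.

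First, I would simplify the product $p^n_t v^n_t$. Since the denominator in the definition \eqref{eq:def_vt} of $v^n_t$ satisfies $\sum_{j=1}^n p_t(x\mid X_j^*) = n\, p^n_t(x)$, multiplying through cancels this denominator and yields
\begin{equation*}
  p^n_t(x)\, v^n_t(x) \;=\; \frac{1}{n}\sum_{i=1}^n v_t(x\mid X_i^*)\, p_t(x\mid X_i^*).
\end{equation*}
This is the key identity: the nonlinearity in $v^n_t$ (division by the mixture density) is artificial and disappears at the level of the momentum $p^n_t v^n_t$.

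Next, using the definition of $p^n_t$ and the linearity of $\partial_t$ and $\operatorname{div}$, I would compute
\begin{equation*}
  \frac{d}{dt} p^n_t(x) + \operatorname{div}\bigl(p^n_t(x)\, v^n_t(x)\bigr)
  = \frac{1}{n}\sum_{i=1}^n \Bigl[\frac{d}{dt} p_t(x\mid X_i^*) + \operatorname{div}\bigl(p_t(x\mid X_i^*)\, v_t(x\mid X_i^*)\bigr)\Bigr].
\end{equation*}
By the hypothesis that each conditional vector field $v_t(\cdot\mid X_i^*)$ generates the corresponding conditional probability path $p_t(\cdot\mid X_i^*)$, the necessary condition \eqref{nec_and_suf_cond} makes every bracket on the right-hand side vanish. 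The sufficiency direction of \eqref{nec_and_suf_cond} then certifies that $v^n_t$ generates $p^n_t$, finishing the proof.

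I do not expect any substantial obstacle: the argument is essentially bookkeeping plus the observation that $\sum_j p_t(x\mid X_j^*) = n p^n_t(x)$. The one point worth being careful about is a mild regularity check (justifying that $p^n_t$ is differentiable in $t$ and $v^n_t$ is $x$-differentiable in a weak sense so that the continuity equation makes sense), but this is inherited from the assumed regularity of each $p_t(\cdot\mid X_i^*)$ and $v_t(\cdot\mid X_i^*)$ via the finite sum, and it mirrors exactly the argument in \cite[Theorem~1]{lipman2023} after replacing the integral against $p^*(y)\,\mathrm{d}y$ by the empirical average $\tfrac{1}{n}\sum_i \delta_{X_i^*}$.
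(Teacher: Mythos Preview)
Your proposal is correct and matches the paper's own argument essentially line for line: both verify \eqref{nec_and_suf_cond} for $(p^n_t,v^n_t)$ by using linearity of $\partial_t$ and $\operatorname{div}$ together with the cancellation $p^n_t v^n_t=\tfrac1n\sum_i p_t(\cdot\mid X_i^*)v_t(\cdot\mid X_i^*)$, and then invoke the hypothesis on each conditional pair.
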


The motivation for the conditional Flow Matching objective \eqref{flow_matching_objective} is the equivalence to the unconditioned Flow Matching objective with respect to the optimizing arguments \cite[Theorem 2]{lipman2023}. Using the empirical counterparts, this still holds true.
    
\begin{theorem} \label{thm:cond_equiv} Using $p^n_t$ and $v^n_t$ from \eqref{eq:def_vt}, and a class of parameterized vector fields $\mathcal{N},$ which is constructed such that the minimal arguments exists, we have that
	\begin{equation*}
		\underset{\hat{v} \in \mathcal{N}}{\mathrm{argmin}} \int_{0}^{1} \mathbb{E}_{X_t \sim p^n_t}\big[|\hat{v}_t(X_t)- v^n_t(X_t)|^2\big]\; \mathrm{d}t = 	\underset{\hat{v} \in \mathcal{N}}{\mathrm{argmin}} \int_{0}^{1} \frac{1}{n} \sum_{i = 1}^{n} \mathbb{E}_{\tilde{X}_t \sim p_t(\cdot|X_i^*)}\big[|\hat{v}_t(\tilde{X}_t)- v_t(\tilde{X}_t|X_i^*)|^2\big]\; \mathrm{d}t.
	\end{equation*}	
\end{theorem}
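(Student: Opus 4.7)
The plan is to follow the standard argument of \cite{lipman2023}, Theorem 2, adapted to the empirical setting. I expand the squared norm inside both integrals using the identity $|a-b|^2=|a|^2-2\,a\cdot b+|b|^2$. On the left-hand side the term $\mathbb{E}_{X_t\sim p^n_t}[|v^n_t(X_t)|^2]$ does not depend on $\hat v$, and likewise on the right-hand side the terms $\mathbb{E}_{\tilde X_t\sim p_t(\cdot|X_i^*)}[|v_t(\tilde X_t|X_i^*)|^2]$ are independent of $\hat v$; both can be dropped when taking the argmin. It then suffices to show that the remaining quadratic and cross terms agree on both sides.

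For the quadratic term, I use $p^n_t(x)=\tfrac{1}{n}\sum_{i=1}^n p_t(x|X_i^*)$ directly:
\begin{equation*}
\int |\hat v_t(x)|^2\, p^n_t(x)\, \mathrm{d}x=\frac{1}{n}\sum_{i=1}^n\int |\hat v_t(x)|^2\, p_t(x|X_i^*)\,\mathrm{d}x,
\end{equation*}
which matches the corresponding contribution on the right-hand side. The key step is the cross term. Here I exploit the fact that the denominator $\sum_{j=1}^n p_t(x|X_j^*)=n\, p^n_t(x)$ in the definition of $v^n_t$ cancels against $p^n_t$, which I expect to be the only nontrivial computation:
\begin{equation*}
v^n_t(x)\,p^n_t(x)=\sum_{i=1}^n v_t(x|X_i^*)\frac{p_t(x|X_i^*)}{\sum_{j=1}^n p_t(x|X_j^*)}\cdot\frac{1}{n}\sum_{k=1}^n p_t(x|X_k^*)=\frac{1}{n}\sum_{i=1}^n v_t(x|X_i^*)\,p_t(x|X_i^*).
\end{equation*}
Inserting this and integrating gives
\begin{equation*}
\int \hat v_t(x)\cdot v^n_t(x)\, p^n_t(x)\,\mathrm{d}x=\frac{1}{n}\sum_{i=1}^n\int \hat v_t(x)\cdot v_t(x|X_i^*)\, p_t(x|X_i^*)\,\mathrm{d}x,
\end{equation*}
which is again the corresponding contribution on the right-hand side.

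Combining these two identities shows that the two objectives in the statement differ by a quantity that depends on $p_t(\cdot|X_i^*)$, $v_t(\cdot|X_i^*)$ and the observations only, but not on $\hat v$. Integrating over $t\in[0,1]$ and taking argmin over $\mathcal{M}$ then yields the claimed equality. The only delicate point is the exchange of sum and integral in the cross-term computation, which is justified by the assumed integrability and by the finite sums over $i=1,\dots,n$; existence of minimizers is granted by assumption on $\mathcal{M}$.
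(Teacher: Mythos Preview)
Your proposal is correct and follows essentially the same approach as the paper: expand the squared norm, discard the $\hat v$-independent terms, match the quadratic term via the definition of $p^n_t$, and match the cross term via the cancellation $v^n_t(x)\,p^n_t(x)=\tfrac{1}{n}\sum_i v_t(x|X_i^*)\,p_t(x|X_i^*)$. The paper's proof is identical in structure, so there is nothing further to add.
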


Hence the \textit{empirical conditional Flow Matching} objective
\begin{equation}\label{emp_cond_fm_obj}
	\tilde{\Psi}(\tilde{v}) \coloneqq		 \int_{0}^{1} \frac{1}{n} \sum_{i = 1}^{n} \mathbb{E}_{\tilde{X}_t \sim p_t(\cdot|X_i^*)}\big[|\tilde{v}_t(\tilde{X}_t)- v_t(\tilde{X}_t|X_i^*)|^2\big]\; \mathrm{d}t
\end{equation}
is justified theoretically. Compared to \cite{gao2024}, we use  $v_t(\cdot | X_i^*)$ directly as proposed by \cite{lipman2023} and stick to the entire time interval. Note that for simplicity we do not sample $t.$

From minimizing \eqref{emp_cond_fm_obj} in $\tilde{v}\in\mathcal{N},$ we obtain an optimal argument $\hat{v}.$ Solving the ODE \eqref{ODE} using  $\hat{v}$, we obtain a flow $\hat{\psi}_t$, i.e. $\hat\psi$ is given by
\begin{equation}\label{ODEhat}
	\frac{d}{dt} \hat \psi_t(x) = \hat v_t(\hat\psi_t(x)), \quad \hat\psi_0(x) = x,\qquad\text{for}\quad \hat v\in\operatorname*{argmin}_{\tilde v\in\mathcal M}\tilde\Psi(\tilde v).
\end{equation}
We use this flow to push forward the known, latent distribution to time $t = 1$. In accordance with the goal of generative modeling, the distribution of this pushforward should mimic $\mathbb{P}^*.$\\

In order to apply Flow Matching, we have to construct a class of conditional probability paths. Let $Z \sim \mathbb{U}$ and let $K$ denote the density of $\mathbb{U}$. Consider the functions  $\sigma \colon [0,1] \times \mathbb{R}^d \rightarrow \mathbb{R}_{>0}$ and $ \mu \colon [0,1] \times \mathbb{R}^d \rightarrow  \mathbb{R}^d.$ Set for $t \in [0,1]$ and a given $X_i^*, i \in \{1,..,n \}$,
 \begin{equation}\label{def_flow_map}
 	\psi_t(Z | X_i^*) \coloneqq \sigma_t(X_i^*)Z+\mu_t(X_i^*). 
 \end{equation}
The density of $\psi_t(Z | X_i^*)$ is by the transformation formula
	\begin{equation*}
		p_{t}(x|X_i^*) = \frac{1}{\sigma^d_t(X_i^*)}K\Big(\frac{x-\mu_t(X_i^*)}{\sigma_t(X_i^*)}\Big).
	\end{equation*} 
	We call $p_t(\cdot |X_i^*)$ the \textit{conditional kernel probability path}. 
	Setting
	\begin{equation}\label{flow_map_kernel_flow}
		\frac{d}{d t}\psi_t(x | X_i^*) = v_t(\psi_t(x | X_i^*) |  X_i^*),
	\end{equation}
	we  recover the same result like in \cite[Theorem 3]{lipman2023}:
	\begin{lemma}\label{thm:cond_vt_depends_on_mu_and_sigma}
		Let $p_t\left(x | X_i^*\right)$ be a conditional kernel probability path, and $\psi_t(\cdot| X_i^*)$ its corresponding flow as in \eqref{def_flow_map}. Then, the unique vector field that defines $\psi_t(\cdot | X_i^*)$ via \eqref{flow_map_kernel_flow} has the form:
		\begin{equation*}
		v_t\left(x  | X_i^*\right)=\frac{\frac{\partial\sigma_t }{\partial t}\left(X_i^*\right)}{\sigma_t\left(X_i^*\right)}\big(x-\mu_t\left(X_i^*\right)\big)+\frac{\partial\mu_t }{\partial t}\left(X_i^*\right) .
		\end{equation*}

	\end{lemma}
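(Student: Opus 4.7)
The plan is to directly compute the vector field from the ODE definition using the explicit form of the flow. Fix $X_i^*$ and write $\sigma_t:=\sigma_t(X_i^*)$, $\mu_t:=\mu_t(X_i^*)$ to lighten notation. Since $\sigma_t>0$, the flow map $x\mapsto \psi_t(x|X_i^*)=\sigma_t x+\mu_t$ is a global diffeomorphism of $\mathbb{R}^d$ onto itself, with inverse
$$\psi_t^{-1}(y|X_i^*)=\frac{y-\mu_t}{\sigma_t}.$$
This invertibility is the structural fact that makes the vector field recoverable and unique.

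Next I would differentiate \eqref{def_flow_map} in $t$ to obtain
$$\frac{d}{dt}\psi_t(x|X_i^*)=\frac{\partial\sigma_t}{\partial t}\, x + \frac{\partial\mu_t}{\partial t}.$$
Combining this with the defining relation \eqref{flow_map_kernel_flow}, for every $y$ in the image of $\psi_t(\cdot|X_i^*)$ (i.e., every $y\in\mathbb{R}^d$) I would set $x=\psi_t^{-1}(y|X_i^*)$, which gives
$$v_t(y|X_i^*) \;=\; \frac{\partial\sigma_t}{\partial t}\cdot\frac{y-\mu_t}{\sigma_t} + \frac{\partial\mu_t}{\partial t} \;=\; \frac{\partial\sigma_t/\partial t}{\sigma_t}\bigl(y-\mu_t\bigr) + \frac{\partial\mu_t}{\partial t}.$$
This is the asserted formula.

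For uniqueness, I would argue that since the identity \eqref{flow_map_kernel_flow} must hold at every $(t,x)$, and since $\psi_t(\cdot|X_i^*)$ is surjective onto $\mathbb{R}^d$, the value of $v_t(y|X_i^*)$ is forced at every $y$ by the computation above. No two distinct vector fields can generate the same flow. No step is really an obstacle here; the only mild subtlety worth writing out is the surjectivity of $\psi_t(\cdot|X_i^*)$, which guarantees that the formula determines $v_t$ on all of $\mathbb{R}^d$ rather than only on a trajectory.
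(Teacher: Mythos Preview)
Your proposal is correct and follows essentially the same route as the paper: compute $\partial_t\psi_t(x|X_i^*)$, invert the affine flow using $\sigma_t>0$, and substitute $x=\psi_t^{-1}(y|X_i^*)$ into the ODE relation to read off $v_t$. Your explicit remark on surjectivity to justify uniqueness is a small addition the paper leaves implicit, but otherwise the arguments coincide.
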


Set $\sigma_{\min}> 0.$ To flow from $p^n_0 (x) = \frac{1}{n} \sum_{i = 1}^{n} K(x) = K(x)$ to $p^n_1(x) = \frac{1}{n \sigma^d_{\min}} \sum_{i = 1}^{n} K(\frac{x-X_i^*}{\sigma_{\min}}),$ we can choose any differentiable functions $\sigma_t$ and $\mu_t$ such that \begin{equation*}
    \mu_0 = 0, \quad \mu_1 = 1\quad\text{and} \quad \sigma_0 = 1, \quad \sigma_1 = \sigma_{\min}.
\end{equation*}

At time $t=1$ the distribution $\mathbb P^{\psi_1^n(Z)}$ then coincides with the kernel density estimator
\begin{equation}\label{eq:KDE}
	p^n_1(x) = \frac{1}{n \sigma^d_{\min}} \sum_{i = 1}^{n} K\Big(\frac{x-X_i^*}{\sigma_{\min}}\Big),
\end{equation}
where the kernel is given by the latent distribution $\mathbb U$. Choosing $\mathbb{U} = \mathcal{N}_d(0,1)$, i.e., we consider the $d$-dimensional Gaussian kernel $K(x) = (2\pi)^{-d/2}\exp(-|x|^2/2)$, yields the proposed flow from \citet[Section 4]{lipman2023}. Moreover, considering general kernels is in line with methods by \cite{tong2024}, \cite{liu2022} and \cite{albergo2022}, which transform an unknown distribution to another.

\section{Rate of convergence}\label{sec:rate_of_convergence}

The aim of this work is to evaluate how well Flow Matching performs depending on the number of observations $n.$ To this end, we have to control the distance between $\mathbb P^*$ and $\mathbb{P}^{\hat{\psi}_1(Z)}$ with flow $\hat\psi$ from \eqref{ODEhat}. As as evaluation metric, we use the Wasserstein $1$ metric.

On the normed vector space $(\mathbb{R}^d, |\cdot|)$ the Wasserstein $1$ distance between two probability distributions $\mathbb{P}$ and $\mathbb{Q}$ on $\mathbb{R}^d$ is defined as
\begin{equation*} 
	\begin{aligned}
		\mathsf{W}_1(\mathbb{P}, \mathbb{Q}) &  \coloneqq \Big(\inf _{\pi \in \Pi(\mathbb{P}, \mathbb{Q})} \int_{\mathcal{X}} |x-y| \; \mathrm{d}  \pi(x, y)\Big) ,
	\end{aligned}
\end{equation*}
where $\Pi$ is the set of all distributions whose marginal distributions correspond to $\mathbb{P}$ and $\mathbb{Q}$ respectively. By \cite[Theorem 5.10(i)]{Villani2008}, the following duality holds: 
\begin{equation}\label{wasserstein_1_def}
	\mathsf{W}_1(\mathbb{P}, \mathbb{Q})= \sup_{W \in \operatorname{Lip}(1)}\mathbb{E}_{\substack{X \sim \mathbb{P}\\ Y \sim \mathbb{Q}}}[W(X)-W(Z)].
\end{equation}
On the space of probability measures with finite first moments, the Wasserstein $1$ distance metrizes weak convergence. A comprehensive overview of the advantages of this distance over other measures that metrize weak convergence can be found in \cite[p. 98 f.]{Villani2008}. Due to its compatibility with dimension reduction settings such as the manifold hypothesis, the Wasserstein $1$ distance is a popular choice for the evaluation of generative models, see e.g.\ \cite{Schreuder2020,Liang2018,stephanovitch2023,kunkel2025}.\\

Another popular measure of dissimiliarity is the total variation distance, 
\begin{equation*}
    \operatorname{TV}(\mathbb{P}, \mathbb{Q}) \coloneqq \sup_{A \subset \mathcal{A}}|\mathbb{P}(A)- \mathbb{Q}(A)|,
\end{equation*}
where $\mathcal{A}$ is the Borel-$\sigma$ algebra. This metric has frequently been used to evaluate diffusion models \citep{oko2023, yakovlev2025}. \Cref{ex:tv} illustrates why the total variation distance is not suitable to evaluate Flow Matching models.
\begin{example}\label{ex:tv}
    Let $d = 1, Z \sim \mathcal{N}(0, 1), \varepsilon>0$ and define the two vector fields $v^{(1)}, v^{(2)} \colon   [0,1] \times \mathbb{R} \rightarrow \mathbb{R}$
    \begin{equation*}
        v^{(1)}_t(x)\coloneqq 0, \quad v^{(2)}_t(x)\coloneqq \varepsilon \sin\Big(\frac{x}{\varepsilon}\Big), \quad \text{for all } t \in [0,1].
    \end{equation*}
    Let $\psi^{(1)}, \psi^{(2)}$ be the corresponding solutions to the ODE \eqref{ODE}. Then
    \begin{equation*}
      \|v^{(1)}-v^{(2)}\|_{\infty} \overset{\varepsilon\rightarrow 0}{\rightarrow}0 , \quad \mathsf{W}_1(\mathbb{P}^{\psi_1^{(1)}(Z)}, \mathbb{P}^{\psi_2^{(1)}(Z)}) \leq \varepsilon \overset{\varepsilon\rightarrow 0}{\rightarrow}0 \quad \text{and} \quad \underset{\varepsilon\rightarrow 0}{\liminf}\operatorname{TV}(\mathbb{P}^{\psi_1^{(1)}(Z)}, \mathbb{P}^{\psi_2^{(1)}(Z)})\geq c,
    \end{equation*}
    for a constant $c >0$ .
\end{example}

The proof of \Cref{ex:tv} shows that for convergence in total variation distance, one needs control over the gradient of $\psi$ with regard to $x$. Considering the change of variables theorem and the fact that the total variation distance operates on a density level, this is unsurprising. However, the gradient is not controlled by the Flow Matching objective. Assuming gradient control over the vector field in space can lead to different results, see \cite{su2025flow}.

The following theorem provides a first comparison of the performance of $\mathbb{P}^{\hat{\psi}_1(Z)}$ and the empirical flow $\mathbb{P}^{\psi^n_1(Z)}$ in the Wasserstein $1$ distance. 

\begin{theorem}\label{thm:error_decomp} 
    Assume that all functions in $\mathcal{N}$ are Lipschitz continuous for fixed $t$ with Lipschitz constant $\Gamma_t.$  Then for any $\tilde{v}\colon [0,1] \times\mathbb{R}^d \rightarrow \mathbb{R}^d,$ with $\tilde{v} \in \mathcal{N}$ and $v^n$ from \eqref{eq:def_vt}	
		\begin{equation}
			\mathsf{W}_1(\mathbb{P}^*, \mathbb{P}^{\hat{\psi}_1(Z)}) \leq \mathsf{W}_1(\mathbb{P}^*, \mathbb{P}^{\psi^n_1(Z)}) + \sqrt{2 e} e^{\int_{0}^1 \Gamma_t\; \mathrm{d}t} \|v^n - \tilde{v} \|_{\infty}. \label{basic_orakel}
		\end{equation}
	\end{theorem}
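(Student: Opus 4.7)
The strategy is a triangle inequality in $\mathsf{W}_1$ followed by a Grönwall-type stability estimate for the ODE flow. First, by the triangle inequality,
\begin{equation*}
\mathsf{W}_1(\mathbb{P}^*, \mathbb{P}^{\hat{\psi}_1(Z)}) \leq \mathsf{W}_1(\mathbb{P}^*, \mathbb{P}^{\psi^n_1(Z)}) + \mathsf{W}_1(\mathbb{P}^{\psi^n_1(Z)}, \mathbb{P}^{\hat{\psi}_1(Z)}),
\end{equation*}
which directly produces the first summand of the claim and reduces matters to estimating the Wasserstein distance between the two pushforwards of the same latent distribution.

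For the second Wasserstein term I would use the synchronous coupling that drives both flows by the same sample $Z \sim \mathbb{U}$, so that $\mathsf{W}_1(\mathbb{P}^{\psi^n_1(Z)}, \mathbb{P}^{\hat{\psi}_1(Z)}) \leq \mathbb{E}[|\psi^n_1(Z) - \hat{\psi}_1(Z)|]$. For fixed $Z$, I would then control $D(t) := |\psi^n_t(Z) - \hat{\psi}_t(Z)|$ via the integral form of the ODEs \eqref{ODE} and \eqref{ODEhat}, inserting the pivot $\pm \tilde v_s(\psi^n_s(Z))$. The sup-norm bound $\|v^n_s - \tilde v_s\|_\infty$ takes care of the first piece and the Lipschitz assumption on $\tilde v \in \mathcal{M}$ handles the second piece, giving
\begin{equation*}
D(t) \leq \int_0^t \|v^n_s - \tilde v_s\|_\infty\,ds + \int_0^t \Gamma_s\, D(s)\,ds,
\end{equation*}
from which Grönwall's lemma at $t = 1$ yields an estimate of the form $D(1) \lesssim \|v^n - \tilde v\|_\infty\cdot e^{\int_0^1 \Gamma_s ds}$.

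The hard part will be obtaining the sharp constant $\sqrt{2e}$. A bare Grönwall estimate produces only a factor of $1$ in front of the exponential, so tightening to $\sqrt{2e}$ requires a more delicate argument. I expect this to come from replacing the pointwise bound on $\|v^n_s - \tilde v_s\|_\infty$ by an $L^2$-integrated version via Cauchy--Schwarz---which is natural given that by \Cref{thm:cond_equiv} the empirical minimizer controls the $L^2(p^n_t\, dt)$-distance to $v^n$---together with a moment or truncation step on the latent $Z$ that contributes the additional $\sqrt{e}$ piece. A further subtlety worth flagging is that $v^n$ is in general not globally Lipschitz (the conditional kernel flow from \Cref{thm:cond_vt_depends_on_mu_and_sigma} produces steep vector fields as $\sigma_t \downarrow \sigma_{\min}$), so it is essential to pivot through an element $\tilde v \in \mathcal{M}$ rather than attempting to apply Grönwall directly to the pair $(v^n, \hat v)$; this explains the oracle-type form of the inequality.
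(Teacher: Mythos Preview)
Your overall strategy---triangle inequality in $\mathsf{W}_1$ followed by a Gr\"onwall stability estimate between the two flows driven from the same latent sample---matches the paper's. However, your pivot is misplaced and the resulting integral inequality is false as written. After inserting $\pm\tilde v_s(\psi^n_s(Z))$, the second piece is $|\tilde v_s(\psi^n_s(Z))-\hat v_s(\hat\psi_s(Z))|$, which involves \emph{two different functions at two different points}; the Lipschitz constant of $\tilde v$ alone cannot bound this by $\Gamma_s D(s)$. A further split leaves the residual $|\tilde v_s(\hat\psi_s(Z))-\hat v_s(\hat\psi_s(Z))|$, which you have no control over in sup-norm. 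The fix, which is exactly what the paper does, is to pivot through $\hat v_s(\psi^n_s(Z))$ instead: the Lipschitz constant of $\hat v\in\mathcal M$ (also $\Gamma_s$) handles the second piece, and the forcing term becomes $|v^n_s(\psi^n_s)-\hat v_s(\psi^n_s)|$. Only \emph{after} Gr\"onwall do you pass from $\hat v$ to an arbitrary $\tilde v\in\mathcal M$, and this step uses the $L^2(p^n_t\,\mathrm dt)$ minimization property of $\hat v$ from \Cref{thm:cond_equiv}---so you correctly identified the relevant lemma, but it enters at a different point than you suggest.

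Your speculation about the constant $\sqrt{2e}$ is also off target. The paper does not sharpen a pointwise Gr\"onwall bound; rather it bounds $\mathsf{W}_1\le\mathsf{W}_2$ and runs Gr\"onwall on the \emph{squared} quantity $Q_t=\int|\psi^n_t(x)-\hat\psi_t(x)|^2p_0(x)\,\mathrm dx$. Differentiating and using $2\langle a,b\rangle\le|a|^2+|b|^2$ together with the Lipschitz bound on $\hat v$ yields $\frac{d}{dt}Q_t\le(1+2\Gamma_t)Q_t+2\int|v^n_t-\hat v_t|^2\,p_0$, whence $Q_1\le 2e\cdot e^{2\int_0^1\Gamma_t\,\mathrm dt}\|v^n-\tilde v\|_\infty^2$ after invoking minimality; taking the square root gives the $\sqrt{2e}\,e^{\int\Gamma_t}$. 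So $\sqrt{2e}$ is an artifact of the squared approach, not a refinement. In fact, had your pointwise $\mathsf{W}_1$ argument been carried out with the correct pivot through $\hat v$, followed by Jensen and Cauchy--Schwarz in $t$ to reach the $L^2$ level where minimality applies, you would obtain the same exponential factor with leading constant $1$---strictly better than the paper's $\sqrt{2e}$.
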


 Using the construction from \eqref{def_flow_map} to \eqref{eq:KDE}, $\mathsf{W}_1(\mathbb{P}^*, \mathbb{P}^{\psi^n_1(Z)})$ is the estimation error of a kernel density estimator. The second term $\|v^n - \hat{v} \|_{\infty}$ depends on the set $\mathcal{N}$ and its ability to approximate $v^n$. For a sufficiently rich class $\mathcal M$ this second term will be negligible in our analysis. A corresponding error decomposition can be shown for the Wasserstein $2$ distance, too.

The proof of \Cref{thm:error_decomp} uses Grönwall's Lemma, which leads to the factor $\exp(\int_0^1 \Gamma_t \; \mathrm{d}t)$ in the second term in \eqref{basic_orakel}. This is standard in the analysis of ODEs, in context of flow-based generative models see \cite{fukumizu2024} and \cite{albergo2022}. Compared to the SDE setting of diffusion models, there is no such result like Girsanov's theorem, which is typically used to study these models, following \cite{chen2023c}, and circumvents the exponential dependence on the Lipschitz constant. This greatly complicates the analysis of Flow Matching. 
Given this dependence, a Lipschitz regularization in the Flow Matching objective seems to be theoretically beneficial. Imposing the assumption that the vector field defined in \eqref{marginal_vector_field} is Lipschitz continuous implies an extensive restriction on the class of feasible $\mathbb{P}^*$, see \cite{kunkel2025distribution}.
\begin{remark}
    A bound for arbitrary Lipschitz $\tilde{v}_t$ cannot be better than $\mathsf{W}_1(\mathbb{P}^*, \mathbb{P}^{\psi^n_1(Z)})$. If $v^n_t$ is contained in $\mathcal{N},$ then the Picard-Lindelöf theorem yields $\hat{\psi}_t = \psi^n_t$ and hence $\mathsf{W}_1(\mathbb{P}^*, \mathbb{P}^{\hat{\psi}_1}) = \mathsf{W}_1(\mathbb{P}^*, \mathbb{P}^{\psi^n_1}).$
\end{remark}

Due to \Cref{thm:error_decomp}, we can bound both  error terms individually. We start with the error corresponding to the kernel density estimator in \Cref{sec:kde_rates}. Subsequently, we bound the second error term for approximation classes $\mathcal M$ consisting of feedforward neural networks in \Cref{sec:vector_field_rate} to obtain our final result.

\subsection{Rate of convergence of the KDE in Wasserstein distance}\label{sec:kde_rates}

To bound $\mathsf{W}_1(\mathbb{P}^*, \mathbb{P}^{\psi_1^n(Z)})$ in \Cref{thm:error_decomp}, we have to analyze the corresponding kernel density estimator in the Wasserstein $1$ distance. For compact kernels, existing results for the $L_1$-error of the KDE can be exploited. However,
this can only lead to the suboptimal convergence rate $n^{-\frac{\alpha}{2\alpha +d}}$. Indeed, the Lipschitz regularity of the test function in \eqref{wasserstein_1_def} can be exploited to obtain a faster rate. Additionally, to allow for latent distributions with unbounded support, especially the Gaussian distribution used by \citet{lipman2023}, we need to investigate kernels with unbounded support. 
	
\begin{theorem}\label{thm:kde_rate_smooth}
Assume $d \geq 2,  p^* \in B_{1, \infty}^\alpha\left(\Lambda, \mathbb{R}^d\right), \alpha \in(0,1], \Lambda >0$ and $\operatorname{supp}\left(p^*\right)$ bounded. Assume $K$, used in the Flow Matching setting via \eqref{eq:KDE}, is a nonnegative $d$-dimensional kernel such that 
	\begin{align*}
		\int  yK(y) \; \mathrm{d}y = 0 , \quad \text{and} \quad  \int  |y|^{1+\alpha }   K(y)  \;\mathrm{d}y < \infty. 
	\end{align*}
	Further assume $K \in C^{\frac{d+2}{2}}$ with $\|D^{k}K\|_1 \leq C$  for $k \in \mathbb{N}_0^d$ such that $|k|\leq \frac{d+2}{2}$ and some $C>0$. Then there are $C_1$ and $C_2$ such that for $v^n$ from \eqref{eq:def_vt},
	\begin{equation*}
		\mathbb{E}\big[	\mathsf{W}_1(\mathbb{P}^*, \mathbb{P}^{\psi^n_1(Z)})  \big] \leq C_1\sigma_{\min}^{1+\alpha} + \frac{C_2}{\sqrt{\sigma_{\min}^{d-2}}} \frac{\log n}{\sqrt n}.
	\end{equation*}
	For $\sigma_{\min} \sim (n/\log^2n)^{-\frac{1}{2\alpha +d}}$ we obtain $\mathbb{E}\big[	\mathsf{W}_1(\mathbb{P}^*, \mathbb{P}^{\psi^n_1(Z)})  \big]=\mathcal O((n/\log^2n)^{-\frac{1+\alpha}{2\alpha +d}}).$
\end{theorem}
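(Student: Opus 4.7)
The plan is to split $\mathsf{W}_1(\mathbb{P}^*, \mathbb{P}^{\psi_1^n(Z)})$ via the triangle inequality through the intermediate measure $\mathbb{P}^* * K_{\sigma_{\min}}$, whose density is the expected KDE. As noted after \eqref{eq:KDE}, $\mathbb{P}^{\psi_1^n(Z)}$ is exactly the KDE $\tfrac{1}{n\sigma_{\min}^d}\sum_i K((\cdot - X_i^*)/\sigma_{\min})$, so the two resulting pieces are a bias $\mathsf{W}_1(\mathbb{P}^*, \mathbb{P}^* * K_{\sigma_{\min}})$ and a stochastic term $\mathbb{E}\mathsf{W}_1(\mathbb{P}^* * K_{\sigma_{\min}}, \mathbb{P}_n * K_{\sigma_{\min}})$, which I bound separately and then balance by choosing $\sigma_{\min}$.

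For the bias I would use the Beckmann formulation $\mathsf{W}_1(\mu,\nu) = \inf\{\|J\|_{L^1} \colon \operatorname{div} J = \mu - \nu\}$, which avoids the Lipschitz dual supremum directly. Introduce the flux $\Phi_\sigma(x) := \int_0^1 t^{-1} x\, K_{\sigma t}(x)\, dt$; a short computation using $\partial_t K_{\sigma t} = -t^{-1}\operatorname{div}(x\, K_{\sigma t})$ gives $\operatorname{div}\Phi_\sigma = \delta_0 - K_\sigma$, so $J := p^* * \Phi_{\sigma_{\min}}$ satisfies $\operatorname{div} J = p^* - p^* * K_{\sigma_{\min}}$. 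Crucially, the vanishing first moment $\int yK(y)\, dy = 0$ implies $\int \Phi_\sigma\, du = 0$, which permits the rewrite $J(x) = \int [p^*(x-u) - p^*(x)]\Phi_{\sigma_{\min}}(u)\, du$. Applying Fubini, the Besov modulus $\omega_1(p^*, |u|)_1 \leq M|u|^\alpha$, and the moment bound $\int |y|^{1+\alpha} K(y)\,dy < \infty$ then yields $\|J\|_{L^1} \lesssim \sigma_{\min}^{1+\alpha}$, which is the first term in the claim.

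For the stochastic term I pass to the Kantorovich-Rubinstein dual and rewrite it as an empirical process $\sup_{W \in \operatorname{Lip}(1)}(\mathbb{P}_n - \mathbb{P}^*)(W * \tilde{K}_{\sigma_{\min}})$, where $\tilde{K}(y) = K(-y)$. Every element $g$ of the smoothed class $\mathcal{F}_{\sigma_{\min}} := \{W * \tilde{K}_{\sigma_{\min}} \colon W \in \operatorname{Lip}(1),\ W(x_0) = 0\}$ is $1$-Lipschitz and, by the assumed $C^{(d+2)/2}$-regularity of $K$ with $\|D^k K\|_1 \leq C$, satisfies $\|D^k g\|_\infty \lesssim \sigma_{\min}^{-(k-1)}$ for $1 \leq k \leq (d+2)/2$; the Lipschitz bound at $k=1$ is independent of $\sigma_{\min}$. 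I would then use a hybrid covering-number estimate on $[-1,1]^d$: at scales $\varepsilon \geq \sigma_{\min}$ the Lipschitz bound gives $\log N(\varepsilon, \mathcal{F}_{\sigma_{\min}}, \|\cdot\|_\infty) \lesssim \varepsilon^{-d}$, and at scales $\varepsilon < \sigma_{\min}$ the $C^{(d+2)/2}$-bound gives $\log N(\varepsilon, \mathcal{F}_{\sigma_{\min}}, \|\cdot\|_\infty) \lesssim (\sigma_{\min}^{-d/2}/\varepsilon)^{2d/(d+2)}$. Dudley chaining combined with the standard symmetrization inequality then produces
\[
\mathbb{E}\sup_{W \in \operatorname{Lip}(1)}\bigl|(\mathbb{P}_n - \mathbb{P}^*)(W * \tilde{K}_{\sigma_{\min}})\bigr| \lesssim \frac{\log n}{\sqrt{n \sigma_{\min}^{d-2}}}.
\]
Both pieces of the chaining integral contribute the same $\sigma_{\min}^{-(d-2)/2}$ factor; the logarithm enters in the critical case $d=2$ (where the Lipschitz-range integral $\int_{\sigma_{\min}}^1 \varepsilon^{-1}\, d\varepsilon \asymp \log n$) and can be carried uniformly for $d \geq 2$. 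Combining with the bias and choosing $\sigma_{\min} \sim (n/\log^2 n)^{-1/(2\alpha+d)}$ balances the two contributions and gives the claimed rate.

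The main obstacle is the variance bound: a Lipschitz-only covering would force the hopeless rate $n^{-1/d}$, while a $C^{(d+2)/2}$-only Dudley estimate would give a suboptimal $\sigma_{\min}^{-d^2/(2(d+2))}$ factor. Exploiting simultaneously the $\sigma$-independent Lipschitz control of $\nabla g$ and the $\sigma$-dependent higher-derivative bounds through the hybrid-scale chaining above is what delivers the sharp exponent $(d-2)/2$; this is also what pins down the smoothness condition $K \in C^{(d+2)/2}$ and the assumption $d \geq 2$ in the statement.
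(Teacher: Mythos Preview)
Your decomposition into bias $\mathsf{W}_1(\mathbb{P}^*, \mathbb{P}^* * K_{\sigma_{\min}})$ and stochastic term $\mathbb{E}\,\mathsf{W}_1(\mathbb{P}^* * K_{\sigma_{\min}}, \mathbb{P}_n * K_{\sigma_{\min}})$ matches the paper's, but the execution of both halves is genuinely different, and both of your alternatives are correct.

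For the bias, the paper stays in the Kantorovich dual: it rewrites the term as $\sup_f \int K_{\sigma_{\min}}(y)\big[(f*p^*(-\cdot))(0) - (f*p^*(-\cdot))(y)\big]\,\mathrm{d}y$, applies the mean value theorem, and uses the $B^{\alpha}_{1,\infty}$ regularity of $p^*$ to show that $\nabla(f*p^*(-\cdot))$ is $\alpha$-H\"older, so the integrand is bounded by a multiple of $|y|^{1+\alpha}$. Your Beckmann flux construction is an equally valid route that bypasses the dual supremum entirely; it is arguably cleaner in that the role of the condition $\int yK(y)\,\mathrm{d}y=0$ becomes completely transparent (it is exactly what forces $\int\Phi_\sigma=0$ and hence allows the subtraction of $p^*(x)$).

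For the stochastic term, both arguments begin identically: pass to the empirical process over $\{K_{\sigma_{\min}}(-\cdot)*f : f \in \operatorname{Lip}(1)\}$ and record $\|D^k(K_{\sigma_{\min}}*f)\|_\infty \lesssim \sigma_{\min}^{-(k-1)}$ via Young's inequality. The paper then simply invokes Schreuder's Theorem~4 at the critical smoothness $d/2$, which delivers the bound $\sigma_{\min}^{-(d-2)/2}\,n^{-1/2}\log n$ in one line. Your hybrid chaining---Lipschitz entropy above scale $\sigma_{\min}$, $C^{(d+2)/2}$ entropy below---is a self-contained substitute that uses one more derivative of $K$ than the black-box route (you need $s>d/2$ for the small-scale Dudley integral to converge at $0$, whereas Schreuder's bound truncates at a positive scale and absorbs this into the $\log n$). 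In return your argument produces a logarithm only in the critical case $d=2$, slightly sharpening the stated inequality for $d>2$, and it explains precisely why the hypothesis $K\in C^{(d+2)/2}$ is formulated as it is.
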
 
The proof separates the bias induced by smoothing and the stochastic error after smoothing. Using the dual form \eqref{wasserstein_1_def} and the relationship between differentiation and convolution, the bound of the bias exploits both, the $\alpha$-smoothness of $\mathbb{P}^*$ and the Lipschitz-smoothness of test functions in \eqref{wasserstein_1_def}. The the stochastic error is bounded by considering $(K_{\sigma_{\min} } \ast f)$ as the smoothed test-function and then taking advantage of a result by \cite{Schreuder2020_2}. Noting that the proof relies on the dual form of the Wasserstein $1$ distance, a bound for the Wasserstein $2$ distance would require a completly different approach.

The convergence rate $(n/\log^2n)^{-\frac{1+\alpha}{2\alpha +d}}$ coincides up to the logarithmic factor with the lower bound by  \citet[Theorem~3]{niles_weed2022} and thus the above rate is minimax optimal up to the logarithm. \cite{divol2022} also considers rates of convergence for kernel density estimators. In case of $d \neq 2,$ he obtains the optimal rate for distributions bounded away from zero. However, the kernels he considers must be smooth radial functions with support bounded in $(0,1)^d$, \cite[Condition A]{divol2022}. Hence, his result does not apply to the Gaussian kernel. 

\begin{remark} \label{rem:gaussian_kernel_is_smooth}  $ $
\begin{enumerate}
    \item The assumption $\|D^{k}K\|_1 \leq C$  for $k \in \mathbb{N}_0^d$ and $C>0$ is satisfied by the Gaussian kernel. 

 \item For kernels that to not satisfy the differentiability condition in \Cref{thm:kde_rate_smooth}, we show in \Cref{thm:kde_rate_nonsmooth} a rate of convergence of $n^{-\frac{1+\alpha}{2\alpha +2+d}}.$ For nonzero $\alpha$ and $d$ big enough, this rate decays faster than $n^{-\frac{1}{d}},$ the rate of convergence of the empirical measure \citep{Dudley1969,Boissard2014}. This result replaces the assumption that $\operatorname{supp}(p^*)$ is bounded by a moment assumption on $p^*.$
    \item The second term in \Cref{thm:kde_rate_smooth} is an upper bound for the expected value of $\mathsf{W}_1(K_{\sigma_{\min}} \ast \mathbb{P}_n,K_{\sigma_{\min}} \ast \mathbb{P}^* ). $ Results on the convergence of the smooth empirical measure have been discussed in case of the Gaussian kernel. \citet[Proposition 1]{goldfeld2020} obtain a rate in $\mathcal{O}\big( \frac{1}{\sqrt{n\sigma_{\min}^d}}\big)$ for subgaussian distributions $\mathbb{P}^*$, which coincides with the rate obtained in \Cref{thm:kde_rate_nonsmooth}. \citet{weed2018} looks at distributions on $[-1,1]$ and obtains a rate in $\mathcal{O}\big( \frac{1}{\sqrt{n}\sigma_{\min}^d}\big).$ Hence the results above improves these results for distributions in $B^{\alpha}_{1, \infty}$ on bounded support.
\end{enumerate}

\end{remark}

\subsection{Flow Matching with parameterized vector field}\label{sec:vector_field_rate}

If the empirical vector field $v^n$ is contained in $\mathcal{N}$, then the second error term in \Cref{thm:error_decomp} vanishes and the Flow Matching procedure inherits the optimal rate of convergence from \Cref{thm:kde_rate_smooth}. In practice, \eqref{emp_cond_fm_obj} is optimized over a class of neural networks to find a good approximate of $v^n$. We thus have to take this approximation error into account. 

To analyze the ability of a network to approximate $v^n,$ it is necessary to know the properties of $v^n$ for all $t \in [0,1].$ Hence we are going to specify $\sigma$, $\mu$ and a latent distribution for the subsequent analysis. 

\begin{assumption}\label{ass:sigma+mu}$ $
\begin{enumerate} 
    \item We consider the following choices of  $\sigma \colon [0,1] \times \mathbb{R}^d \rightarrow \mathbb{R}_{>0}$ and $ \mu \colon [0,1] \times \mathbb{R}^d \rightarrow  \mathbb{R}^d:$
    \begin{equation*} \label{sigma_mu_setting}
		\sigma_t(X_i^*)= 1-(1-\sigma_{\min})t \quad \text{and} \quad \mu_t(X_i^*) = tX_i^*.
	\end{equation*} 
    \item We choose $\mathbb{U} = \mathcal{N}(0,I_d)$ for the latent distribution.
\end{enumerate}
\end{assumption}

  The choice of $\sigma_t$ and $\mu_t$ in \Cref{ass:sigma+mu} coincides with the setting of \cite{lipman2023} in Example II. This is not a unique choice, for other choices used in the literature in similar settings, see \cite[Table 1]{tong2024}.  To avoid losing information, if $\mathbb{U} = \mathcal{N}(0,I_d)$, then the choice of $\mu_t$ should satisfy $|\mu_t(X_i^*) | \leq |X_i^*|$. 
      $\sigma_t$ should map to $[0,1]$ to avoid unnecessarily increasing the Lipschitz constant $\Gamma_t$ of the vector field.  Generalizing the linear choice of $\sigma_t$ to \begin{equation*} \sigma_t(X_i^*)= (1-(1-\sigma_{\min}^{\frac{1}{k}}{ })^k \end{equation*} for a $k>0$ has only a negligible effect. The curvature cancels out in the construction of $v_t(\cdot|X_i^*),$ see \Cref{thm:cond_vt_depends_on_mu_and_sigma}.

In order to obtain quantitative results for the neural network approximation, we further need to specify the neural networks. We use fully connected feedforward ReLU networks, which enjoy nice approximation properties, see for example \cite{Yarotsky2017,Guehring2020,kohler2021,Schmidt_Hieber2020,suzuki2018}.

Let us fix some notation:
define the Rectified Linear Unit (ReLU) activation function, $\phi\colon \mathbb{R} \rightarrow \mathbb{R}, \phi(x) = \max(0,x)$. For a vector $v=\left(v_1, \ldots, v_p\right) \in \mathbb{R}^p$, we define the shifted ReLU activation function $\phi_v: \mathbb{R}^p \rightarrow \mathbb{R}^p$ as
\begin{equation*}
\phi_v(x)=\left(\phi\left(x_1-v_1\right), \ldots, \phi\left(x_p-v_p\right)\right), \quad x=\left(x_1, \ldots, x_p\right) \in \mathbb{R}^p.
\end{equation*}
For a number $L \in \mathbb{N}$ of hidden layers and a vector $\mathcal{A}=\left(p_0, p_1, \ldots, p_{L+1}\right) \in$ $\mathbb{N}^{L+2}$, a neural network of depth $L+1$ and architecture $\mathcal{A}$ is a function of the form
\begin{equation}\label{def:NN}
f: \mathbb{R}^{p_0} \rightarrow \mathbb{R}^{p_{L+1}}, \quad f(x)=W_L \circ \phi_{v_L} \circ W_{L-1} \circ \phi_{v_{L-1}} \circ \cdots \circ W_1 \circ \phi_{v_1} \circ W_0 \circ x,
\end{equation}
where $W_i \in \mathbb{R}^{p_{i+1} \times p_i}$ are weight matrices and $v_i \in \mathbb{R}^{p_i}$ are shift vectors. \\

Now we want to combine the approximation properties of ReLU networks with our above analysis to evaluate the performance of the Flow Matching mechanism from \eqref{ODEhat} where $\mathcal{N}=\mathsf{NN}(L,S,\Gamma)$ is a set of ReLU networks $v\colon[0,1]\times\mathbb R^d\to\mathbb R^d,(t,x)\mapsto v_t(x)$ with a fixed maximal number of layers $L$, at most $S$ nonzero weights and Lipschitz constant of $v_t$ of at most $\Gamma$ for any $t$. We obtain the following rate:

\begin{theorem}\label{thm:gaussian_kernel_optimal_rate} Grant Assumption~\ref{ass:sigma+mu} and assume $p^* \in B^{\alpha}_{1, \infty}(\Lambda, [-1,1]^d)$ for $\alpha\in(0,1], \Lambda >0$. Set $\sigma_{\min} = n^{- \frac{1}{2\alpha +d}}.$ Then we have for $n$ big enough
    \begin{equation*}
    \mathbb{E}\big[\mathsf{W}_1(\mathbb{P}^*, \mathbb{P}^{\hat{\psi}_1}) \big]\lesssim n^{- \frac{1+ \alpha}{2\alpha +d}}\log^2(n),
    \end{equation*} where $\hat{\psi}$ is given by \eqref{ODEhat} with $\mathcal{N}=\mathsf{NN}(L_n,S_n,\Gamma_n)$ where $L_n \geq O(n^{\ell})$, $S_n \geq O(\exp(n^{\ell}))$ and $\Gamma_n = O(n^{\frac{3}{2\alpha +d}})$ for some $\ell >0$.
\end{theorem}
The proof of \Cref{thm:gaussian_kernel_optimal_rate} combines \Cref{thm:error_decomp} with \Cref{thm:kde_rate_smooth}, the tail behavior of the Gaussian distribution and an approximation result of \cite{Guehring2020}. In a nutshell, we approximate the vector field $v^n$ on a growing but bounded set with an accuracy that compensates the exponential dependence on the Lipschitz constant. Given a Wasserstein $2$ version of \Cref{thm:kde_rate_smooth}, the same approach could be used the extend the result to $\mathbb{E}[\mathsf{W}_2(\mathbb{P}^*, \mathbb{P}^{\hat{\psi}_1(Z)})].$

     \ifthenelse{\boolean{toggleappendix}}{}{\showproof{gaussian_kernel_optimal_rate}}
	 \begin{remark}$ $
     \begin{enumerate}
         \item The Lipschitz constraint in \Cref{thm:gaussian_kernel_optimal_rate} is typically not enforced in practice. However, we only require a bound on the Lipschitz constant of order $\Gamma_n=\frac{2d+1}{\sigma_t^3}+\frac{1}{2}$ which is only a mild restriction for $\sigma_{\min}\to 0$ as $n\to\infty$.
         \item The network size in \Cref{thm:gaussian_kernel_optimal_rate} is not bounded from above. The empirical reference measure $\mathbb{P}^{\psi_n(Z)}$ in \Cref{thm:error_decomp} circumvents the need for concentration inequalities, which in turn avoids complexity bounds. Thus there is no trade-off in approximation power and network size.
         \item Combining \Cref{thm:kde_rate_smooth} with \Cref{thm:error_decomp} and the proof strategy used in \Cref{thm:gaussian_kernel_optimal_rate} leads to a rate of convergence of $n^{-\frac{1+\alpha}{2\alpha + 2 + d}}$ for $p^*$ with unbounded support that satisfy a moment condition.
         \item Compared to the literature on diffusion models and the first results on Flow Matching, we do not use early stopping times, but rather set $\sigma_{\min} >0$ according to $n.$ In terms of variance, both procedures are interchangeable: calculating $\sigma_{t_*}$ in \cite[Theorem 9]{fukumizu2024} leads to exactly the same variance. However, early stopping leads to a bias induced by the mean function $\mu$, which is not $1$ at a time $t<1.$  This could be fixed using an adapted mean function, which is $1$ at the time of the early stopping, but this leads to much more complicated structures of $\mu.$
     \end{enumerate}
	 \end{remark}
\Cref{thm:gaussian_kernel_optimal_rate} shows that Flow Matching achieves minimax optimal rates (up to logarithmic factors) for certain classes of unknown densities, giving some justification of their the empirical success. There are several limitations and potential avenues for further research in the results of this work. 
 
First of all the networks used for \Cref{thm:gaussian_kernel_optimal_rate} are very large and do not correspond to the networks used in practice. While theoretical worst-case error bounds will always lead to networks that are not exactly small, the work of \cite{yakovlev2025} gives hope for improvement. Up to now it is an open question, whether a smaller network would come at the cost of a non-optimal rate. In addition, the size of the network and the overall result could benefit from the theoretical use of more sophisticated networks, like the U-net \citep{ronneberger2015} construction in \cite{lipman2023}. 

We have focused on the statistical analysis of Flow Matching, neglecting the optimization process as a source of additional error as well as the numerical error for solving the ODE \eqref{ODEhat}. Including these errors would be more in line with the real world. The same applies for including the sampling of $t\sim \mathbb{U}$ in the Flow Matching objective. 

\subsection{Density estimation on a manifold}\label{manifold}
The rate in \Cref{thm:gaussian_kernel_optimal_rate} depends on the dimension $d$ and therefore exhibits the classical curse of dimensionality. In this subsection, we analyze a case, in which the curse of dimensionality can be circumvented and the rate depends on some intrinsic dimension $d^{\prime} < d$. 

To this end, we consider a single-chart $d^{\prime}$-dimensional manifold $\mathcal{M} $, i.e.\ there exists a smooth, invertible map $g \colon \mathbb{R}^{d^{\prime}} \rightarrow \mathbb{R}^d$ such that
\begin{equation*}
    \mathcal{M} = g(\mathbb{R}^{d^{\prime}}).
\end{equation*}
This setting is commonly assumed in generative modeling, specifically in GANs, see for example \cite{Schreuder2020}.
For our analysis, we impose the following assumption on the smoothness of $g$.
\begin{assumption}\label{ass:manifold}Let $g \in C^2$ and $\|g\|_{\infty} \leq M, \|Dg(y) \|\leq M_1, \|D(Dg(y)) \|\leq M_2$.
    The Jacobian $Dg$ is such that for every $y \in \mathbb{R}^{d^{\prime}}$, $0 < m < \lambda_{\min}(Dg(y))<\lambda_{\max}(Dg(y))<M_3$, where $\lambda_{\min}$ is the smallest and $\lambda_{\max}$ is the largest eigenvalue and $m, M_3 >0$ are constants. The inverse $g^{-1}$ of $g$ is twice continuously differentiable.
\end{assumption}

We further assume that $\mathcal{M} $ has a positive reach $\tau >0$. The notion of the reach of a manifold is due to \cite{federer1959curvature}. First, we define the distance function to $\mathcal{M} $ for $x \in \mathbb{R}^d$ as $\operatorname{dist}(x, M)=\inf _{z \in \mathcal{M} }|x-z|$. Then, the medial axis of $\mathcal{M} $ is given by
\begin{equation*}
    \operatorname{Med}(\mathcal{M} )=\{x \in \mathbb{R}^d \mid \exists \ y, z \in \mathcal{M}, y \neq z,|y-x|=|z-x|=\operatorname{dist}(x, \mathcal{M} )\}.
\end{equation*}
The reach of $\mathcal{M} $ is then defined as
\begin{equation*}
    \tau=\inf _{x \in \mathcal{M} } \operatorname{dist}(x, \operatorname{Med}(\mathcal{M} )).
\end{equation*}
For an illustration of the concept of the reach see \Cref{fig:skizze_lipschitzconst} in \Cref{sec:helper}.
For $r < \tau$ and $r < c_0 = \frac{2 m^2 \tau }{2 M_2 \tau +3mM_1} $, let $\pi \colon \mathbb{R}^{d}\rightarrow \mathcal{M}  \cup \{0 \}$ be a function such that $\pi\big|_{B(r, \mathcal{M} )}$ is the orthogonal projection onto $\mathcal{M} $ and $\pi(B(r, \mathcal{M} )^{c})=\{0 \}$.
In this subsection, we assume that $\mathbb{P}^*$ is supported on $\mathcal{M} $ and that there is a $\alpha$-Besov smooth distribution $\mathbb{P}^*_{d^{\prime}}$ on $\mathbb{R}^{d^{\prime}}$ such that $g(Y) \sim \mathbb{P}^*$ when $Y \sim \mathbb{P}^*_{d^{\prime}}$. The next theorem bounds the estimation error of the KDE in a proximity of the manifold.

\begin{theorem}\label{thm:kde_rate_dimred}
Assume $\mathbb{P}^*_{d^{\prime}}$ has a Lebesgue density $p^*_{d^{\prime}}\in B^{\alpha}_{1, \infty}(\Lambda,  [-1,1]^{d^{\prime}})$ for $\alpha \in (0,1], \Lambda >0$. Further assume that the kernel $K$ satisfies the assumptions of \Cref{thm:kde_rate_smooth}. Then
\begin{equation*}
    \mathbb{E}\big[\mathsf{W}_1(\mathbb{P}^*, \pi_\#\mathbb{P}^{\psi^n_1(Z)})\big]\leq C_1\sigma_{\min}^{1+\alpha} + \frac{C_2}{\sqrt{\sigma_{\min}^{d^{\prime}-2}}} \frac{\log n}{\sqrt n}.
\end{equation*}
For $\sigma_{\min} \sim (n/\log^2(n))^{-\frac{1}{2\alpha + d^{\prime}}}$ we obtain $\mathbb{E}\big[\mathsf{W}_1(\mathbb{P}^*, \pi_\#\mathbb{P}^{\psi^n_1}(Z))\big] = \mathcal{O}\big((n/\log^2(n))^{-\frac{1+\alpha}{2\alpha + d^{\prime}}}\big)$.
\end{theorem}
The proof of \Cref{thm:kde_rate_dimred} is inspired and uses techniques from \cite{amari_estimatingreach} and \cite{Genovese_ridge_estimation}. The proof starts with a decomposition of the projection into tangential vectors and a remainder. The remainder is bounded using a Taylor series and the reach assumption. The rest of the proof follows along the lines of \Cref{thm:kde_rate_smooth}, but is carefully adapted to the manifold and projection setting. While the rigorous arguments are very technical, in a nutshell, the tangential noise is retraced to $\mathbb{R}^{d^{\prime}}$ and the space dependence of this retraction is shown to be small. Just as \Cref{thm:kde_rate_smooth}, the proof strongly relies on the dual form of the Wasserstein $1$ metric, thus extensions to Wasserstein $2$ are not straightforward.

{\color{black}The above result is in line with the similar results for kernel density estimators for distributions on manifolds, c.f.\ \cite{Berenfeld2021,divol2022}. Note however that in these articles, the kernel is normalized with respect to the lower dimension, i.e. they consider $h^{-d^{\prime}}K(\cdot/h)$ for bandwidth $h>0$, while our generative setting enforces us to prove \Cref{thm:kde_rate_dimred} for kernels which are normalized with respect to the ambient space dimension, i.e. $h^{-d}K(\cdot/h)$ (and $h=\sigma_{\min}$ in our notation). } Furthermore, in \citet[Theorem 3.1]{divol2022}, the kernel estimator is evaluated on the manifold. Since the Flow Matching estimator results in generated samples and we evaluate the distribution of these samples, we cannot measure the Wasserstein distance along the manifold without the tube-projection, since the density of a distribution restricted to a null-set is not unique. This is different from the density estimation settings, where the density is either constructed directly on the manifold \citep{divol2022} or evaluated pointwise \citep{Berenfeld2021}. \Cref{sec:numerical_illustration} will give another motivation for this evaluation measure.

\begin{theorem}\label{thm:gaussian_kernel_optimal_rate_dimred} Grant Assumption~\ref{ass:sigma+mu} and assume the setting of \Cref{thm:kde_rate_dimred}. Set $\sigma_{\min} = n^{- \frac{1}{2\alpha +d^{\prime}}}.$ Then there are sequences $L_n,S_n,\Gamma_n\in \mathbb{N}$ such that for $n$ big enough
\begin{equation*}
   \mathbb{E}\big[ \mathsf{W}_1(\mathbb{P}^*, \pi_\#\mathbb{P}^{\hat{\psi}_1(Z)})\big]\lesssim n^{-\frac{1+\alpha}{2\alpha + d^{\prime}}}\log(n)^2,
\end{equation*}
where $\hat{\psi}$ is given by \eqref{ODEhat} with $\mathcal N=\mathsf{NN}(L_n,M_n,\Gamma_n)$.   
\end{theorem}

\Cref{thm:gaussian_kernel_optimal_rate_dimred} shows that the Flow Matching estimator can approximate $\mathbb{P}^*$ with a faster rate on a small tube around the manifold, exploiting the smoothness of $\mathbb{P}^*$ on the manifold. It uses exactly the same argument as \Cref{thm:gaussian_kernel_optimal_rate}.
For diffusion models, similar behavior has been observed by \cite{tang24,azangulov2024}.

\section{Numerical illustration}\label{sec:numerical_illustration}
While the empirical success is widely known, the main argument of this work is that in an overparameterized setting the Flow Matching algorithm is at least as good as the kernel density estimator. In a full dimensional setting, we still recover a minimax optimal rate of convergence. Particularly, the Flow Matching algorithm does not mimic the empirical measure, but a smoothed version. This enables leveraging smoothness, even in an overparameterized setting. In this section, we want to illustrate this comparison using a constructed example. 

To this end, we consider the following setting: the distribution $\mathbb{P}^*$ is the uniform distribution on the graph of the sine curve restricted to $[-3,3]$. Thus, $\mathbb{P}^*$ lies on a $1$-dimensional manifold in $\mathbb{R}^2$. We assume that we have $200$ samples from $\mathbb{P}^*$. In this setting, we train a Flow Matching model with three hidden SeLU layers of width $512$ and $\mathbb{U}= \mathcal{N}(0, I_2)$, for a definition of the SeLU activation function see \citet{klambauer2017self}. Our implementation builds up on the code provided by \cite{tong2024}, we use the Adam optimizer \citep{kingma2014} with the standard parameters $(\operatorname{lr}=0.001$, $\beta_1=0.9$ and $\beta_2= 0.999)$ and \cite{politorchdyn} to solve the neural ODE with the solver \texttt{dopri5} and set $\mathrm{atol}= \mathrm{rtol} = 10^{-5}$. In order to analyse the model without avoidable additional smoothing, we do not use minibatches. We evaluated the estimator after $10\ 000, 20\ 000, 30\ 000$ and $40\ 000$ updates to avoid committing to a fixed number of training iterations. As a comparison, we use a KDE with a Gaussian kernel. For both methods, we generate $512$ samples. To interpret the KDE as a generative model, we draw an index uniformly, sample from $\mathcal{N}(0, I_2)$ and then add the noise to the sample of the index. An example of this setting and generated samples from the generative models for $\sigma_{\min} \in \{0.5, 0.1, 0.05, 0.01\}$ can be found in \Cref{fig:two_rows_four_each}. We repeated the following experiments for networks with width of $32$ and $8$, the results can be found in \Cref{fig:four_images} in \Cref{appendix}.

\begin{figure}[htbp]
    \centering
    \begin{subfigure}[b]{0.24\textwidth}
        \centering
        \includegraphics[trim= 0cm 0cm 0cm 1.88cm, clip, width=\textwidth]{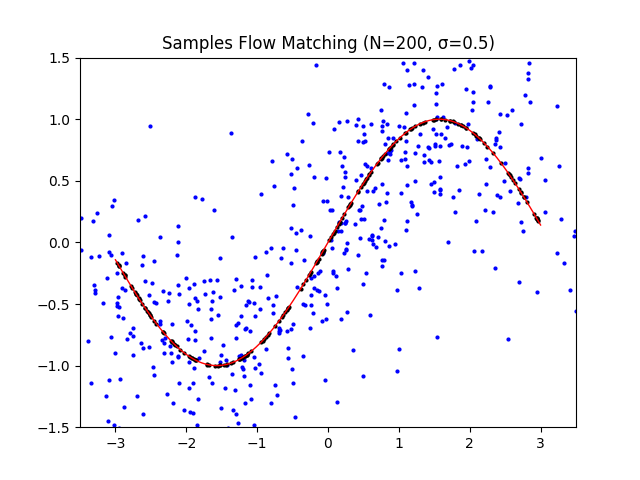}
        \caption{FM, $\sigma_{\min}=0.5$}
    \end{subfigure}
    \hfill
    \begin{subfigure}[b]{0.24\textwidth}
        \centering
        \includegraphics[trim= 0cm 0cm 0cm 1.88cm, clip,width=\textwidth]{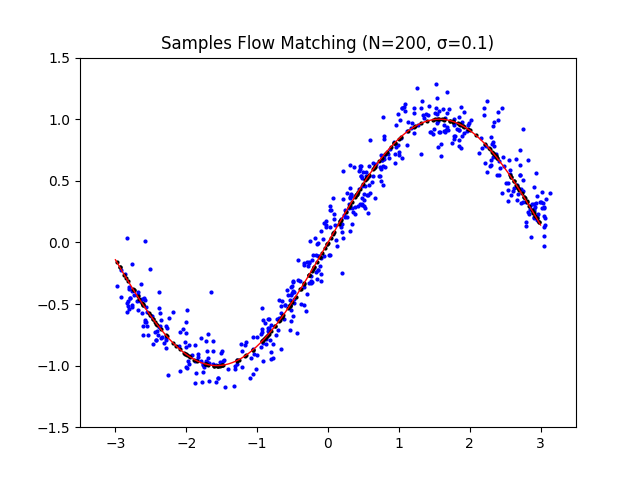}
        \caption{FM, $\sigma_{\min}=0.1$}
    \end{subfigure}
    \hfill
    \begin{subfigure}[b]{0.24\textwidth}
        \centering
        \includegraphics[trim= 0cm 0cm 0cm 1.88cm, clip,width=\textwidth]{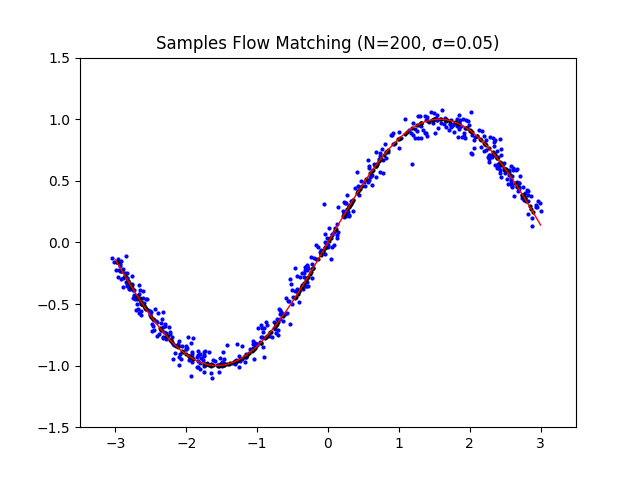}
        \caption{FM, $\sigma_{\min}=0.05$}
    \end{subfigure}
    \hfill
    \begin{subfigure}[b]{0.24\textwidth}
        \centering
        \includegraphics[trim= 0cm 0cm 0cm 1.88cm, clip,width=\textwidth]{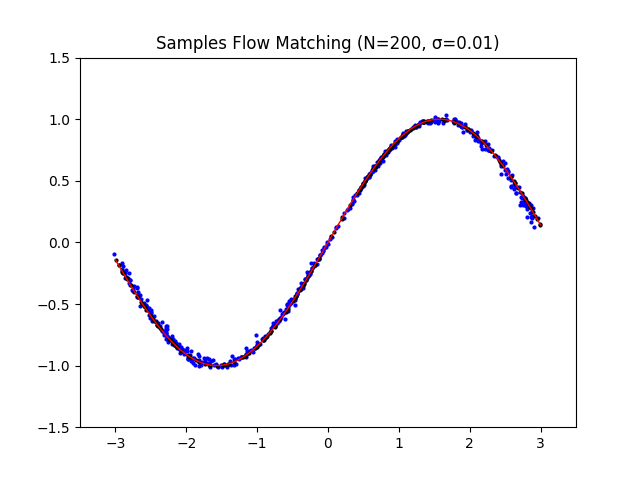}
        \caption{FM, $\sigma_{\min}=0.01$}
    \end{subfigure}

    \vspace{0.5cm} 
    \begin{subfigure}[b]{0.24\textwidth}
        \centering
        \includegraphics[trim= 0cm 0cm 0cm 1.88cm, clip,width=\textwidth]{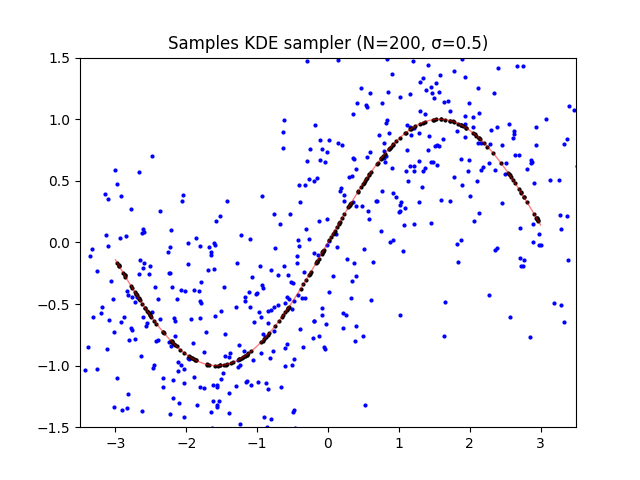}
        \caption{KDE, $\sigma_{\min}=0.5$}
    \end{subfigure}
    \hfill
    \begin{subfigure}[b]{0.24\textwidth}
        \centering
        \includegraphics[trim= 0cm 0cm 0cm 1.88cm, clip,width=\textwidth]{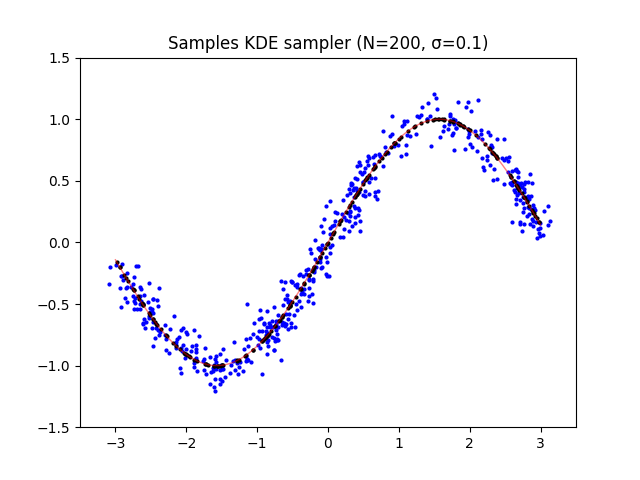}
        \caption{KDE, $\sigma_{\min}=0.1$}
    \end{subfigure}
    \hfill
    \begin{subfigure}[b]{0.24\textwidth}
        \centering
        \includegraphics[trim= 0cm 0cm 0cm 1.88cm, clip,width=\textwidth]{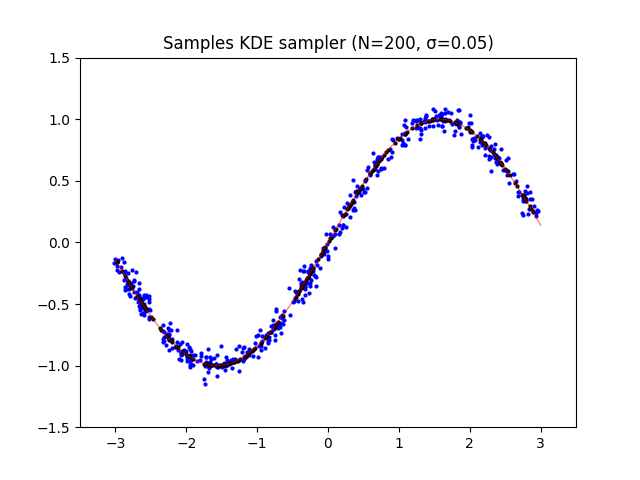}
        \caption{KDE, $\sigma_{\min}=0.05$}
    \end{subfigure}
    \hfill
    \begin{subfigure}[b]{0.24\textwidth}
        \centering
        \includegraphics[trim= 0cm 0cm 0cm 1.88cm, clip,width=\textwidth]{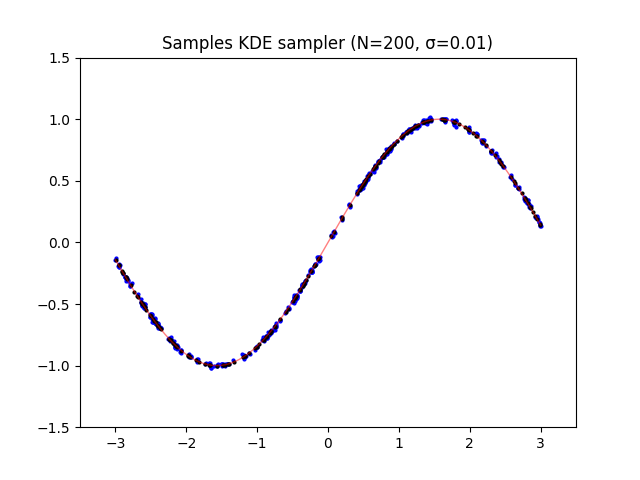}
        \caption{KDE, $\sigma_{\min}=0.01$}
    \end{subfigure}

    \caption{Exemplarily result of the Flow Matching (FM) estimator and the KDE. The generated samples are the blue dots, the training sample is represented by black dots on the red sine curve. Top row shows generated samples of the FM estimator after $40000$ runs.}
    \label{fig:two_rows_four_each}
\end{figure}

By construction, we know that the KDE does not detect the manifold, but spreads noise in two dimensions. The question is whether the Flow Matching estimator is capable of detecting the manifold, without imposing structural assumptions on the manifold. To measure this, we calculate the mean distance of the generated samples to the manifold. We repeated the same experiment independently drawing new samples $10$ times.  The results are gathered in \Cref{fig:mean_distance_100}. For all choices of $\sigma_{\min}$ except the largest, the KDE achieves the smallest mean distance, which strongly opposes the claim that the Flow Matching estimator detects the manifold better that the KDE.

Naturally, for very small $\sigma_{\min}$, the KDE is very close to the empirical distribution. In proofs, i.e.\ the proof of \Cref{thm:kde_rate_smooth} only a proper choice of $\sigma_{\min}$ leads an the optimal rate. The Flow Matching estimator does not behave the same, the model seems to smooth intrinsically. Albeit we ran the model for a very large number of training iterations, there is no mechanism to prevent the convergence towards a local minimum. Therefore this effect could be due to the optimization error. More desirable is an intrinsic smoothing.
In order to measure this effect, we calculated the maximal distance between two projections on the sine curves in the $10$ runs. In \Cref{fig:largest_gap} it is visible that for smaller $\sigma_{\min}$, the Flow Matching estimator approximately covers the manifold better than then KDE, whereas this cannot be seen for larger $\sigma_{\min}$. While for the uniform distribution that is considered here, gaps in the covering are a satisfactory indication for the quality of estimation, this measure fails for other distributions. In hindsight, this motivates the projected measure evaluated in \Cref{thm:kde_rate_dimred}.

\begin{figure}[htbp]
    \centering
    
    \begin{subfigure}[b]{0.49\textwidth}
         \centering
    \includegraphics[width=\linewidth]{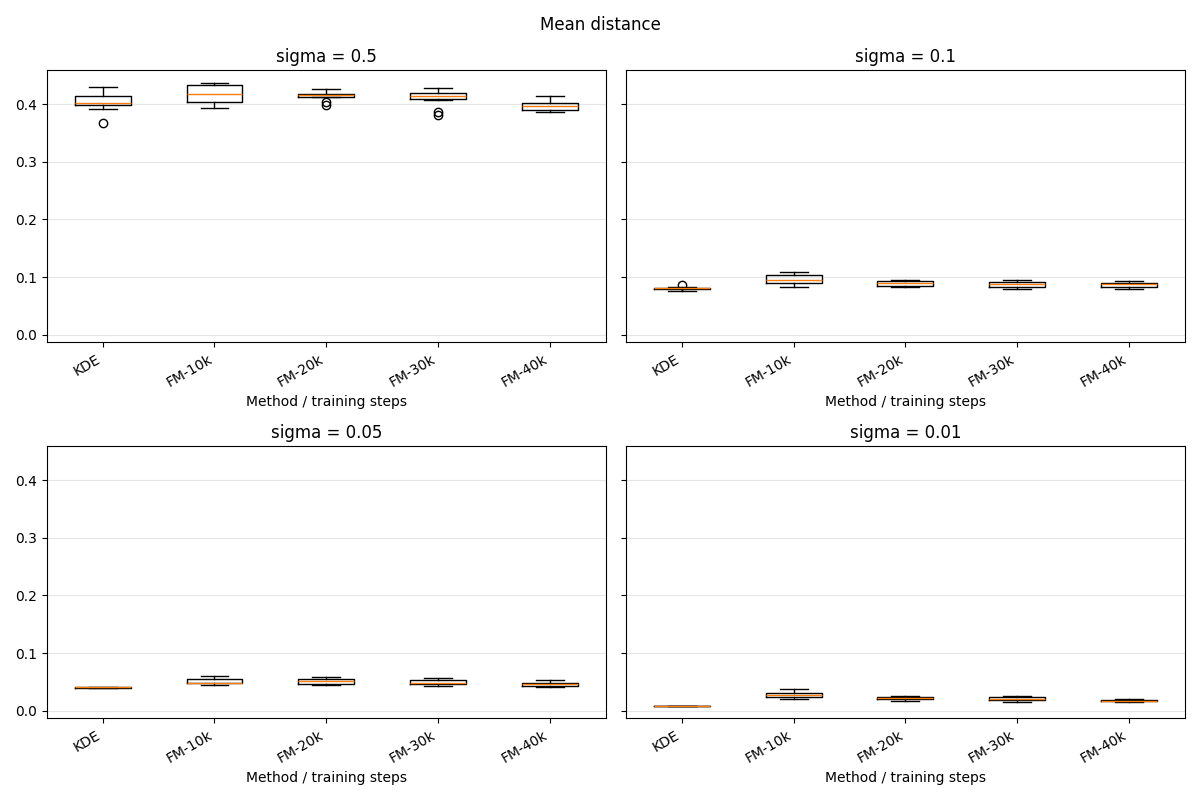}
    \caption{Mean distance of the generated samples to the manifold. Note that the shape of the sine curve leads to a smaller expected distance than $\sigma_{\min}$, this effect is smaller for smaller $\sigma_{\min}$.}
    \label{fig:mean_distance_100}
    \end{subfigure}
    \hfill
    \begin{subfigure}[b]{0.49\textwidth}
        \centering
    \includegraphics[width=\linewidth]{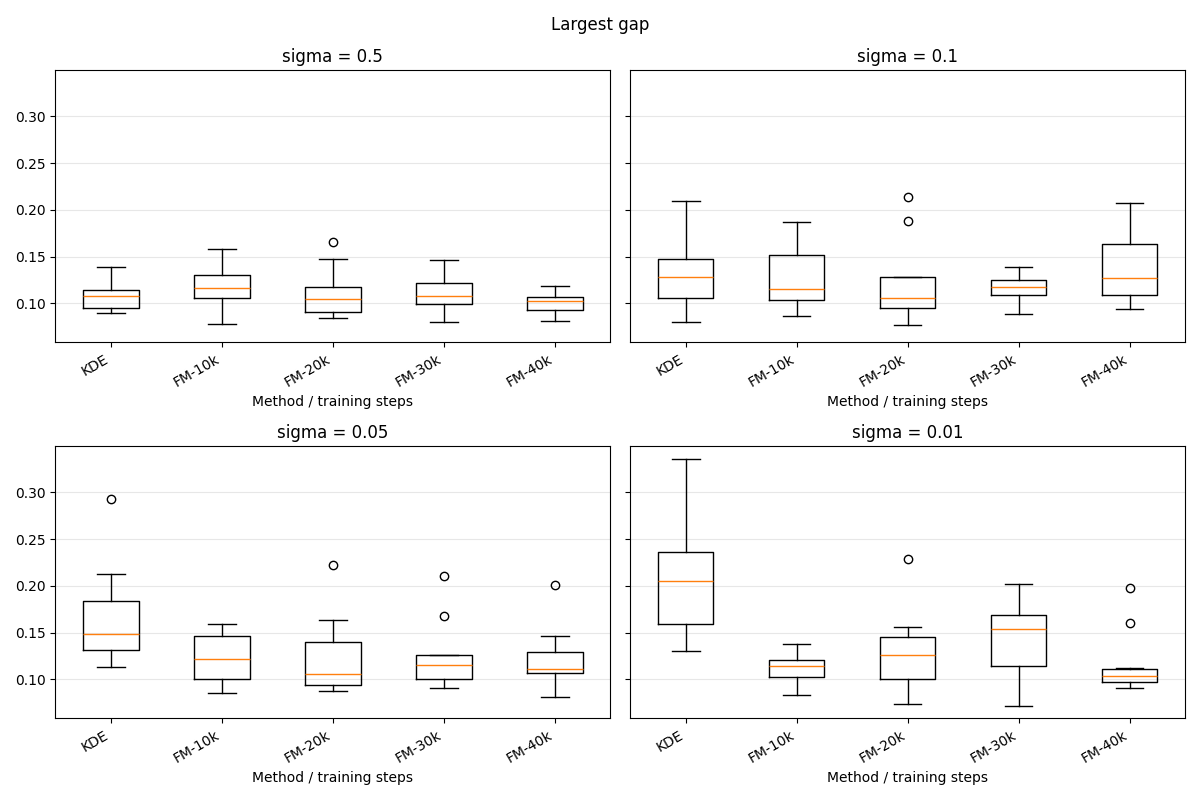}
    \caption{Largest gap on the manifold between the projections of the generated samples. \vspace{0.8cm}}
    \label{fig:largest_gap}
    \end{subfigure}
    
    \caption{Comparison between the Flow Matching estimator and the KDE in terms of distance to the manifold and the largest gap of the projection estimator on the manifold.}
    \label{fig:Estimators}
\end{figure}

Our results indicate that in order to theoretically find situations in which the Flow Matching algorithm outperforms the KDE in very general settings without structural assumptions on the unknown distribution, classical distances between probability measures might not the right choice. While creating an image that is different from the training images appears as creative and is favorable, the performance of the corresponding distribution in the Wasserstein $1$ distance is not necessarily good. In context of GANs, an inclusion of the distance between the generated distribution and the empirical measure has recently been investigated by \cite{vardanyan2024statistically}.

\section{Proofs}\label{sec:proofs}

Before we begin with the proofs, let us introduce some additional notation: The spectral norm of $A$ is denoted by $\|A\|_2.$ We define the ceiling of $x \in \mathbb{R}$ as $\lceil x\rceil:=\min \{k \in \mathbb{Z}| k \geq x\}.$
 
    For $\alpha \in (0,1]$, we define the Hölder space $\mathcal{H}^{\alpha}$ by
    \begin{equation}\label{def:hoelder}
    \mathcal{H}^{\alpha}(\Omega) \coloneqq \big\{ f \colon \Omega \rightarrow \mathbb{R}^{d} \colon \| f \|_{\mathcal{H}^{\alpha}(\Omega)} < \infty  \big\}, \quad \text{ where } \quad     \| f \|_{\mathcal{H}^{\alpha}(\Omega)} \coloneqq \sup_{x,y \in \Omega} \frac{|f(x)-f(y)|}{|x-y|^{\alpha}}.\end{equation}
For $\alpha = 1,$ we obtain the space of Lipschitz functions. We abbreviate
$
\operatorname{Lip}(f) = \| f \|_{\mathcal{H}^{1}(\Omega)}.
$
Again, in case of $\Omega = \mathbb{R}^{d},$ we omit $\Omega.$

   \subsection{Proofs of \Cref{sec:empirical_flow_matching}}

\begin{proof}[Proof of \Cref{thm:vectorfield_generates_ppath}]
	  A necessary and sufficient condition for $v_t$ to generate $p_t$ is given in \eqref{nec_and_suf_cond}. We thus verify
	\begin{align*}
		\frac{d}{d t} p^n_t(x) & = \frac{1}{n} \sum_{i = 1}^{n}\frac{d}{d t} p_t(x|X_i^*) =-\frac{1}{n} \sum_{i = 1}^{n}\operatorname{div}(p_t(x|X_i^*) v_t(x|X_i^*)) \\
		& =- \operatorname{div}\Big(\frac{1}{n}\sum_{i = 1}^{n} p_t(x|X_i^*) \sum_{i = 1}^{n} \frac{p_t(x|X_i^*)v_t(x|X_i^*)}{\sum_{i = 1}^{n} p_t(x|X_i^*)} \Big) = - \operatorname{div}(p^n_t(x)v^n_t(x)). \qedhere
	\end{align*}
\end{proof}

	\begin{proof}[Proof of \Cref{thm:cond_equiv}]
				For fixed $t \in [0,1]$ we have
		\begin{align*}
			|\hat{v}_t(x)-v^n_t(x)|^2 & =|\hat{v}_t(x)|^2-2\langle \hat{v}_t(x), v^n_t(x)\rangle+|v^n_t(x)|^2, \\
			|\hat{v}_t(x)-v_t(x | X_i^*)|^2 & =|\hat{v}_t(x)|^2-2\langle \hat{v}_t(x), v_t(x | X_i^*)\rangle+|v_t(x | X_i^*)|^2.
		\end{align*}
		The last term does not influence the minimal argument in $\hat{v}.$ For the first two we have
		\begin{align*}
			\mathbb{E}_{X_t \sim p^n_t}[|\hat{v}_t(X_t)|^2] =\int |\hat{v}_t(x)|^2p^n_t(x) \; \mathrm{d}x = \frac{1}{n} \sum_{i = 1}^{n} \int |\hat{v}_t(x)|^2 p_t(x|X_i^*) \; \mathrm{d}x = \frac{1}{n} \sum_{i = 1}^{n}  \mathbb{E}_{\tilde{X}_t \sim p_t(\cdot|X_i^*)} [|\hat{v}_t(\tilde{X}_t)|^2],
		\end{align*}
		and
		\begin{align*}
			\mathbb{E}_{X_t \sim p^n_t}[\langle \hat{v}_t(X_t), v^n_t(X_t)\rangle] & = \int \langle \hat{v}_t(x), v^n_t(x)\rangle   p^n_t(x) \; \mathrm{d}x\\
			& = \int \Big\langle \hat{v}_t(x), \sum_{i = 1}^{n} v_t(x|X_i^*)\frac{p_t(x|X_i^*)}{ \sum_{i = 1}^{n} p_t(x|X_i^*)	} \Big\rangle    \frac{1}{n} \sum_{i = 1}^{n} p_t(x|X_i^*) \; \mathrm{d}x\\
			& = \frac{1}{n}\sum_{i = 1}^{n} \int \langle \hat{v}_t(x), v_t(x|X_i^*)  \rangle p_t(x|X_i^*)\; \mathrm{d}x\\
			& =  \frac{1}{n} \sum_{i = 1}^{n}  \mathbb{E}_{\tilde{X}_t \sim p_t(\cdot|X_i^*)} [\langle \hat{v}_t(\tilde{X}_t), v_t(\tilde{X}_t | X_i^*)\rangle].\qedhere
		\end{align*}
	\end{proof}

 \begin{proof}[Proof of \Cref{thm:cond_vt_depends_on_mu_and_sigma}]    

	As $\sigma_t(X_i^*)>0$ for all $t,X_i^*,$ we have that
	\begin{equation*}
		\psi_t^{-1}(x|X_i^*)= \frac{x-\mu_t(X_i^*)}{\sigma_t(X_i^*)}.
	\end{equation*}
	Hence
	\begin{equation*}
			\frac{\partial \psi_t }{\partial t}(z | X_i^*)\Big|_{z = \psi_t^{-1}(x|X_i^*)} = v_t(x |  X_i^*), 
	\end{equation*}
	 and
	\begin{equation*}
			\frac{\partial \psi_t }{\partial t}(x| X_i^*) = \frac{\partial \sigma_t }{\partial t}(X_i^*)x+\frac{\partial \mu_t }{\partial t}(X_i^*).
	\end{equation*} 
	Thus, we get
	\begin{equation*}
			v_t\left(x  | X_i^*\right)=\frac{\frac{\partial\sigma_t }{\partial t}\left(X_i^*\right)}{\sigma_t\left(X_i^*\right)}\big(x-\mu_t\left(X_i^*\right)\big)+\frac{\partial\mu_t }{\partial t}\left(X_i^*\right) . \qedhere
	\end{equation*}
\end{proof}

    \subsection{Proofs and extended results of \Cref{sec:rate_of_convergence}}\label{sec:proofs_sec3}
    
    \begin{proof}[Proof of \Cref{ex:tv}]
	
    	For the Wasserstein $1$ distance, we can bound
    	\begin{align*}
    		\mathsf{W}_1(\mathbb{P}^{\psi_1^{(1)}(Z)}, \mathbb{P}^{\psi_2^{(1)}(Z)}) &\leq \mathbb{E}_{Z \sim \mathcal{N}(0,1)}[|\psi_1^{(1)}(Z)-\psi_1^{(2)}(Z)|]\\
    		& = \mathbb{E}_{Z \sim \mathcal{N}(0,1)}[|Z-\psi_1^{(2)}(Z)|]\\
    		& = \mathbb{E}_{Z \sim \mathcal{N}(0,1)}\Big[\Big|\int_0^1 v_t^{(2)}(\psi^{(2)}_t(Z))\; \mathrm{d}t\Big|\Big]\\
    		& = \mathbb{E}_{Z \sim \mathcal{N}(0,1)}[|\varepsilon \sin\Big(\frac{\psi^{(2)}_t(Z)}{\varepsilon} \Big)|]\\
    		& \leq \varepsilon.
    	\end{align*}
    	For the total variation distance, we first note that by the change of variables theorem, we have for every element of a sequence of diffeomorophisms $(f_k)_{k \in \mathbb{N}}$
    	\begin{equation*}
    		f_k(Z) \sim \frac{\varphi(f_k^{-1}(\cdot))}{|f_k^{\prime}(f_k^{-1}(\cdot))|}, 
    	\end{equation*}
    	where $\varphi$ is the density of $\mathcal{N}(0,1)$ and where we identified the density with the distribution. By definition of $v^{(1)}$, we have that $\psi^{(1)}_t(Z) \sim \varphi$.
    	In case $f_k^{\prime}(x)> 0$ for all $x \in \mathbb{R}$, we obtain for the total variation distance
    	\begin{align}
    		\operatorname{TV}(\mathcal{N}(0,1), \mathbb{P}^{f_k(Z)}) & = \frac{1}{2} \int \Big|\varphi(x) - \frac{\varphi(f_k^{-1}(\cdot))}{|f^{\prime}(f_k^{-1}(\cdot))|}\Big|\; \mathrm{d}x \notag \\
    		& = \frac{1}{2} \int |\varphi(f_k(x)) f_k^{\prime}(x)- \varphi(x)| \; \mathrm{d}x \notag\\
    		& = \frac{1}{2} \int |\varphi(x)(f_k^{\prime}(x) -1) + f_k^{\prime}(x)(\varphi(f_k(x)) -\varphi(x)) |\; \mathrm{d}x \notag\\
    		& \geq \frac{1}{2} \int |\varphi(x)(f_k^{\prime}(x) -1)|\; \mathrm{d}x - \frac{1}{2} \int | f_k^{\prime}(x)(\varphi(f_k(x)) -\varphi(x)) |\; \mathrm{d}x. \label{TV_bound}
    	\end{align}
    	In case $\sup_{k \in \mathbb{N}}\|f_k^{\prime}\|_{\infty} \leq C < \infty$ and $\|f_k - \operatorname{Id}\|_{\infty}\rightarrow 0$, we have that for $k$ large enough
        \begin{equation*}
            |f_k(x)-x|\leq 1 \quad \Longrightarrow \quad |f_k(x)|\geq |x|-1,
        \end{equation*}
        which implies
        \begin{equation*}
           | f_k^{\prime}(x)(\varphi(f_k(x)) -\varphi(x)) | \leq C \Big(\frac{1}{\sqrt{2 \pi}} e^{-\frac{(|x|-1)^2}{2}} + \varphi(x)\Big).
        \end{equation*}
        Since the right hand side of the above equation is integrable, we can use dominated convergence and the continuity of $\varphi$ to conclude that the second term in \eqref{TV_bound} converges towards $0$ for $k \rightarrow \infty $. However, for the first term, we also need that $f^{\prime} \rightarrow (\operatorname{Id})^{\prime}=1$. In the Flow Matching setting, this corresponds to the derivative of $\psi^{(2)}_1$ with respect to $x$ converging to $1$, which is not controlled via the Flow Matching objective.
    	
    	In the given example, we can solve the corresponding ODEs and obtain
    	\begin{align*}
    		\frac{\partial \psi^{(1)}_t(x)}{\partial t} = 0, \quad  &\psi^{(1)}_0(x) = x,  &\Longrightarrow \quad  &\psi^{(1)}_t(x) = x\\
    		\frac{\partial \psi^{(2)}_t(x)}{\partial t} = \varepsilon\sin\Big(\frac{ \psi^{(2)}_t(x)}{\varepsilon}\Big), \quad  &\psi^{(2)}_0(x) = x, &\Longrightarrow \quad  &\psi^{(2)}_t(x) = 2 \varepsilon \arctan \Big(e^{t} \tan \Big(\frac{x}{2\varepsilon}\Big)\Big) + 2 \pi \varepsilon \Big\lfloor \frac{x}{2 \pi \varepsilon} + \frac{1}{2} \Big\rfloor,
    	\end{align*}
    	where $\psi^{(2)}_t$ is defined for almost all $x\in \mathbb{R}$. Note that $\big\lfloor \frac{x}{2 \pi \varepsilon} + \frac{1}{2} \big\rfloor$ is chosen such that we obtain a continuous, differentiable and bijective function on $\mathbb{R}$. 
    	The solution of $\psi^{(2)}$ can be checked calculating the derivative with respect to $t$ and using $\sin(2 \theta) = \frac{2 \tan(\theta)}{1+\tan^2(\theta)}$.
    	The derivative with respect to $x$ of $\psi_t^{(2)}$ is given by
    	\begin{equation*}
    		\frac{\partial\psi^{(2)}_t}{\partial x} = \frac{e^t \sec ^2\big(\frac{x}{2 \varepsilon}\big)}{e^{2 t} \tan ^2\big(\frac{x}{2 \varepsilon}\big)+1} = \frac{e^t}{\big(e^{2 t}-1\big) \sin ^2\big(\frac{x}{2 \varepsilon}\big)+1} > 0.
    	\end{equation*}
        Since $t \in [0,1]$, the derivative is bounded.
        Therefore, we can use the lower bound from \eqref{TV_bound}. By construction $\psi_1^{(2)}(x) \rightarrow x$ for $\varepsilon \rightarrow 0$, thus the second term in \eqref{TV_bound} converges towards $0$. 
    For the second term
    	\begin{align*}
    		\int \Big|&\varphi(x)\Big(\frac{e}{\big(e^{2 }-1\big) \sin ^2\big(\frac{x}{2 \varepsilon}\big)+1} -1\Big)\Big|\; \mathrm{d}x  \\ & \geq \frac{1}{\sqrt{2\pi e}}\int_{-1}^1 \Big|\Big(\frac{e}{\big(e^{2 }-1\big) \sin ^2\big(\frac{x}{2 \varepsilon}\big)+1} -1\Big)\Big|\; \mathrm{d}x \\
    		& \geq \frac{1}{2\sqrt{2\pi e}}\int_{-1}^1 \mathds{1}\Big\{\frac{e}{\big(e^{2 }-1\big) \sin ^2\big(\frac{x}{2 \varepsilon}\big)+1} \leq \frac{1}{2}\Big\}\; \mathrm{d}x\\
    		& = \frac{1}{2\sqrt{2\pi e}}\int_{-1}^1 \mathds{1}\Big\{x \in \bigcup_{n \in \mathbb{Z}} \Big[2 \varepsilon \Big( \pi n+ \arcsin\Big( \sqrt{\frac{2e-1}{e^2-1}}\Big)\Big), 2 \varepsilon \Big( \pi (n+1)- \arcsin\Big( \sqrt{\frac{2e-1}{e^2-1}}\Big)\Big) \Big]\Big\}\; \mathrm{d}x.
    	\end{align*}
    	Denoting $\gamma \coloneqq \arcsin\Big( \sqrt{\frac{2e-1}{e^2-1}}\Big) $, the length of the intervals in the bound is
    	$
    		2 \varepsilon ((n+1)\pi -\gamma) - 2 \varepsilon (n \pi +\gamma) = 2 \varepsilon (\pi -2 \gamma).
    	$
    	Next we count how many intervals are lie completely in $[-1,1]$. We have that
    	\begin{equation*}
    		2 \varepsilon (\pi n + \gamma )\geq -1 \Longleftrightarrow n \geq -\frac{1}{2\varepsilon\pi}- \frac{\gamma}{\pi}, \quad 2 \varepsilon (\pi (n+1) - \gamma )\leq 1\Longleftrightarrow n  \leq \frac{1}{2 \varepsilon \pi}+ \frac{\gamma}{\pi}-1.
    	\end{equation*}
    	Therefore, we have 
    	\begin{equation*}
    		\Big\lfloor \frac{1}{2 \varepsilon \pi}+ \frac{\gamma}{\pi}-1\Big\rfloor - \Big\lceil -\frac{1}{2\varepsilon\pi}- \frac{\gamma}{\pi} \Big\rceil
    	\end{equation*}
    	intervals that lie completely in $[-1,1]$. We bound this number via
    	\begin{equation*}
    		\Big\lfloor \frac{1}{2 \varepsilon \pi}+ \frac{\gamma}{\pi}-1\Big\rfloor - \Big\lceil -\frac{1}{2\varepsilon\pi}- \frac{\gamma}{\pi} \Big\rceil \geq  \frac{1}{2 \varepsilon \pi}+ \frac{\gamma}{\pi}-2 - (l -\frac{1}{2\varepsilon\pi}- \frac{\gamma}{\pi} +1)= \frac{1}{\varepsilon\pi}+ \frac{\gamma}{\pi}-3.
    	\end{equation*}
    	Thus, we can bound
    	\begin{align*}
    		\int \Big|\varphi(x)\Big(\frac{e}{\big(e^{2 }-1\big) \sin ^2\big(\frac{x}{2 \varepsilon}\big)+1} -1\Big)\Big|\; \mathrm{d}x &\geq \frac{1}{2\sqrt{2\pi e}} 2 \varepsilon (\pi -2 \gamma)\big(\frac{1}{\varepsilon\pi}+ \frac{\gamma}{\pi}-3\big)\\&=  \frac{1}{2\sqrt{2\pi e}} \Big(\frac{2}{\pi}(\pi -2 \gamma)+ 2 \varepsilon \gamma \frac{\pi - 2 \gamma}{\pi}- 6 \varepsilon (\pi - 2 \gamma) \Big)\\
    		& \overset{\varepsilon \rightarrow 0}{\longrightarrow} \frac{\pi - 2 \gamma}{\pi \sqrt{2 \pi e}}.
    	\end{align*}
    	Setting $0<c < \frac{\pi - 2 \gamma}{\pi \sqrt{2 \pi e}} $ yields the result.
    \end{proof}
\begin{proof}[Proof of \Cref{thm:error_decomp}]

	As $\mathsf{W}_1$ and $\mathsf{W}_2$ satisfy the triangle inequality (see \citet[p. 94]{Villani2008}), 
	\begin{align*}
		\mathsf{W}_q(\mathbb{P}^*, \mathbb{P}^{\hat{\psi}_1(Z)}) \leq \mathsf{W}_q(\mathbb{P}^*, \mathbb{P}^{\psi^n_1(Z)}) + \mathsf{W}_q( \mathbb{P}^{\psi^n_1(Z)}, \mathbb{P}^{\hat{\psi}_1(Z)}).
	\end{align*}
	Using \cite[Remark 6.5]{Villani2008} we get
	\begin{align*}
		\mathsf{W}_1( \mathbb{P}^{\psi^n_1(Z)}, \mathbb{P}^{\hat{\psi}_1(Z)}) & \leq \mathsf{W}_2( \mathbb{P}^{\psi^n_1(Z)}, \mathbb{P}^{\hat{\psi}_1(Z)})\\ & \leq \sqrt{\mathbb{E}[|\psi^n_1(Z) - \hat{\psi}_1(Z)|^2]} \\ & = \Big( \int |\psi^n_1(x) - \hat{\psi}_1(x) |^2 p_0(x) \; \mathrm{d}x\Big)^{1/2}.
	\end{align*}
	Like in \cite[Proposition 3]{albergo2022} set \begin{equation*}
		Q_t = \int  |\psi^n_t(x) - \hat{\psi}_t(x) |^2 p_0(x) \; \mathrm{d}x.
	\end{equation*}
	Since $p^*, \psi,$ $\hat{\psi}$ and $v_t^n$ are integrable with respect to the Lebesgue measure, we can use the dominated convergence theorem and interchange integration and differentiation to obtain
	\begin{align*}
		\frac{d}{d t}Q_t &= 2 \int (\psi^n_t(x) - \hat{\psi}_t(x) )^{\top}(v^n_t(\psi^n_t(x))-\hat{v}_t(\hat{\psi}_t(x))) p_0(x) \; \mathrm{d}x \\
		& = 2 \int (\psi^n_t(x) - \hat{\psi}_t(x) )^{\top}(v^n_t(\psi^n_t(x))-\hat{v}_t(\psi^n_t(x))) p_0(x) \; \mathrm{d}x \\
		& \quad + 2 \int (\psi^n_t(x) - \hat{\psi}_t(x) )^{\top}(\hat{v}_t(\psi^n_t(x))-\hat{v}_t(\hat{\psi}_t(x))) p_0(x) \; \mathrm{d}x. 
	\end{align*}
	As $0 \leq | \psi^n_t(x) - \hat{\psi}_t(x)  - (v^n_t(\psi^n_t(x))-\hat{v}_t(\psi^n_t(x)))|^{2}, $ we get
	\begin{equation*}
		2(\psi^n_t(x) - \hat{\psi}_t(x) )^{\top}(v^n_t(\psi^n_t(x))-\hat{v}_t(\psi^n_t(x))) \leq |\psi^n_t(x) - \hat{\psi}_t(x)  |^2 + | v^n_t(\psi^n_t(x))-\hat{v}_t(\psi^n_t(x))|^2,
	\end{equation*}
	and since for fixed $t$ the vector field $\hat{v}_t$ is $\Gamma_t$ Lipschitz continuous
	\begin{equation*}
		2 (\psi^n_t(x) - \hat{\psi}_t(x) )^{\top}(\hat{v}_t(\psi^n_t(x))-\hat{v}_t(\hat{\psi}_t(x))) \leq 2 \Gamma_t |\psi^n_t(x) - \hat{\psi}_t(x)|^2.
	\end{equation*}
	Therefore 
	\begin{equation*}
		\frac{d}{d t}Q_t \leq (1+2\Gamma_t) Q_t + 2\int | v^n_t(\psi^n_t(x))-\hat{v}_t(\psi^n_t(x))|^2 p_0(x) \; \mathrm{d}x.
	\end{equation*}
	Now we use the formulation of Grönwall's lemma in \cite{Walter1970} and use $Q_0 = 0$ (this holds since $\psi_0 (x) = \hat{\psi}_0 (x) = x$). This leads to 
	\begin{equation}\label{Gronwall_bound}
		Q_1 \leq e^{\int_{0}^1 1+2\Gamma_t\; \mathrm{d}t}2\int_{0}^{1} \int | v^n_t(\psi^n_t(x))-\hat{v}_t(\psi^n_t(x))|^2 p_0(x) \; \mathrm{d}x\; \mathrm{d}t.
	\end{equation}
	By \Cref{thm:cond_equiv} the vector field $\hat{v}$ is chosen such that it minimizes $ \int_{0}^{1} \mathbb{E}_{X_t \sim p_t}\big[|\hat{v}_t(X_t)- v^n_t(X_t)|^2\big]\; \mathrm{d}t$ and $X_t = \psi_t(Z),$ thus we get for every $\tilde{v} \in \mathcal{N}$
	\begin{align*}
		Q_1 &\leq 2e^{\int_{0}^1 1+2\Gamma_t\; \mathrm{d}t}\int_{0}^{1} \int | v^n_t(\psi^n_t(x))-\tilde{v}_t(\psi^n_t(x))|^2 p_0(x) \; \mathrm{d}x\; \mathrm{d}t\\
		& \leq  2e^{\int_{0}^1 1+2\Gamma_t\; \mathrm{d}t}\int_{0}^{1}  \|v^n_t - \tilde{v}_t \|^2_{\infty}p_0(x)   \; \mathrm{d}t,\\
		& \leq 2 e^{\int_{0}^1 1+2\Gamma_t\; \mathrm{d}t} \|v^n - \tilde{v} \|^2_{\infty}.
	\end{align*}
	Taking the square root leads to the result.
\end{proof}
\begin{proof}[Proof of \Cref{thm:kde_rate_smooth}]
	 
	Set $\mu_n \coloneqq \frac{1}{n} \sum_{i = 1}^{n} \delta_{X_i}.$ We denote $K_{\sigma_{\min}}\coloneqq \frac{1}{\sigma_{\min}^d} K(\frac{\cdot}{\sigma_{\min}}).$
	By the triangle inequality of the supremum we get
	\begin{align*}
		\mathsf{W}_1(\mathbb{P}^*, \mathbb{P}^{\psi^n_1(Z)}) &= \sup_{f \in \mathrm{Lip}(1)}  \int f(z)(p^*(z) - (K_{\sigma_{\min}} \ast \mu_n)(z)) \; \mathrm{d}z \notag \\
		& \leq \sup_{f \in \mathrm{Lip}(1)}  \int f(z)(p^*(z) - (K_{\sigma_{\min}} \ast p^*)(z)) \; \mathrm{d}z\notag \\ & \quad +\sup_{f \in \mathrm{Lip}(1)}  \int f(z)((K_{\sigma_{\min}} \ast p^*)(z)- (K_{\sigma_{\min}} \ast \mu_n)(z)) \; \mathrm{d}z. 
	\end{align*}
	The first term is the bias of the kernel density estimator and the second term refers to the stochastic error. \\
	\underline{First term:}
	Using Fubini's theorem and $\int K_{\sigma_{\min}}(y) \; \mathrm{d}y = 1$, we rewrite the first term as
	\begin{align*}
		\sup_{f \in \mathrm{Lip}(1)}  \int f(z)(p^*(z) &- (K_{\sigma_{\min}} \ast p^*)(z)) \; \mathrm{d}z \\& = \sup_{f \in \mathrm{Lip}(1)}  \int f(z)\Big(\int K_{\sigma_{\min}}(y) p^*(z) \; \mathrm{d}y- \int K_{\sigma_{\min}}(y)p^*(z-y)\; \mathrm{d}y\Big) \; \mathrm{d}z \\
		& =  \sup_{f \in \mathrm{Lip}(1)}  \int  K_{\sigma_{\min}}(y) \Big( \int f(z) p^*(z)\; \mathrm{d}z  - \int f(z) p^*(z-y)\; \mathrm{d}z \Big)\; \mathrm{d}y \\
		& = \sup_{f \in \mathrm{Lip}(1)}  \int  K_{\sigma_{\min}}(y) \big((f\ast p^*(-\cdot))(0)  - (f\ast p^*(-\cdot))(y) \big)\; \mathrm{d}y. 
	\end{align*}
	Then, since $|p^*|_{B^{\alpha}_{1, \infty}} \leq M,$ we have that
	\begin{align*}
		|\nabla (f \ast p^*(-\cdot ))(x) - \nabla (f \ast p^*(-\cdot ))(y)|& =  \Big|\int \nabla f(z) p^*(x-z) \; \mathrm{d}z - \int \nabla f(z) p^*(y-z) \; \mathrm{d}z\Big|\\
		& \leq \sqrt{d}M |x-y|^{\alpha},
	\end{align*}
	where $\nabla$ denotes the vector of weak partial derivatives and we use that $|\nabla f| \leq \sqrt{d}$ for $f \in \operatorname{Lip}(1)$. \\
	Therefore by the generalized mean value theorem, for every $y$ there exists a $\xi_y$ and a $C_1^{\prime}>0$ such that
	\begin{align*}
		\sup_{f \in \mathrm{Lip}(1)}  \int & K_{\sigma_{\min}}(y) \Big( \int f(z) p^*(z)\; \mathrm{d}z  - \int f(z) p^*(z-y)\; \mathrm{d}z \Big)\; \mathrm{d}y  \\& =   \sup_{f \in \mathrm{Lip}(1)}  \int  K_{\sigma_{\min}}(y) (\nabla(f \ast p^*(-\cdot))(\xi_y))^{\top}y\; \mathrm{d}y\\
		& =  \sup_{f \in \mathrm{Lip}(1)}  \int  K_{\sigma_{\min}}(y) (\nabla(f \ast p^*(-\cdot))(\xi_y) -\nabla(f \ast p^*(-\cdot))(0) )^{\top} y\; \mathrm{d}y  \\ & \quad + \int  K_{\sigma_{\min}}(y) (\nabla(f \ast p^*(-\cdot))(0))^{\top} y \; \mathrm{d}y   \\
		& \leq   \sup_{f \in \mathrm{Lip}(1)}  \int  |K_{\sigma_{\min}}(y)| |\nabla(f \ast p^*(-\cdot))(\xi_y) -\nabla(f \ast p^*(-\cdot))(0) | |y|\; \mathrm{d}y \\ & \quad  + \int  K_{\sigma_{\min}}(y) (f \ast p^*(-\cdot))(0))^{\top} y \; \mathrm{d}y   \\
		& \leq  C_1^{\prime} \int  |K_{\sigma_{\min}}(y)||y|^{1+\alpha}  \; \mathrm{d}y  +  \int  K_{\sigma_{\min}}(y) (\nabla(f \ast p^*(-\cdot))(0))^{\top} y \; \mathrm{d}y. 
	\end{align*}
	For the first term we have that
	\begin{align*}
		\int  K_{\sigma_{\min}}(y)|y|^{1+\alpha}  \; \mathrm{d}y & = \frac{1}{\sigma_{\min}^d} \int  K(\frac{y}{\sigma_{\min}})|y|^{1+\alpha}  \; \mathrm{d}y\\
		& = \sigma_{\min}^{1+\alpha} \int  K(u)|u|^{1+\alpha}  \; \mathrm{d}u.
	\end{align*}
	For the last term we use the assumption $\int  K_{\sigma_{\min}}(y)y \; \mathrm{d}y = 0 .$ \\
	Hence we get
	\begin{align*}
		\sup_{f \in \mathrm{Lip}(1)}  \int f(z)(p^*(z) - (K_{\sigma_{\min}} \ast p^*)(z)) \; \mathrm{d}z \leq   \sigma_{\min}^{1+\alpha}C_1^{\prime} \int  K(u)|u|^{1+\alpha}  \; \mathrm{d}u.
	\end{align*}
	By assumption the last term is finite. Setting $C_1 \coloneqq C_1^{\prime} \int  K(u)|u|^{1+\alpha}  \; \mathrm{d}u$ yields the first term in the final bound. \\
	\underline{Second term:}	
	We rewrite the second term as
	\begin{align*}
		\sup_{f \in \mathrm{Lip}(1)}  \int f(z)((K_{\sigma_{\min}} \ast p^*)(z)&- (K_{\sigma_{\min}} \ast \mu_n)(z)) \; \mathrm{d}z \\
		& = 	\sup_{f \in \mathrm{Lip}(1)}  \int \int f(z) K_{\sigma_{\min}}(z-x)(p^*(x)- \mu_n(x))  \; \mathrm{d}x \; \mathrm{d}z\\
		& = 	\sup_{f \in \mathrm{Lip}(1)}\int (K_{\sigma_{\min}}(- \cdot) \ast f) (x) p^*(x)  \; \mathrm{d}x - \int (K_{\sigma_{\min}}(- \cdot) \ast f) (x) \mu_n(x)  \; \mathrm{d}x \\
		& = 	\sup_{\substack{g = K_{\sigma_{\min}}(- \cdot) \ast f\\f \in \mathrm{Lip}(1) }} \int g(x) (p^* (x)- \mu_n(x)) \; \mathrm{d}x.
	\end{align*}
	For any multi index $k \in \mathbb{N}^d_0$ with $|k|\geq 1$, choose an index $j$ with $k_j \geq 1$ and set $k = \ell + e_j$ where $|k| = |\ell|+1$. Then using Young's inequality and the properties of convolution, we obtain

	\begin{equation*}
		\|D^{k}	(K_{\sigma_{\min}} \ast f)\|_{\infty} \leq \|D^{e_j} f\|_{\infty} \|D^{\ell}K_{\sigma_{\min}(- \cdot)} \|_1 \leq 1 \cdot \|D^{\ell}K_{\sigma_{\min}} \|_1.
	\end{equation*}
	By assumption $\|D^{\ell}K \|_1 \leq C $ and hence  $\|D^{\ell}K_{\sigma_{\min}} \|_1 \leq C \sigma_{\min}^{-|k|+1}. $
    Thus
    \begin{equation}\label{eq:bound_convolution}
        \|D^{k}	(K_{\sigma_{\min}} \ast f)\|_{\infty} \lesssim \sigma_{\min}^{-|k|+1}.
    \end{equation}
    Now we can use \citet[Theorem 4]{Schreuder2020_2}. For even $d,$ set the smoothness parameter to $ \frac{d}{2}.$ Then there exists a constant $C^{\prime}_2 > 0$ such that
	\begin{align*}
		\sup_{\substack{g = K_{\sigma_{\min}} \ast f\\f \in \mathrm{Lip}(1) }} \int g(x) (p^* (x)- \mu_n(x)) \; \mathrm{d}x \leq \frac{C^{\prime}_2}{\sigma_{\min}^{\frac{d}{2}-1}} n^{-1/2}\log(n).
	\end{align*}
	For odd $d>1,$ we know that
	\begin{align*}
		\|D^{\frac{d-1}{2}}(f \ast K_{\sigma_{\min}})\|_{\infty} \leq \| D^1 f\|_{\infty}\|D^{\frac{d-3}{2}}K_{\sigma_{\min}} \|_1\leq C^{\prime}_2 \sigma_{\min}^{-\frac{d-3}{2}},
	\end{align*} and
	\begin{align*}
		\frac{D^{\frac{d-1}{2}}(f \ast K_{\sigma_{\min}})(x) - D^{\frac{d-1}{2}}(f \ast K_{\sigma_{\min}})(y) }{|x-y|^{1/2}} \leq 2 C^{\prime}_2 \sigma_{\min}^{-\frac{d-3}{2}} \leq  2 C^{\prime}_2 \sigma_{\min}^{-\frac{d}{2}+1}.
	\end{align*}
	Then we can use \citet[Theorem 4]{Schreuder2020_2} with $\alpha = \frac{d}{2}$ again. Therefore, there is a $C^{\prime\prime}_{2}>0$ such that
	\begin{align*}
		\sup_{\substack{g = K_{\sigma_{\min}} \ast f\\f \in \mathrm{Lip}(1) }} \int g(x) (p^* (x)- \mu_n(x)) \; \mathrm{d}x \leq \frac{C^{\prime\prime}_{2}}{\sigma_{\min}^{\frac{d}{2}-1}} n^{-1/2}\log(n).
	\end{align*}
	Combining both terms we conclude that there are constants $C_1$ and $C_2$ with
	\begin{equation*}
		\mathsf{W}_1(\mathbb{P}^*, \mathbb{P}^{\psi^n_1(Z)})\leq C_1\sigma_{\min}^{1+\alpha} + \frac{C_2}{\sigma_{\min}^{\frac{d}{2}-1}} n^{-1/2}\log(n). \qedhere
	\end{equation*}
\end{proof}

\paragraph{Proof of \Cref{thm:gaussian_kernel_optimal_rate}}
The proof of \Cref{thm:gaussian_kernel_optimal_rate} requires some additional results. The proof of the subsidiary lemmata can be found in \Cref{sec:helper}.
In order to control the error $e^{\int_0^1 \Gamma_t \; \mathrm{d}t}\|v^n-\hat{v}\|_{\infty}$ (where we omitted constants) we need to approximate the function $v^n$ and 
	control the Lipschitz constant of the approximation $\hat{v}_t$ for fixed $t.$ 
	However, we cannot expect to approximate a non-trivial function on $\mathbb{R}^d$ with respect to the supremum norm. Additionally, $v^n_t$ as constructed by \cite{lipman2023} is only locally Lipschitz in $x$ for fixed $t$ and the bound of the Lipschitz constant grows for $\sigma_{\min} \rightarrow 0$ (see \Cref{lem:v_tv_tn_supnorm_bound}). Hence we cannot expect to construct a network $\hat{v}$ that approximates $v^n$ and its Lipschitz constant on $\mathbb{R}^d$ with a small absolute error for every $x \in \mathbb{R}.$ 
    
    In the given setting, a rapid decay of the kernel function results in less significant approximation errors outside of the support of $p^*.$ We first show that in case of rapidly decreasing kernels the error $\mathsf{W}_1(\mathbb{P}^{\psi(Z)}, \mathbb{P}^{\hat{\psi}(Z)})$ can  be decomposed into two approximation errors, one depending on the approximation of $v$ on a compact domain and one corresponding to the area outside with little probability mass.

		\begin{lemma} \label{thm:tails}Assume that $\operatorname{supp}(p^*) \subset (-1,1)^d$ and $\sigma_{\min} \leq 1.$ Additionally, assume that $K(x) = \prod_{i = 1}^{d}\varphi(x_i),$ where $\varphi$ is a one dimensional symmetric density that decays faster than $x \mapsto e^{-\lambda x}$ for $|x| >2$ and some $\lambda > 0$.  
		Then for every $a > 1$ there is a network $\tilde{v},$ which is cutoff outside $(-a,a)^d$ such that
		\begin{equation*} 
			\mathsf{W}_1(\mathbb{P}^{\psi_1^n(Z)}, \mathbb{P}^{\hat{\psi}_1(Z)}) \leq \sqrt{2 e} e^{\int_{0}^1 \Gamma_t\; \mathrm{d}t} \Big(
			\int_{0}^{1} \int_{(-a,a)^d} | v^n_t(\psi^n_t(x))-\tilde{v}_t(\psi^n_t(x))|^2 p_0(x) \; \mathrm{d}x\; \mathrm{d}t +  c \frac{a^{2d+2}e^{-da}}{\sigma_{\min}^2}\Big)^{1/2}.
		\end{equation*}
	\end{lemma}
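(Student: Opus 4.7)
The plan is to extend the Grönwall-based argument from Theorem~\ref{thm:error_decomp} by splitting the resulting error integral into a bulk contribution on $(-a,a)^d$ and a tail contribution on its complement, and then to exploit the exponential decay of $p_0=K$ together with a polynomial growth bound on $v^n_t$ to absorb the tail.

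First I would repeat the derivation in the proof of Theorem~\ref{thm:error_decomp} verbatim to obtain, for every $\tilde v\in\mathcal{M}$,
\begin{equation*}
\mathbb E\bigl[|\psi^n_1(Z)-\hat\psi_1(Z)|^2\bigr] \le 2 e^{\int_0^1 1 + 2\Gamma_t\,\mathrm{d}t} \int_0^1 \int_{\mathbb{R}^d} |v^n_t(\psi^n_t(x)) - \tilde v_t(\psi^n_t(x))|^2 p_0(x)\,\mathrm{d}x\,\mathrm{d}t,
\end{equation*}
using optimality of $\hat v$ (Theorem~\ref{thm:cond_equiv}). Splitting $\mathbb{R}^d = (-a,a)^d \sqcup (\mathbb{R}^d\setminus(-a,a)^d)$ under the inner integral yields directly the bulk term in the claim, while the Wasserstein-comparison $\mathsf{W}_1 \le \mathsf{W}_2 \le \sqrt{\mathbb E[|\psi_1^n(Z)-\hat\psi_1(Z)|^2]}$ combined with the identity $2 e^{\int_0^1 (1+2\Gamma_t)\,\mathrm{d}t} = 2e\cdot e^{\int_0^1 2\Gamma_t\,\mathrm{d}t}$ reproduces the prefactor $\sqrt{2e}\,e^{\int_0^1 \Gamma_t\,\mathrm{d}t}$.

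To control the tail I would take $\tilde v$ to be the approximating network pre-composed with a smooth cutoff so that $\tilde v_t(y)=0$ whenever $y\notin(-a,a)^d$; the cutoff is realizable by a shallow ReLU subnetwork and can be absorbed into the existing Lipschitz budget $\Gamma_t$. On the tail, the integrand then reduces to $|v^n_t(\psi^n_t(x))|^2 p_0(x)$. Using Lemma~\ref{thm:cond_vt_depends_on_mu_and_sigma} together with Assumption~\ref{ass:sigma+mu}, each conditional field $v_t(\cdot|X_i^*)$ is affine with slope bounded by $(1-\sigma_{\min})/\sigma_t \le 1/\sigma_{\min}$, so the weighted average $v^n_t$ obeys $|v^n_t(y)|^2 \lesssim (1+|y|^2)/\sigma_{\min}^2$; a further Grönwall estimate applied to $\partial_t\psi^n_t = v^n_t(\psi^n_t)$ yields $|\psi^n_t(x)| \lesssim 1+|x|$ uniformly for $t\in[0,1]$.

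The remaining step is to bound $\int_{\mathbb{R}^d\setminus(-a,a)^d}(1+|x|^2)K(x)\,\mathrm{d}x$ by $a^{2d+2}e^{-d a}$ up to an absolute constant. Since $K(x)=\prod_{i=1}^d\varphi(x_i)$ with $\varphi$ decaying at least as fast as $e^{-\lambda|t|}$ for $|t|>2$, one factorizes the integral over the complement of a $d$-dimensional cube and combines polynomial-moment estimates against an exponentially decaying density with the product structure of $K$ across all $d$ coordinates. Dividing by $\sigma_{\min}^2$ and integrating over $t\in[0,1]$ then yields the desired tail term $c\,a^{2d+2}e^{-d a}/\sigma_{\min}^2$. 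I expect this last estimate to be the principal obstacle: recovering the precise $a^{2d+2}e^{-d a}$ scaling requires exploiting the full product structure of $K$ rather than a single-coordinate union bound, and a secondary subtlety is that the smooth cutoff layer must not inflate $\Gamma_t$ beyond the value already assumed.
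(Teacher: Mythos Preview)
Your overall strategy matches the paper's: start from the Gr\"onwall bound of Theorem~\ref{thm:error_decomp}, split the spatial integral at $(-a,a)^d$, and control the tail via a polynomial growth bound on $v^n_t$ against the exponential decay of $p_0$. Two points deserve attention.

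\medskip
\textbf{The uniform bound on $|\psi^n_t(x)|$ is the crux, and your justification is incomplete.} You write that the magnitude estimate $|v^n_t(y)|^2\lesssim(1+|y|^2)/\sigma_{\min}^2$ together with ``a further Gr\"onwall estimate'' yields $|\psi^n_t(x)|\lesssim 1+|x|$ uniformly in $t\in[0,1]$. But a Gr\"onwall argument based solely on that magnitude bound gives
\[
1+|\psi^n_t(x)|\le (1+|x|)\exp\Big(\int_0^t \tfrac{C}{\sigma_s}\,\mathrm{d}s\Big)\lesssim (1+|x|)\,\sigma_{\min}^{-C},
\]
which is \emph{not} uniform in $\sigma_{\min}$ and would contaminate the tail term with an extra power of $\sigma_{\min}^{-1}$. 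The paper obtains the sharper inequality $|\psi^n_t(x)|\le|x|$ by a geometric argument: for $x$ outside $(-1,1)^d$ each conditional field $v_t(\cdot|X_i^*)$ points into the convex cone with vertex $\psi^n_t(x)$ containing $(-1,1)^d$, hence so does their convex combination $v^n_t$, and the flow is driven inward. Equivalently, one may use the explicit decomposition $v^n_t(y)=-\tfrac{1-\sigma_{\min}}{\sigma_t}y+\tfrac{1}{\sigma_t}\bar X$ with $|\bar X|\le\sqrt{d}$ to see that $\tfrac{d}{dt}|\psi^n_t|^2<0$ whenever $|\psi^n_t|>\sqrt{d}/(1-\sigma_{\min})$. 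Either way, the \emph{direction} of $v^n_t$ is essential; the magnitude bound you invoke is not enough.

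\medskip
\textbf{A minor slip on the cutoff.} You argue that on the tail $\{x\notin(-a,a)^d\}$ the integrand reduces to $|v^n_t(\psi^n_t(x))|^2p_0(x)$ because $\tilde v_t(y)=0$ for $y\notin(-a,a)^d$. But the cutoff acts on the \emph{argument} $y=\psi^n_t(x)$, not on $x$; since the flow moves inward, $\psi^n_t(x)$ may well lie inside $(-a,a)^d$ even when $x$ does not, so $\tilde v_t(\psi^n_t(x))$ need not vanish. The paper instead bounds $|v^n_t-\tilde v_t|^2\le 2(|v^n_t|^2+|\tilde v_t|^2)$ and controls both pieces by the same quantity $\lesssim (da^2+|x|^2)/\sigma_t^2$, using the cutoff together with $|\psi^n_t(x)|\le|x|$. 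This is easy to fix once the previous point is in hand.
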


Next we want to choose $a$ in \Cref{thm:tails} and a neural network of finite size (depending on $n$) such that the bound of $	\mathsf{W}_1(\mathbb{P}^{\psi^n(Z)}, \mathbb{P}^{\hat{\psi}(Z)}) $ decays at a rate of $n^{-\frac{1+\alpha}{2\alpha + d}}.$ Hence we need to approximate 
			\begin{equation*}
		v^n_t(x) = \sum_{i = 1}^{n} v_t(x|X_i^*)\frac{p_t(x|X_i^*)}{ \sum_{i = 1}^{n} p_t(x|X_i^*)	}
	\end{equation*} and simultaneously control the Lipschitz constant of the approximation $\hat{v}$ on $(-a,a)^d.$ This is indeed sufficient, since using the same reasoning as in the proof of \Cref{thm:tails}, it holds that if $x \in (-a,a)^d$, then $v_t^n(x) \in (-a,a)^d$ for every $t$. 
As $\hat{v}$ approximates $v^n$ in the supremum norm, the Lipschitz constant of $\hat{v}$ will be at least of the order of the Lipschitz constant of $v^n$. The same applies to the supremum norm. Hence we need bounds for both, the Lipschitz constant and the supremum norm of $v^n.$

	 	\begin{lemma}\label{lem:v_tv_tn_supnorm_bound} Assume that $\operatorname{supp}(p^*) \subset [-1,1]^d.$  For every $x \in (-a,a)$ we have that
	 	\begin{equation*}
	 		|v^n_t(x)| \leq \frac{\sqrt{d}(1+a)}{1-(1-\sigma_{\min})t} \leq \frac{\sqrt{d}(1+a)}{\sigma_{\min}}.
	 	\end{equation*}
       Further for the Gaussian kernel and $\operatorname{supp}(p^*) \subset [-1,1]^d$, for any $x \in \mathbb{R}^d$ and every $t \in [0,1]$
       \begin{equation*}
           \operatorname{Lip}(v^n_t) \leq \frac{1}{\sigma_t}+ \frac{2d}{\sigma_t^3}.
       \end{equation*}
	 \end{lemma}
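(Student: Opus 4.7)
The plan is to exploit the explicit form of the conditional vector field that Assumption~\ref{ass:sigma+mu} enables via Lemma~\ref{thm:cond_vt_depends_on_mu_and_sigma}:
\[
v_t(x\mid X_i^*) \;=\; \frac{X_i^* - (1-\sigma_{\min})x}{\sigma_t}, \qquad \sigma_t := 1-(1-\sigma_{\min})t,
\]
so that $v^n_t(x) = \sum_{i=1}^n w_i(x)\, v_t(x\mid X_i^*)$ is a convex combination with softmax weights $w_i(x) = p_t(x\mid X_i^*)/\sum_j p_t(x\mid X_j^*)$. The supremum bound is then immediate: the triangle inequality together with $\sum_i w_i = 1$ gives $|v^n_t(x)| \leq \max_i |v_t(x\mid X_i^*)|$, and using $|X_i^*| \leq \sqrt{d}$ and $|x| \leq \sqrt{d}\,a$ for $x\in(-a,a)^d$ (with $a\geq 1$) bounds the numerator by $\sqrt{d}(1+a)$, while $\sigma_t \geq \sigma_{\min}$ yields both forms of the estimate.

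For the Lipschitz bound I will compute the Jacobian directly. The product rule splits it as
\[
D v^n_t(x) \;=\; \sum_i w_i(x)\, D_x v_t(x\mid X_i^*) \;+\; \sum_i v_t(x\mid X_i^*)\,(\nabla w_i(x))^\top.
\]
Since each conditional field is affine in $x$, the first sum collapses to the $x$-independent matrix $-\frac{1-\sigma_{\min}}{\sigma_t} I$ of operator norm at most $1/\sigma_t$. For the Gaussian kernel, writing $w_i$ as the softmax of $-|x - tX_i^*|^2/(2\sigma_t^2)$ yields the standard identity
\[
\nabla w_i(x) \;=\; \frac{t}{\sigma_t^2}\, w_i(x)\bigl(X_i^* - \bar X(x)\bigr), \qquad \bar X(x) := \sum_j w_j(x)\, X_j^*.
\]

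Substituting and decomposing $X_i^* - (1-\sigma_{\min})x = (X_i^* - \bar X) + (\bar X - (1-\sigma_{\min})x)$, the cross term vanishes because $\sum_i w_i(X_i^* - \bar X) = 0$, and the second sum reduces to the weighted empirical covariance
\[
\frac{t}{\sigma_t^3}\,\sum_i w_i(x)\,(X_i^* - \bar X)(X_i^* - \bar X)^\top.
\]
This matrix is positive semidefinite, so its operator norm is controlled by its trace $\sum_i w_i |X_i^*|^2 - |\bar X|^2 \leq d$, thanks to $|X_i^*|\leq\sqrt{d}$. Combining with $t\leq 1$ and $1-\sigma_{\min} \leq 1$ delivers the claimed Lipschitz bound (the factor $2$ in $2d/\sigma_t^3$ leaves room for slightly cruder bookkeeping of the covariance term).

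The main obstacle is recognizing the covariance structure hidden in the second Jacobian term: without the cancellation of the $x$-dependent contribution against $\sum_i w_i \nabla w_i$, one would be left with a Lipschitz estimate that grows in $|x|$, destroying the uniform bound over $\mathbb{R}^d$ needed to feed into \Cref{thm:tails}.
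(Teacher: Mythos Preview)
Your proof is correct and follows the same overall strategy as the paper: bound the supremum via the convex-combination structure, then compute the Jacobian of $v^n_t$ and exploit that the $x$-dependent contributions cancel against the softmax weights, leaving only a weighted covariance of the $X_i^*$. The paper first rewrites $v^n_t(x)=-\tfrac{1-\sigma_{\min}}{\sigma_t}x+\tfrac{1}{\sigma_t}\bar X(x)$ and then bounds each entry of $D\bar X$ in absolute value by $2t/\sigma_t^2$ (using $|X_{ik}|\le 1$), finishing with the spectral norm of the all-ones matrix $J_d$; this produces the $2d/\sigma_t^3$ term. Your route via the product rule and the explicit softmax gradient lands on the same covariance matrix but then uses that it is positive semidefinite to bound its operator norm by its trace $\sum_i w_i|X_i^*|^2-|\bar X|^2\le d$, which is cleaner and in fact gives the sharper constant $d/\sigma_t^3$. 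So the two arguments are essentially equivalent, with your bookkeeping slightly more efficient; the factor $2$ you flagged is indeed an artifact of the paper's entrywise bound rather than a necessity.
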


From \cite[Corollary 4.2]{Guehring2020} we know that for any $f \in C^2$ with bounded second partial derivatives we can approximate $f$ and its Lipschitz constant simultaneously using a ReLU network as defined in \eqref{def:NN}. 
To be more precise, for every $\varepsilon \in (0, 1/2)$ there is a ReLU network $f_{\mathsf{NN}}$ such that
\begin{equation}\label{eq:approx_guehring}
\left\|f_{\mathsf{NN}}-f\right\|_{\mathcal{H}^1\left((0,1)^d\right)} \leq \varepsilon.
\end{equation}
The number of hidden layers $L$ of $f_{\mathsf{NN}}$ is bounded by $ c \cdot \log \left(\varepsilon^{-2 /(2-1)}\right)$ and the number of nonzero weights $S$ is bounded by $ c \cdot \varepsilon^{-d /(2-1)} \cdot \log^2\left(\varepsilon^{-2 /(2-1)}\right)$, where $c$ is a constant independent from $n$.
Note that \cite{Guehring2020} refer to ReLU networks as defined in \eqref{def:NN} as \glqq standard neural networks\grqq{}, since they also consider networks with skip connections.\\

Looking at the proof of \Cref{lem:v_tv_tn_supnorm_bound} and using standard analysis, we can bound the second partial derivatives of $v^n_t$ by $\frac{C}{\sigma_t^5}$ for some $C >0$.
Now we are ready to proof \Cref{thm:gaussian_kernel_optimal_rate}.
\begin{proof}[Proof of \Cref{thm:gaussian_kernel_optimal_rate}]

From \Cref{thm:tails}, we know that
\begin{equation*} 
	\mathsf{W}_1(\mathbb{P}^{\psi_1^n(Z)}, \mathbb{P}^{\hat{\psi}_1(Z)}) \leq \sqrt{2 e} e^{\int_{0}^1 \tilde{L}_t\; \mathrm{d}t}\inf_{\tilde{v} \in \mathcal{N}} \Big(
	\int_{0}^{1} \int_{(-a,a)^d} | v^n_t(\psi^n_t(x))-\tilde{v}_t(\psi^n_t(x))|^2 p_0(x) \; \mathrm{d}x\; \mathrm{d}t +  c \frac{a^{2d+2}e^{-da}}{\sigma_{\min}^2}\Big)^{1/2}, 
\end{equation*}
where we used the choice of \eqref{ODEhat} before dividing the area in \eqref{einsetzen_erster_term} in the proof of \Cref{thm:tails} and $\tilde{L}$ is the Lipschitz constant of $\tilde{v}$ at time $t$. We neglect an arbitrary small additional error on the right hand side to account for the use of the infimum, as the error bound corresponding to the kernel density estimator in \eqref{basic_orakel} is nonzero anyway.
Note that due to the restriction of the network class,
\begin{equation*}
	e^{\int_{0}^1 \tilde{L}_t\; \mathrm{d}t} \leq  e\cdot e^{\int_{0}^1 \frac{2d+1}{\sigma_{t}^3}\; \mathrm{d}t}. 
\end{equation*}
Integration yields for $n$ big enough
\begin{equation*}
	\int_0^1 \frac{1}{\sigma_t^3}\; \mathrm{d}t  = \int_0^1 \frac{1}{(1-(1-\sigma_{\min})t)^3}\; \mathrm{d}t \leq \frac{1}{2 (1- \sigma_{\min} )\sigma_{\min}^2} - \frac{1}{2 (1- \sigma_{\min})} \leq \frac{1}{\sigma_{\min}^2}.
\end{equation*}
To bound the tail, we need to assure $a$ is chosen such that
\begin{equation*}
	e^{\frac{2d+1}{\sigma_{\min}^2}} \frac{a^{d+1} e^{-\frac{da}{2}}}{\sigma_{\min}} \leq n^{- \frac{1+\alpha}{2\alpha +d}}.
\end{equation*}
As in the setting $d \geq 1,$ we can use that for $a \geq 15$
\begin{equation*}
	a^{d+1}e^{-da/2} \leq e^{-\frac{da}{8}}.
\end{equation*}
For $\sigma_{\min} \sim n^{- \frac{1}{2\alpha +d}}$, the above inequality is satisfied if
\begin{equation*}
	a \geq \max\big(\frac{8}{d}\big(\frac{2+ \alpha}{2 \alpha +d}\log(n) + (2d+1)n^{\frac{2}{2\alpha +d}}\big), 15\big).
\end{equation*}
To cover $(-a,a)^d$ with cubes of size $(0,1)^d$, we need $\lceil 2a \rceil^d$ little cubes. \newline
Now we want to bound the approximation term. 
We can bound
\begin{align*}
	\Big(\int_{0}^{1} \int_{(-a,a)^d} | v^n_t(\psi^n_t(x))-\tilde{v}_t(\psi^n_t(x))|^2 p_0(x) \; \mathrm{d}x\; \mathrm{d}t \Big)^{1/2} & \leq \| v^n_t- \tilde{v}_t\|_{\infty, (-a,a)^d}.
\end{align*}
Note that $v_t^n \in C^{\infty}$ in case of the Gaussian kernel. Hence we can apply \cite[Corollary 4.2]{Guehring2020}.
The bound of the second partial derivatives as well as the supremum norm bound of $v^n$ influence the coefficients in \cite[Corollary 4.2]{Guehring2020}, therefore we construct the network such that $n^{-\frac{5}{2\alpha +d}}v^n$ is approximated and choose the last weight matrix such that it scales the result up. This addition of one layer does not influence the order of the layers and non zero weights of the network. However, the approximation error gets scaled up too, hence we need to ensure a smaller error.
We need to choose $\varepsilon$ in \eqref{eq:approx_guehring} such that
\begin{equation*}
	e^{(2d+1)n^{\frac{2}{2\alpha +d}}} \sqrt{\varepsilon} \leq n^{- \frac{6+\alpha}{2\alpha +d}}  \quad \Longleftrightarrow \quad \varepsilon \leq n^{- \frac{12+2\alpha}{2\alpha +d}}e^{-(4d+2)n^{\frac{2}{2\alpha +d}}}.
\end{equation*} The addition of one cutting layer to ensure the assumptions of \Cref{thm:tails} does not influence the order of the layers and non zero weights of the network either.
Hence choosing 
\begin{align*}
	L_n &\gtrsim   \Big(\max\Big(\log(n) +n^{\frac{2}{2\alpha+d}}, 15\Big) \Big)^d \cdot  \big(\log^2(n) + n^{\frac{4}{2\alpha +d}} \big) ,\\
	S_n &\gtrsim \Big(\max\Big(\log(n) +n^{\frac{2}{2\alpha+d}}, 15\Big) \Big)^d\cdot  \big( n^{\frac{12+2\alpha}{2\alpha +d}} e^{(4d+2)n^{\frac{2}{2\alpha +d}}}\big)^d \cdot \big(\log^2(n) + n^{\frac{4}{2\alpha +d}} \big),
\end{align*} yields the desired rate.
\end{proof}

\begin{proof}[Proof of \Cref{thm:kde_rate_dimred}]
	For this proof, we need some auxiliary lemmata, whose proofs are postponed to \Cref{sec:helper}.
	First, we need to bound the Lipschitz constant of $\pi$ on $U_r$. 
	
	\begin{lemma}\label{lem:Lipschitz-constant}
The mapping $\pi\Big|_{B(r, \mathcal{M})}$ is Lipschitz continuous with Lipschitz constant $\frac{1}{1-\frac{r}{\tau}}$.
	\end{lemma}

	Then we decompose as in \Cref{thm:kde_rate_smooth}
	\begin{equation}\label{kde_dimred_decomp}
		\mathsf{W}_1(\mathbb{P}^*, \pi_\#\mathbb{P}^{\psi^n_1}(Z)) \leq \mathsf{W}_1(\mathbb{P}^*, \pi_\#\mathbb{P}^{\psi_1}(Z)) + \mathsf{W}_1(\pi_\#\mathbb{P}^{\psi_1}(Z), \pi_\#\mathbb{P}^{\psi^n_1}(Z)).
	\end{equation}
	\underline{First term:}
	We first note that
	\begin{align}
		\mathsf{W}_1(\mathbb{P}^*, \pi_\#\mathbb{P}^{\psi_1}(Z)) & = \sup_{\substack{f \in \operatorname{Lip}(1)\\f(0)=0}} \mathbb{E}_{\substack{X \sim \mathbb{P}^*\\ U \sim \mathcal{N}(0, I_d)}}\big[ f(X)-f(\pi(X+\sigma_{\min} U)) \big] \notag\\
		& = \sup_{\substack{f \in \operatorname{Lip}(1)\\f(0)=0}} \mathbb{E}_{\substack{Y \sim \mathbb{P}^*_{d^{\prime}}\\ U \sim \mathcal{N}(0, I_d)}}\big[ f(g(Y))-f(\pi(g(Y)+\sigma_{\min} U)) \big] \notag \\
		& = \sup_{\substack{f \circ g = \varphi \\ f \in \operatorname{Lip}(1)\\ f(0)=0}} \mathbb{E}\big[\varphi(Y)- \varphi(\rho_{\sigma_{\min}}(Y, U)) \big],\label{first_term_reformulated}
	\end{align}
	where we define the $\mathbb{R}^{d^{\prime}}$ counterpart of the projection as \begin{equation*}
		\rho_{\sigma_{\min}}(y,u) \coloneqq g^{-1}(\pi(g(y) + \sigma_{\min} u)).
	\end{equation*}
	In the following, we derive a second order Taylor expansion of $\rho$, that linearizes along the manifold. This step of the proof is sketched in \Cref{fig:tangent_space}.
	\begin{figure}
		\centering
		\includegraphics[width=0.4\linewidth]{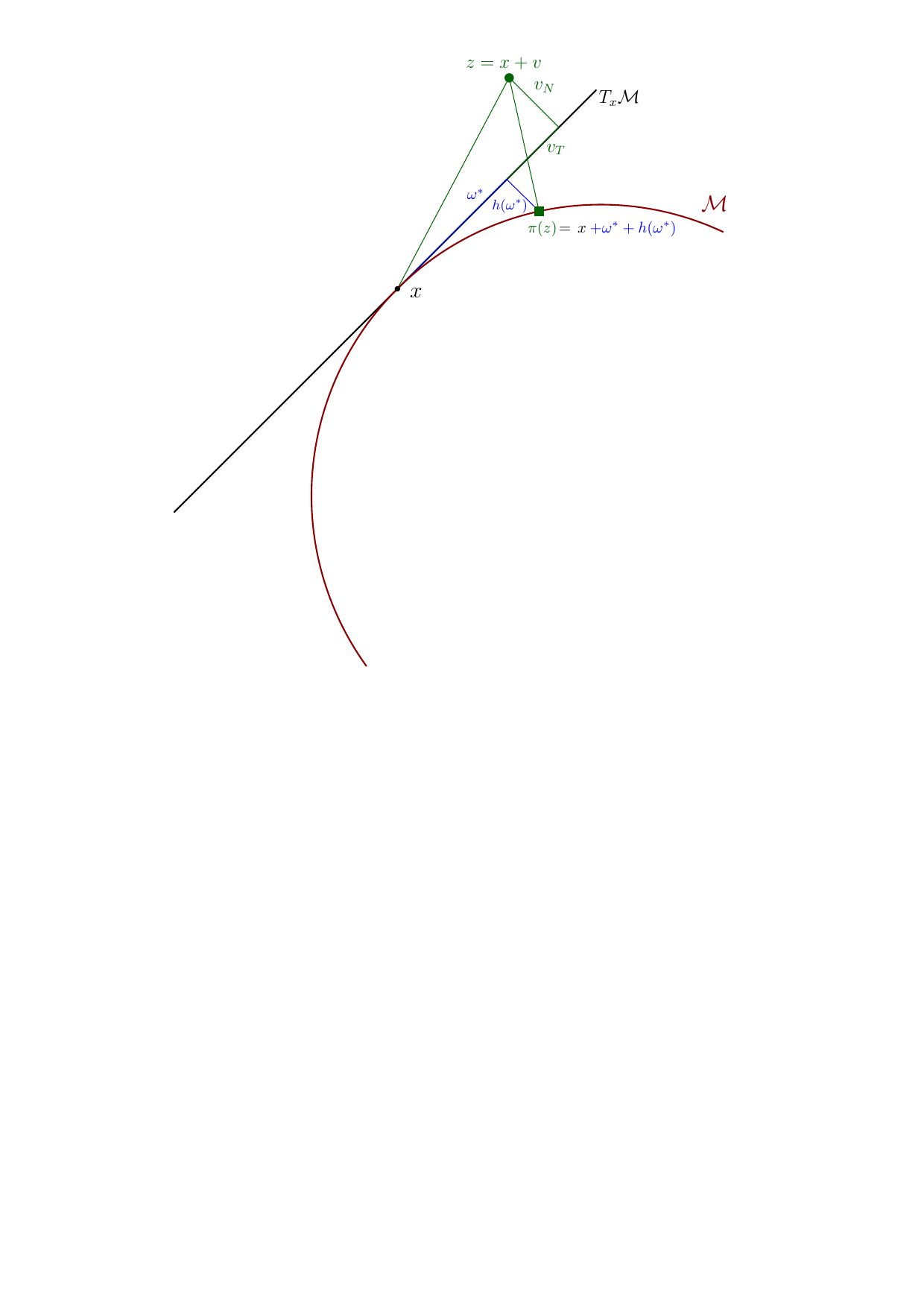}
		\caption{Setting for the .}
		\label{fig:tangent_space}
	\end{figure}
	To this end, we start by bounding the projection of a disturbed point on the manifold by the magnitude of the distribution and a vector in the tangent space of the point.
	Let $x \in \mathcal{M}$. Denote the tangent space of the point $x$ as $T_x\mathcal{M}$ and the normal space as $N_x\mathcal{M}$. Define the $r$ surrounding on $\mathcal{M}$ as
	\begin{equation*}
		\mathcal{M}_{x, r} \coloneqq \{ x+ \omega +h_x(\omega) \colon \omega \in T_x\mathcal{M}, |\omega|\leq r \},
	\end{equation*}
	where $h_x \colon T_x\mathcal{M} \rightarrow N_x\mathcal{M}$ is the function that maps a vector in the tangent space to the vector in the normal space such that $x+\omega + h_x(\omega) \in \mathcal{M}$. Note that this is a slightly larger set than $\mathcal{M}\cap B(r, \{x\})$. Since $x \in \mathcal{M}$, we have that $h(0) = 0$. Since $T_x\mathcal{M}$ is a tangent space, $Dh(0) = 0$. Additionally $\|D^2 h (\omega)\| \lesssim \frac{1}{\tau}$, see \citet[Proposition 2.3]{amari_estimatingreach} referring to \citet[Proposition 6.1]{Niyogi2008-de}. A second order Taylor expansion, like in \cite{tang24,divol2022} yields
	\begin{equation}\label{eq:ungleichung_h}
		|h(\omega)| \lesssim \frac{1}{\tau} |\omega|^2.
	\end{equation}
	For $z \in B(r, \{x\})$, let  $z = x + v$, with $|v|\leq r <1$. There is a unique decomposition $v = v_T + v_N$ with $v_T \in T_x\mathcal{M}$ and $v_N \in N_x\mathcal{M}$. Then $\pi(z) = x+\omega^*+h(\omega^*)$, where $\omega^*$ is the minimizer of
	\begin{equation}\label{minimizer}
		| z - (x+\omega+h(\omega))|^2 = | v- \omega -h(\omega)|^2 = |v_T - \omega|^2 + |v_N - h(\omega)|^2
	\end{equation}
	and $\pi(z)\in \mathcal{M}_{x,r}$. Since the objective \eqref{minimizer} is convex and since $\mathcal{M}_{x, r}$ is compact, the unique minimizer satisfies
	\begin{equation*}
		\omega^* - v_T = -Dh(\omega^*)^{\top}(h(\omega^*)-v_N).
	\end{equation*}
	Taking norms yields
	\begin{equation}\label{eq:bound_omega_vt}
		|\omega^* - v_T | \leq |Dh(\omega^*)| |h(\omega^*)-v_N|\leq \|Dh(\omega^*)\|  (| h(\omega^*)|+ | v_N| ).
	\end{equation}
	Due to the Pythagorean theorem, we can bound $|v_N |\leq |v|$ and $|v_T |\leq |v|$. Since $\omega^*$ is the minimizer of \eqref{minimizer}, we know that
	\begin{equation*}
		|\omega^* - v_T |^2 \leq  |\omega^* - v_T |^2 +  |v_N -h(\omega^*)|^2 \leq  |v_N -h(v_T)|^2  \leq 2 (|v|^2 + |h(v_T)|^2).
	\end{equation*}
	Using \eqref{eq:ungleichung_h}, we obtain using $|v|<1$
	\begin{equation*}
		|\omega^*| \leq |v_T| + |\omega^* - v_T| \lesssim |v| + \sqrt{ |v|^2 + \frac{1}{\tau}|v|^4}\lesssim \frac{1}{\min(\sqrt{\tau}, 1)}|v|.
	\end{equation*}
	Inserting this bound into \eqref{eq:bound_omega_vt} and using that by Taylor expansion $\|Dh(\omega^*)\|\lesssim \frac{1}{\tau}|\omega^*|$ leads to 
	\begin{equation*}
		|\omega^* - v_T | \lesssim \frac{1}{\min(\tau, \tau^{3/2})} |v|^2.
	\end{equation*}
	Thus
	\begin{align*}
		&\pi(x+v)= x + \omega^* + h(\omega^*)\\
		\Longleftrightarrow \quad & \pi(x+v) - (x+v_T)=  \omega^* - v_T+ h(\omega^*)\\
		\Longrightarrow \quad & | \pi(x+v) - (x+v_T)|\leq |\omega^* - v_T| + |h(\omega^*)| \\
		\Longrightarrow \quad & | \pi(x+v) - (x+v_T)|\lesssim |v|^2.
	\end{align*}
	Therefore 
	\begin{equation*}
		\pi(x+v) = x + v_T + R(x,v), \quad \text{where}\quad | R(x,v)|\lesssim |v|^2.
	\end{equation*}
	
	Applying this to the projection of $g(y)+\sigma_{\min} u$ leads to
	\begin{equation*}
		\pi(g(y) + \sigma_{\min} u) = g(y) + \sigma_{\min} u_T + R(g(y), \sigma_{\min} u),
	\end{equation*} where $\|R(g(y), \sigma_{\min} u) \|\lesssim \sigma_{\min}^2 |u|^2$. Thus
	\begin{equation*}
		\pi(g(y)+\sigma_{\min} u)-g(y) = \sigma_{\min} u_T + R(g(y), \sigma_{\min} u) =: \eta.
	\end{equation*}
	A Taylor expansion of $g^{-1}$ around $g(y)$ yields
	\begin{equation*}
		g^{-1}( \pi(g(y)+\sigma_{\min} u)) = g^{-1}(g(y)+\eta) = y + Dg^{-1}(g(y))\eta + O(| \eta|^2), 
	\end{equation*}
	where $O(\| \eta\|^2)$ represents a vector whose norm is of order $\| \eta\|^2$. Inserting the definition of $\eta$ leads to
	\begin{align*}
		\rho_{\sigma_{\min}}(y, u) =  g^{-1}( \pi(g(y)+\sigma_{\min} u)) &= y + Dg^{-1}(g(y))(\sigma_{\min} u_T + R(g(y), \sigma_{\min} u))) \\ & \quad + O(| \sigma_{\min} u_T + R(g(y), \sigma_{\min} u)|^2)\\
		& = y + \sigma_{\min} A(y)u + R^{\prime}(g(y), \sigma_{\min} u),
	\end{align*}
	where $| R^{\prime}(g(y), \sigma_{\min} u)|\lesssim \sigma_{\min}^2 |u|^2$ and $A(y) \coloneqq  Dg^{-1}(g(y)) \circ P_{T_{g(y)}\mathcal{M}}$  with $P_{T_{g(y)}\mathcal{M}}$ being the orthogonal projection onto the tangent space of $\mathcal{M}$ at the point $g(y)$.
	
	This leads to, using that $\pi$ sets everything to $0$ outside for $|u|>r$ and setting $\rho \coloneqq \rho_{1}$
    \begin{align*}
        \int \int \varphi (\rho_{\sigma_{\min}}(y, u) ) p^*_{d^{\prime}}(y)K(u)\; \mathrm{d}y \; \mathrm{d}u & = \int_{B(0, r)}\int \varphi (\rho_1(y, u) ) p^*_{d^{\prime}}(y)K_{\sigma_{\min}}(u)\; \mathrm{d}y \; \mathrm{d}u \\
        & \quad + \int_{B(0, r)^{c}}\int \varphi (\rho_1(y, u) ) p^*_{d^{\prime}}(y)K_{\sigma_{\min}}(u)\; \mathrm{d}y \; \mathrm{d}u. 
    \end{align*}
    For the first term, we obtain
    \begin{align}
        \int_{B(0, r)}\int \varphi (\rho_1(y, u) ) p^*_{d^{\prime}}(y)K_{\sigma_{\min}}(u)\; \mathrm{d}y \; \mathrm{d}u & \lesssim \int_{B(0, r)}\int \varphi (y +  A(y)u ) p^*_{d^{\prime}}(y)K_{\sigma_{\min}}(u)\; \mathrm{d}y \; \mathrm{d}u\notag \\
		& \quad + \int_{B(0, r/\sigma_{\min})}\int |R^{\prime}(g(y), \sigma_{\min} u)| p^*_{d^{\prime}}(y)K(u)\; \mathrm{d}y \; \mathrm{d}u \notag\\
		& \lesssim \int_{B(0, r)}\int \varphi (y +  A(y)u ) p^*_{d^{\prime}}(y)K_{\sigma_{\min}}(u)\; \mathrm{d}y \; \mathrm{d}u  + \sigma_{\min}^2.\label{eq:near_area}
    \end{align}

    For the integral over $B(0, r)^{c}$, we change the order of integration. For $y$ fixed, we define two sets
    \begin{equation*}
        A_{0}(y) \coloneqq \{u \in B(0, r)^{c}\mid g(y)+u \notin B(r, \mathcal{M}) \},\quad 
        A_{\text{far}}\coloneqq \{ u \in B(0, r)^{c}\mid g(y)+u \in B(r, \mathcal{M})\}.
    \end{equation*}
    $A_{0}(y)$ consists of all $u$ such that $\pi(g(y)+u )= 0$. $A_{\text{far}}$ consists of all $u$ such that $g(y)+u$ is close to the manifold, which implies that $|u| \geq 2 \tau - r$ by the reach condition and the construction of the projection.
	Inserting this integral split and \eqref{eq:near_area} into \eqref{first_term_reformulated} leads to 
	\begin{align}
		\sup_{\substack{f \circ g = \varphi \\ f \in \operatorname{Lip}(1)\\ f(0)=0}} \mathbb{E}\big[\varphi(Y)- \varphi(\rho_{\sigma_{\min}}(Y, U)) \big]& \lesssim \sup_{\substack{f \circ g = \varphi \\ f \in \operatorname{Lip}(1)\\ f(0 = 0)}} \int_{B(0, r)}\int (\varphi(y) - \varphi (y +  A(y)u )) p^*_{d^{\prime}}(y)K_{\sigma_{\min}}(u)\; \mathrm{d}y \; \mathrm{d}u \label{eq:innen}\\& \quad + \sigma_{\min}^2 +\int \int_{ A_{0}(y) } (\varphi(y)-\varphi(g^{-1}(0)))  p^*_{d^{\prime}}(y)K_{\sigma_{\min}}(u)\; \mathrm{d}u \; \mathrm{d}y\label{eq:null}\\
        & \quad + \int \int_{ A_{\text{far}}}(\varphi(y)-\varphi(g^{-1}(\pi(g(y)+u))))  p^*_{d^{\prime}}(y)K_{\sigma_{\min}}(u)\; \mathrm{d}u \; \mathrm{d}y.\label{eq:aussen}
	\end{align}
	For $\sigma_{\min}$ small enough, which will translate into $n$ large enough, the \eqref{eq:null} is bounded by $\sigma_{\min}^{2}$, since the tails of the Gaussian decay exponential.

    For \eqref{eq:aussen}, we first note that under \Cref{ass:manifold} the Jacobian of the inverse $g^{-1}$ of $g$ is bounded. We are going to calculate this bound as a byproduct of the proof of \Cref{lem:helper_lemma}. Therefore there is a $\xi_{y,u}$ with $|\xi_{y,u}|\leq r$ such that 
    \begin{equation*}
        |\varphi(g^{-1}(g(y))) -\varphi(g^{-1}(\pi(g(y)+u))) |\lesssim |g(y) - \pi(g(y)+u)| = |g(y) - g(y)-u - \xi_{y,u}|\leq |u| + r.
    \end{equation*}
    Then for $\sigma_{\min}$ small enough
    \begin{equation*}
        \int \int_{ A_{\text{far}}}(\varphi(y)-\varphi(g^{-1}(\pi(g(y)+u))))  p^*_{d^{\prime}}(y)K_{\sigma_{\min}}(u)\; \mathrm{d}u \; \mathrm{d}y \leq \int \int_{B(0, r)^{c}}  (|u| + r) p^*_{d^{\prime}}(y)K_{\sigma_{\min}}(u)\; \mathrm{d}u \; \mathrm{d}y \lesssim \sigma_{\min}^2.
    \end{equation*}
    To bound \eqref{eq:innen}, we first define
	\begin{equation*}
		\ell(u) \coloneqq \int \varphi(y+A(y)u)p^*_{d^{\prime}}(y)\; \mathrm{d}y.
	\end{equation*}
	Then, we can rewrite 
	\begin{equation}\label{eq:ell_formulation}
		\int_{B(0, r)}\int (\varphi(y) - \varphi (y +  A(y)u )) p^*_{d^{\prime}}(y)K_{\sigma_{\min}}(u)\; \mathrm{d}y \; \mathrm{d}u =   \int_{B(0, r)} K_{\sigma_{\min}}(u)(\ell(0)-\ell(u))\; \mathrm{d}u.
	\end{equation}
	Analogous to \Cref{thm:kde_rate_smooth}, we can use the multivariate mean value theorem to conclude that for every $u$ there is a $\xi_u$ with $|\xi_u|\leq |u|$ such that
	\begin{equation*}
		\int_{B(0, r)} K_{\sigma_{\min}}(u)(\ell(0)-\ell(u))\; \mathrm{d}u = \int_{B(0, r)} K_{\sigma_{\min}}(u)\nabla \ell(\xi_u)^{\top}u\; \mathrm{d}u = \int_{B(0, r)} K_{\sigma_{\min}}(u)(\nabla \ell(\xi_u)- \nabla \ell(0)  )^{\top}u\; \mathrm{d}u,
	\end{equation*}  
	where the last inequality follows from the symmetry of $K_{\sigma_{\min}}$. Given that the mapping $F_u(y) \coloneqq y +A(y)u$ is a bijection for all $u\in B(0,r)$, and denote the inverse by $S_u(z)$. If that is the case, then
	\begin{equation}
		\nabla \ell(u) = \int A(y)^{\top}(\nabla \varphi)(y+A(y)u)p^*_{d^{\prime}}(y)\; \mathrm{d}y = \int A(S_u(z))^{\top} (\nabla \varphi)(z)|\operatorname{det}(D S_u(z)) |p^*_{d^{\prime}}(S_u(z))\; \mathrm{d}z.
	\end{equation}
	The following lemma shows that this is justified as well as several properties of the mapping $F_u$ and its inverse, that will be used in the following. Intuitively, the properties follow from the fact that $F_u$ is a small perturbation of the identity.
	\begin{lemma}\label{lem:helper_lemma}
	For any $u\in \mathbb{R}^{d}$ such that $|u|\leq r$, the mapping $F_u \colon \mathbb{R}^{d^{\prime}} \rightarrow \mathbb{R}^{d^{\prime}}, \ F_u(y)\coloneqq y +A(y)u$ is a diffeomorphism. Further
		\begin{align*}
			\mathrm{(I)} \quad  &\sup_{y \in \mathbb{R}^{d^{\prime}}} \|  A(y)\|  <  1, \\
			\mathrm{(II)} \quad  &\sup_{y \in \mathbb{R}^{d^{\prime}}} \| D A(y)\| \leq c_0 < \frac{1}{r}, \\
			\mathrm{(III)} \quad  & F_u \text{ is injective}, \\
			\mathrm{(IV)} \quad & S_u(z) = z - A(S_u(z))u,\\
			\mathrm{(V)} \quad & \sup_{y \in  \mathbb{R}^{d^{\prime}}} \|D S_u(y)\| \lesssim \frac{1}{1-c_0 r},\\
			\mathrm{(VI)} \quad &\sup_{y \in  \mathbb{R}^{d^{\prime}}}  ||\operatorname{det}(D S_u(y))|-1| = O(|u|).
		\end{align*}
		
	\end{lemma}

	Using \Cref{lem:helper_lemma} and the fact that $\varphi$ is Lipschitz, we obtain
	\begin{align*}
		|\nabla \ell(\xi_u)- \nabla \ell(0) |& = \Big|\int (\nabla\varphi(z))^{\top} \Big(A(S_{\xi_u}(z))p^*_{d^{\prime}}(S_{\xi_u}(z))|\operatorname{det}(DS_{\xi_u}(z))| -  A(z)p^*_{d^{\prime}}(z)\Big) \; \mathrm{d}z\Big|\\
		& \leq \int |\nabla\varphi(z)|  \| A(S_{\xi_u}(z)) - A(z)\|
		p^*_{d^{\prime}}(S_{\xi_u}(z))|\operatorname{det}(DS_{\xi_u}(z))|  \; \mathrm{d}z\\
         & \quad + \int |\nabla\varphi(z)| \|A(z)\|p^*_{d^{\prime}}(S_{\xi_u}(z)) |\operatorname{det}(D S_{\xi_u}) -1| \; \mathrm{d}z\\
		& \quad + \int |\nabla\varphi(z)| \|A(z)\| |p^*_{d^{\prime}}(S_{\xi_u}(z))- p^*_{d^{\prime}}(z)| \; \mathrm{d}z\\
		& \lesssim |u|+|u|^{ \alpha}.
	\end{align*}
	Plugging everything into \eqref{eq:ell_formulation}, we obtain
	\begin{equation*}
		\int_{B(0, r)} K_{\sigma_{\min}}(u)(\ell(0)-\ell(u))\; \mathrm{d}u \leq  \int_{B(0, r)} K_{\sigma_{\min}}(u) (|u|+|u|^{\alpha})|u|\; \mathrm{d}u \lesssim  \sigma_{\min}^2 + \sigma_{\min}^{1+\alpha}.
	\end{equation*}
	Since $\sigma_{\min}\leq 1$, we have shown
	\begin{equation*}
		\mathsf{W}_1(\mathbb{P}^*, \pi_\#\mathbb{P}^{\psi_1}(Z)) \lesssim \sigma_{\min}^{1+\alpha}.
	\end{equation*}

	\underline{Second term:} First, we define the set
	\begin{equation*}
		\mathcal{F}\coloneqq \Big\{h \colon \mathbb{R}^{d} \rightarrow \mathbb{R} \mid h = f \circ \varphi, f \in \operatorname{Lip}(1), f(0)=0, \varphi \in \operatorname{Lip}(\frac{\tau}{\tau -r}), \exists x \in \mathcal{M} \text{ s.t. } \varphi(x)=x \Big\}.
	\end{equation*}
	For the second term in \eqref{kde_dimred_decomp}, we can use the Lipschitz bound from \Cref{lem:Lipschitz-constant} to bound
	\begin{align}
		\mathsf{W}_1(\pi_\#\mathbb{P}^{\psi_1}(Z), \pi_\#\mathbb{P}^{\psi^n_1}(Z)) & = \sup_{\substack{f \in \operatorname{Lip}(1)\\ f(0)=0}}\int f(\pi(x)) \; \mathrm{d}(K_{\sigma_{\min}} \ast \mathbb{P}^* - K_{\sigma_{\min}}  \ast \mathbb{P}_n)(x)\notag \\ 
		& = \sup_{\substack{f \in \operatorname{Lip}(1)\\ f(0)=0}}\int_{B(\mathcal{M},r)} f(\pi(x)) \; \mathrm{d}(K_{\sigma_{\min}} \ast \mathbb{P}^* - K_{\sigma_{\min}}  \ast \mathbb{P}_n)(x) \notag \\
		& \leq \sup_{h \in \mathcal{F}}\int_{B(\mathcal{M},r)} h(x) \; \mathrm{d}(K_{\sigma_{\min}} \ast \mathbb{P}^* - K_{\sigma_{\min}}  \ast \mathbb{P}_n)(x)\notag \\
		& \leq \sup_{h \in \mathcal{F}}\int h(x) \; \mathrm{d}(K_{\sigma_{\min}} \ast \mathbb{P}^* - K_{\sigma_{\min}}  \ast \mathbb{P}_n)(x)\label{eq:inner_integral}\\
		& \quad + \sup_{h \in \mathcal{F}}\int_{B(\mathcal{M},r)^c} h(x) \; \mathrm{d}(K_{\sigma_{\min}} \ast \mathbb{P}^* - K_{\sigma_{\min}}  \ast \mathbb{P}_n)(x).\notag
	\end{align}

    Since we assumed that $g$ is bounded and $\mathcal{F}$ consists of concatenations of functions that grow at most linearly, we conclude due to the exponential decay of the Gaussian distribution that for $\sigma_{\min}$ small enough with respect to $r$, 
	\begin{equation*}
		\sup_{h \in \mathcal{F}}\int_{B(\mathcal{M},r)^c} h(x) \; \mathrm{d}(K_{\sigma_{\min}} \ast \mathbb{P}^* - K_{\sigma_{\min}}  \ast \mathbb{P}_n)(x) \lesssim \sigma_{\min}^2.
	\end{equation*}
	For \eqref{eq:inner_integral}, we obtain
	
	\begin{align*}
		\sup_{h \in \mathcal{F}}\int h(x) \; \mathrm{d}(K_{\sigma_{\min}} \ast \mathbb{P}^* - K_{\sigma_{\min}}  \ast \mathbb{P}_n)(x) &\lesssim \mathsf{W}_1(\mathbb{P}^{\psi_1}(Z), \mathbb{P}^{\psi^n_1}(Z))\\
		&  = \sup_{f \in \operatorname{Lip}(1)} \int f(x) \; \mathrm{d}((K_{\sigma_{\min}}\ast\mathbb{P}^*)-(K_{\sigma_{\min}}\ast\mathbb{P}_n))(x)\\
		& = \sup_{f \in \operatorname{Lip}(1)} \int (K_{\sigma_{\min}}(-\cdot) \ast f)(x) \; \mathrm{d}(\mathbb{P}^*-\mathbb{P}_n)(x)\\
		& = \sup_{f \in \operatorname{Lip}(1)} \int (K_{\sigma_{\min}}(-\cdot) \ast f)(g(y)) \; \mathrm{d}(\mathbb{P}^*_{d^{\prime}}-\mathbb{P}_{n, d^{\prime}})(y)\\
		& = \sup_{\substack{h = (K_{\sigma_{\min}}\ast f)(g)\\f \in \operatorname{Lip}(1)}} \int h(y)(p^*_{d^{\prime}}(y)- \mu_{n, d^{\prime}}(y))\; \mathrm{d}y,
	\end{align*}
	where $\mathbb{P}_{n, d^{\prime}}$ is the empirical measure resulting from $Y_1,...,Y_n$ such that $g(Y_i)=X_i$ for every $i\in \{1,...,n\}$ and $\mu_{n, d^{\prime}} = \frac{1}{n}\sum_{i = 1}^n \delta_{Y_i}$. 
	From \eqref{eq:bound_convolution}, we know that for any multi-index $k \in \mathbb{N}_0^d$ with $|k|\geq 1$,
	\begin{equation*}
		\|D^{k}	(K_{\sigma_{\min}} \ast f)\|_{\infty} \lesssim \sigma_{\min}^{-|k|+1}.
	\end{equation*}
	
	Next, we use Faà di Bruno's formula and the boundedness of all derivatives of $g$ to conclude
	\begin{equation*}
		\|D^{k}(K_{\sigma_{\min}} \ast f)(g)\|_{\infty} \lesssim \sigma_{\min}^{-|k|+1},
	\end{equation*} where the constant depends on $k, d$ and the properties of $g$.
	Now we can proceed as in \Cref{thm:kde_rate_smooth} and obtain
	\begin{equation*}
		\mathsf{W}_1(\pi_\#\mathbb{P}^{\psi_1}(Z), \pi_\#\mathbb{P}^{\psi^n_1}(Z))\lesssim \frac{1}{\sqrt{\sigma_{\min}^{d^{\prime} -2}}}n^{-\frac{1}{2}}.
	\end{equation*}
\end{proof}

\begin{proof}[Proof of \Cref{thm:gaussian_kernel_optimal_rate_dimred}]
	 
	We proceed completely analogous to \Cref{thm:gaussian_kernel_optimal_rate} replacing $d$ with $d^{\prime}$. However, since the vector field we want to approximate is still $d$ dimensional, we still need $\lceil 2a\rceil^d$ little cubes. Hence we choose
	\begin{align*}
		L_n &\gtrsim   \Big(\max\Big(\log(n) +n^{\frac{2}{2\alpha+d^{\prime}}}, 15\Big) \Big)^d \cdot  \big(\log^2(n) + n^{\frac{4}{2\alpha +d^{\prime}}} \big) ,\\
		S_n &\gtrsim \Big(\max\Big(\log(n) +n^{\frac{2}{2\alpha+d^{\prime}}}, 15\Big) \Big)^d\cdot  \big( n^{\frac{12+2\alpha}{2\alpha +d^{\prime}}} e^{(4d^{\prime}+2)n^{\frac{2}{2\alpha +d^{\prime}}}}\big)^d \cdot \big(\log^2(n) + n^{\frac{4}{2\alpha +d^{\prime}}} \big),
	\end{align*} to obtain the desired rate.     
\end{proof}

\subsection{Auxiliary results}\label{sec:helper}

    \begin{proof}[Proof of \Cref{thm:tails}]

    	First, recall the proof from \Cref{thm:error_decomp}.      
    	For any $a > 0$ we can bound the term from \eqref{Gronwall_bound}
    	\begin{align}
    		Q_1 &\leq  \sqrt{ e} e^{2\int_{0}^1 \Gamma_t\; \mathrm{d}t} \int_{0}^{1} \int | v^n_t(\psi^n_t(x))-\tilde{v}_t(\psi^n_t(x))|^2 p_0(x) \; \mathrm{d}x\; \mathrm{d}t \notag \\
    		& =   \sqrt{ e} e^{2\int_{0}^1 \Gamma_t\; \mathrm{d}t} \Big(
    		\int_{0}^{1} \int_{(-a,a)^d} | v^n_t(\psi^n_t(x))-\tilde{v}_t(\psi^n_t(x))|^2 p_0(x) \; \mathrm{d}x\; \mathrm{d}t\label{einsetzen_erster_term}\\ & \quad+ \int_{0}^{1} \int_{\mathbb{R}^d\setminus (-a,a)^d} | v^n_t(\psi^n_t(x))-\tilde{v}_t(\psi^n_t(x))|^2 p_0(x) \; \mathrm{d}x\; \mathrm{d}t\Big).\notag
    	\end{align}
    	The second term can be bounded by
    	\begin{align}
    		\int_{0}^{1} \int_{\mathbb{R}^d\setminus (-a,a)^d} | v^n_t(\psi^n_t(x))&-\tilde{v}_t(\psi^n_t(x))|^2 p_0(x) \; \mathrm{d}x\; \mathrm{d}t \notag \\&\leq  2 \int_{0}^{1} \int_{\mathbb{R}^d\setminus (-a,a)^d} \Big( | v^n_t(\psi^n_t(x))|^2+ |\tilde{v}_t(\psi^n_t(x))|^2\Big) p_0(x) \; \mathrm{d}x\; \mathrm{d}t.\label{eq:bound_outer} 
    	\end{align}
    	Now 
    	\begin{align*}
    		| v^n_t(\psi^n_t(x))|^2 \leq \max_{i \in \{1,\dots,n\}}	| v_t(\psi^n_t(x) | X_i^*)|^2 \leq \frac{| X_i^* -(1-\sigma_{\min})\psi^n_t(x) |^2}{(1-(1-\sigma_{\min})t)^2}\leq 2\frac{| X_i^* |^2 +(1-\sigma_{\min})^2|\psi^n_t(x) |^2}{(1-(1-\sigma_{\min})t)^2}.
    	\end{align*}
    	As $\mathbb{P}^*$ has compact support within $(-a,a)^d$,\begin{align*}
    		| X_i^* |^2 \leq d\cdot a^2.
    	\end{align*}
    	To bound $\psi_t^n(x)$ for $x \in \mathbb{R}^d\setminus (-a,a)^d$, we use that
    	\begin{equation} \label{eq:psi_pushes_inwards}|\psi^n_t(x) | \leq |x |.\end{equation} 
    	To see why this is true, consider two cases. First, let $t$ be small such that $\psi^n_t(x) \notin (-a,a)^d.$ Then for every $X_i^*, i \in \{1,\dots,n \}$ we have by construction that $v^n_t(\psi_t^n(x)|X_i^*)$ is in the smallest convex cone that includes $\psi_t^n(x)$ and $(-1,1)^d$, where $\psi_t^n(x)$ is the vertex of the convex cone. Any linear combination of $v^n_t(\psi_t^n(x)|X_i^*), i = 1,\dots,n,$ must be in this convex cone as well. Therefore $v_t^n(x)$ must be in the convex cone. Since \begin{equation*}
    		\frac{d}{dt} \psi^n_t(x) = v^n_t(\psi_t(x)), \quad \psi^n_0(x) = x,\end{equation*} this means that for small $t$, $\psi_t$ pushes the initial condition $x$ in the direction of $(-a,a)^d$ and therefore \eqref{eq:psi_pushes_inwards} is true.
    	
    	Second, let $t^{\prime}$ be the smallest $t$ such that $\psi^n_{t^{\prime}}(x) \in (-a,a)^d.$ Then for all $t>t^{\prime}$ we have that $\psi^n_t(x) \in (-a,a)^d.$ This holds since within $(-a,a)^d\setminus (-1,1)^d$ using the same argument as in the first case, $v_t^n(x)$ is a linear combination of $v^n_t(\psi_t^n(x)|X_i^*)$ and thus $\psi_t$ pushes $x$ in the direction of $(-1,1)^d.$ If $t$ is such that $\psi_t(x) \in (-1,1)^d,$ there cannot be a $t^{\prime}>t$ such that $\psi_{t^{\prime}}(x) \notin (-a,a)^d,$ since by continuity of $\psi_t,$ there must be a $t^{\prime \prime}$ with $t < t^{\prime \prime}<t^{\prime }$ such that $\psi_{t^{\prime\prime}}(x) \in (-a,a)^d.$ In this case the corresponding vector field is oriented towards $(-1,1)^d,$ which leads to $\psi_t$ pushing $x$ back to $(-1,1)^d.$ Hence, in any case for $x \notin (-a,a)^d$, we have that \eqref{eq:psi_pushes_inwards} is true.\\

    	Therefore we get for \eqref{eq:bound_outer} for $\sigma_{\min} \leq 1$
    	\begin{align*}
    		\int_{0}^{1} \int_{\mathbb{R}^d\setminus (-a,a)^d}& | v^n_t(\psi^n_t(x))-\tilde{v}_t(\psi^n_t(x))|^2 p_0(x) \; \mathrm{d}x\; \mathrm{d}t  \leq 8 \int_{0}^{1} \int_{\mathbb{R}^d\setminus (-a,a)^d} \frac{da^2+(1-\sigma_{\min})^2 |x|^2}{(1-(1-\sigma_{\min})t)^2} p_0(x) \; \mathrm{d}x\; \mathrm{d}t \\
    		& \leq\frac{ 8 d a^2}{\sigma_{\min}^2} \int_{0}^{1} \int_{\mathbb{R}^d\setminus (-a,a)^d} p_0(x) \; \mathrm{d}x \; \mathrm{d}t + \frac{(1-\sigma_{\min})^2}{\sigma_{\min}^2}  \int_{0}^{1} \int_{\mathbb{R}^d\setminus (-a,a)^d} |x|^2p_0(x) \; \mathrm{d}x \; \mathrm{d}t.
    	\end{align*}
    	For the first term we get 
    	\begin{align*}
    		\frac{ 8 d a^2}{\sigma_{\min}^2}\int_{\mathbb{R}^d\setminus (-a,a)^d} p_0(x) \; \mathrm{d}x & = 	\frac{ 8 d a^2}{\sigma_{\min}^2} \Big( \int_{\mathbb{R}\setminus (-a,a)}\varphi(x) \; \mathrm{d}x \Big)^d ,
    	\end{align*}
    	where $\varphi$ is the pdf of the one dimensional kernel distribution. Since $a >1$, we get for the second term 
    	\begin{align*}
    		\frac{(1-\sigma_{\min})^2}{\sigma_{\min}^2} \int_{\mathbb{R}^d\setminus (-a,a)^d} |x|^2p_0(x) \; \mathrm{d}x  & = \frac{(1-\sigma_{\min})^2}{\sigma_{\min}^2} \sum_{i = 1}^d  \int_{\mathbb{R}^d\setminus (-a,a)^d} x_i^2 p_0(x) \; \mathrm{d}x_i\\
    		& \leq  \frac{(1-\sigma_{\min})^2}{\sigma_{\min}^2}d \Big(\int_{\mathbb{R} \setminus (-a,a)} x^2\varphi(x) \; \mathrm{d}x\Big)^d.
    	\end{align*}
    	By assumption $\varphi$ decays faster than $e^{-x}.$ Using the upper incomplete $\Gamma$-function as defined in \Cref{upper_incomplete_gamma_function}, we obtain for $a > 2$
    	\begin{align*}
    		\int_{\mathbb{R} \setminus (-a,a)}\varphi(x)\; \mathrm{d}x &\leq \int_{\mathbb{R} \setminus (-a,a)} x^2\varphi(x)\; \mathrm{d}x\\ & \leq  \int_{\mathbb{R} \setminus (-a,a)} x^2e^{-x}\; \mathrm{d}x \\
    		& = 2 	\int_{a}^{\infty} x^2e^{-x}\; \mathrm{d}x \\
    		& = 2 \Gamma(3,a)\\
    		& \leq 6 e^{-a}a^2.
    	\end{align*}
    
    	Hence
    	\begin{align*}
    		\int_{0}^{1} \int_{\mathbb{R}^d\setminus (-a,a)^d} | v^n_t(\psi^n_t(x))&-\tilde{v}_t(\psi^n_t(x))|^2 p_0(x) \; \mathrm{d}x\; \mathrm{d}t  \leq c \frac{a^{2d+2} e^{-da}}{\sigma_{\min}^2}. \qedhere
    	\end{align*}
    \end{proof}
    	\begin{proof}[Proof of \Cref{lem:v_tv_tn_supnorm_bound}]
    	For the supremum norm bound, observe that 
    	\begin{align*}
    		|v^n_t(x)| & = \Big| \sum_{i =1}^{n} v_t(x |X_i^*)\frac{p_t(x|X_i^*)}{\sum_{i = 1}^{n}p_t(x|X_i^*)}\Big| \\
    		& \leq  \sum_{i =1}^{n} |v_t(x |X_i^*)|\frac{p_t(x|X_i^*)}{\sum_{i = 1}^{n}p_t(x|X_i^*)} \\
    		& \leq \max_{i \in \{1,\dots,n\}} |v_t(x |X_i^*)|\\
    		& =  \max_{i \in \{1,\dots,n\}} \frac{|(\sigma_t - t)X_i^* + x|}{1-(1-\sigma_{\min})t}\\
    		& \leq  \max_{i \in \{1,\dots,n\}}\frac{|X_i^*| + |x|}{1-(1-\sigma_{\min})t}\\
    		& \leq \frac{\sqrt{d}(1+a)}{1-(1-\sigma_{\min})t}.
    	\end{align*}
    	For the Lipschitz bound, note that
    	\begin{align}
    		\nabla_x v_t^n(x) &=  \nabla_x \sum_{i = 1}^n \Big(\frac{\frac{\partial \sigma_t}{\partial t}}{\sigma_t} x-\mu_t(X_i)+ \frac{\partial \mu_t}{\partial t}(X_i)\Big)\frac{p_t(x|X_i)}{\sum_{j = 1}^n p_t(x|X_j)}\label{lipschitz_v_n} \\
    		& =  \nabla_x \frac{-(1-\sigma_{\min})}{\sigma_t} x - \nabla_x \frac{-(1-\sigma_{\min}) t}{\sigma_t} \frac{ \sum_{i = 1}^n X_i p_t(x|X_i) }{\sum_{j = 1}^n p_t(x|X_j)}+ \nabla_x  \frac{\sum_{i = 1}^n X_i p_t(x|X_i)}{\sum_{j = 1}^n p_t(x|X_j)} . \notag
    	\end{align}
    	Now for the partial derivative of the $\ell$-st coordinate function with respect to $x_k,$ $\ell, k \in \{1,\dots,d \},$ we get
    	\begin{align*}
    		&\frac{\partial }{\partial x_k}\frac{\sum_{i = 1}^n X_{i, \ell} p_t(x|X_i)}{\sum_{j = 1}^n p_t(x|X_j)}  \\&= \frac{\big(\sum_{i = 1}^n \big(-\frac{x_{k}-tX_{ik}}{\sigma_t^2} \big) X_{i, \ell} p_t(x|X_i)\big)(\sum_{j = 1}^n p_t(x|X_j))- \big( \sum_{i = 1}^n X_{i, \ell} p_t(x|X_i)\big)\big(\sum_{j = 1}^n \big(-\frac{x_{k}-tX_{jk}}{\sigma_t^2} \big)  p_t(x|X_j)\big)}{\big(\sum_{j = 1}^n p_t(x|X_j) \big)^2} \\
    		& = \frac{t}{\sigma_t^2} \Bigg(\frac{\sum_{i = 1}^n X_{ik} X_{i \ell} p_t(x|X_i)}{\sum_{j =1}^n p_t(x|X_i)}- \frac{\big(\sum_{i = 1}^n X_{ik}  p_t(x|X_i)\big)\big(\sum_{i = 1}^n X_{i \ell} p_t(x|X_i)\big)}{\big(\sum_{j =1}^n p_t(x|X_i)\big)^2} \Bigg).
    	\end{align*}
    	Since $\operatorname{supp}(p^*) \subset [-1,1]^d,$ we can bound 
    	\begin{align*}
    		\frac{\partial }{\partial x_k}\frac{\sum_{i = 1}^n X_{i, \ell} p_t(x|X_i)}{\sum_{j = 1}^n p_t(x|X_j)}  \leq \frac{2t}{\sigma^2_t}. 
    	\end{align*}
    	Using $t \in [0,1],$ $\sigma_{\min} \leq 1,$ we get for \eqref{lipschitz_v_n}
    	\begin{align*}
    		\nabla_x v_t^n(x) \leq \frac{1}{\sigma_t} I_d + \frac{2}{\sigma_t^3} J_d,
    	\end{align*}
    	where $I_d$ denotes the $d \times d$ identity matrix, $J_d$ denotes the  $d \times d$ matrix consisting of ones and $\leq$ denotes entry wise inequality. Using the mean value theorem, we obtain for $x,y \in \mathbb{R}^d$
    	\begin{align*}
    		|v_t^n(x) - v_t^n(y)|  & \leq \Big\| \frac{1}{\sigma_t} I_d + \frac{2}{\sigma_t^3} J_d\Big\|_2 | x-y|.
    	\end{align*}
    	As
    	\begin{align*}
    		\Big\| \frac{1}{\sigma_t} I_d + \frac{2}{\sigma_t^3} J_d\Big\|_2  &\leq \Big\| \frac{1}{\sigma_t} I_d \Big\|_2+ \Big\| \frac{2}{\sigma_t^3} J_d\Big\|_2 \\
    		& = \frac{1}{\sigma_t} + \frac{2d}{\sigma_t^3},
    	\end{align*}
    	we get the desired bound.
    \end{proof}
  
    	\begin{proof}[Proof of \Cref{lem:Lipschitz-constant}]
    	Due to the triangle inequality, the largest increase in Euclidean distance is achieved when two points move in exact opposite direction. Since $\mathcal{M}$ has reach $\tau$ and $r < \tau$, the extreme case of this happens when $x$ and $y$ satisfy $|x-y| = 2(\tau - r)$ and $|\pi(x)-\pi(y)| = 2 \tau$. This situation is sketched in \Cref{fig:skizze_lipschitzconst}. Note that the manifold does not need to "make a turn", the same reasoning applies to the bottleneck situation, for which we refer to \citet[Figure 3]{amari_estimatingreach}.

    	\begin{figure}
    		\centering
    		\includegraphics[width=0.35\linewidth]{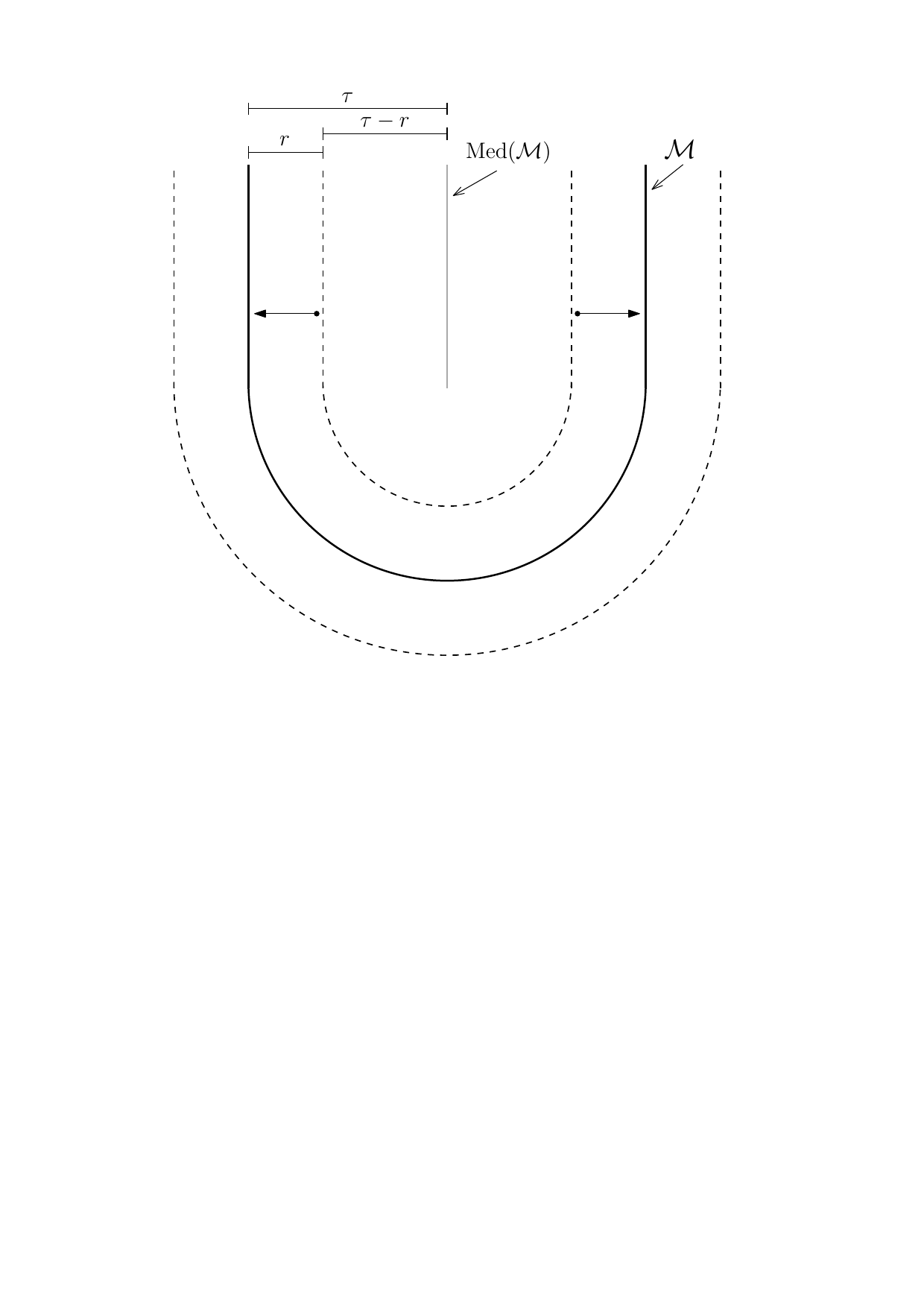}
    		\caption{Largest proportional displacement, black line is part of manifold $\mathcal{M}$, gray solid line is boundary of the reach environment, gray dotted lines are the boundaries of the $r$-tube around $\mathcal{M}$. Black dots are $x$ and $y$ respectively, the arrow symbolizes the maximal displacement.}
    		\label{fig:skizze_lipschitzconst}
    	\end{figure}

    	Thus for all $x, y \in U_r$
    	\begin{equation*}
    		|\pi(x)-\pi(y)| \leq \frac{1}{1-\frac{r}{\tau}}|x-y|. \qedhere
    	\end{equation*}
    \end{proof}

    	\begin{proof}[Proof of \Cref{lem:helper_lemma}]$ $
    	\begin{enumerate}
    		\item[(I)]Differentiating $g^{-1}(g(y)) = y$ leads to $Dg^{-1}(g(y)) Dg(y)= I_{d^{\prime}}$, which implies that $Dg^{-1}(g(y))$ is the pseudo-leftinverse of $Dg(y)$. Using the construction of the pseudoinverse via the singular value decomposition, we conclude that
    		\begin{equation*}\|Dg^{-1}(g(y)) \| = \frac{1}{\lambda_{\min}(Dg(y))} \leq \frac{1}{m}< 1.
    		\end{equation*}
    		Hence 
    		\begin{equation*}
    			\|A(y)\|\leq \|Dg^{-1}(g(y))\|\|P_{T_{g(y)}\mathcal{M}}\|\leq \frac{1}{m}.
    		\end{equation*}
    		
    		\item[(II)]
    		For any $h \in \mathbb{R}^{d^{\prime}}$, differentiating $Dg^{-1}(g(y))Dg(y)=I_{d^{\prime}}$ in direction $h$ yields
    		\begin{equation}\label{eq:ableitung_ableitung}
    			D(Dg^{-1}(g(y)))[h]Dg(y) + Dg^{-1}(g(y))D(Dg(y))[h] = 0,
    		\end{equation}
             where $D(D g(y))[h]  =\frac{d}{d t} D g(y+t h)\big|_{t=0}$ and $D(D g^{-1}(g(y)))[h]=\frac{d}{d t} D g^{-1}(g(y+t h))\big|_{t=0}$.
            
    		Fix $y \in \mathbb{R}^{d^{\prime}}$ and set $x = g(y)$. Differentiating $g(g^{-1}(x))=x$ for  yields
    		\begin{equation*}
    			Dg(g^{-1}(x))Dg^{-1}(x) = I_{T_{x}\mathcal{M}} \quad \Longleftrightarrow \quad Dg(y)Dg^{-1}(g(y)) = I_{T_{g(y)}\mathcal{M}}, 
    		\end{equation*}
    		where $I_{T_{x}\mathcal{M}}$ is the identity map on the tangent space. Using this right inverse of $Dg(y)$ and recalling that $P_{T_{g(y)}\mathcal{M}}$ is the orthogonal projection onto the tangent space, we obtain for \eqref{eq:ableitung_ableitung}, 
    		\begin{equation*}
    			D(Dg^{-1}(g(y)))[h]\Big|_{T_{g(y)}\mathcal{M}} = - Dg^{-1}(g(y))D(Dg(y))[h]Dg^{-1}(g(y)).
    		\end{equation*}
    		Taking the supremum over $h$ such that $|h| = 1$ leads to
    		\begin{equation*}
    			\|D(Dg^{-1}(g(y)))\| \leq \| Dg^{-1}(g(y))\|^2 \|D(Dg(y)) \| \leq \frac{M_2}{m^2}
    		\end{equation*}
    		
    		Combining \citet[Chapter II, Theorem 4.5, Corollary 4.6]{Stewart1990-lg} with \citet[Corollary 3]{Boissonnat2019-ya} and the fact that for $x \in [0, \frac{\pi}{2}]$ $\sin(x) \leq 3\sin\big(\frac{x}{2}\big)$ yields
    		\begin{equation*}
    			\|P_{T_{g(y+th)}\mathcal{M}}-P_{T_{g(y)}\mathcal{M}}\| \leq  \frac{3}{2 \tau}|g(y+th)- g(y)|.
    		\end{equation*}
    		Furthermore,
    		\begin{equation*}
    			\| D(P_{T_{g(y)}\mathcal{M}})[h]\| = \lim_{t \rightarrow 0} \frac{\| P_{T_{g(y+th)}\mathcal{M}} - P_{T_{g(y)}\mathcal{M}} \| }{|t|} \leq \lim_{t \rightarrow 0}\frac{3}{2\tau}\frac{|g(y+th)- g(y)|}{|t| } = \frac{3}{2\tau} |Dg(y)[h]|
    		\end{equation*}
    		
    		Next, we differentiate for $h \in \mathbb{R}^{d^{\prime}}$
    		\begin{equation*}
    			DA(y)[h] = D(Dg^{-1}(g(y)))[h]P_{T_{g(y)}\mathcal{M}} + Dg^{-1}(g(y)) D(P_{T_{g(y)}\mathcal{M}})[h].
    		\end{equation*}
    		Recall that the operator norm is defined as $\sup_{|h| = 1} \| DA(y)[h]\| = \sup_{|h| = 1}\sup_{|v|=1} | DA(y)[h]v| $. Then, inserting all constants from \Cref{ass:manifold}
    		\begin{equation*}
    			\| DA(y)\| \leq \|D(Dg^{-1}(g(y)))\| \|P_{T_{g(y)}\mathcal{M}} \|+ \|Dg^{-1}(g(y)) \| \|D(P_{T_{g(y)}\mathcal{M}})\| \leq \frac{M_2}{m^2} + \frac{3 M_1}{2m\tau }=c_0 < \frac{1}{r}.
    		\end{equation*}
    		
    		\item[(III)] Assume that $F_u(y_1)=F_u(y_2)$. Then \begin{equation*}
    			0 = F_u(y_1)-F_u(y_2) = y_1 -y_2 + (A(y_1)-A(y_2))u.
    		\end{equation*}
    		If $y_1 \neq y_2$
    		\begin{equation*}
    			| y_1 -y_2| = | (A(y_1)-A(y_2))u|\leq | A(y_1)-A(y_2)| |u| \leq c_0 r |y_1-y_2 | < |y_1-y_2 | .
    		\end{equation*}
    		The above inequality is false, which implies $y_1 = y_2$.
    		
    		\item[(IV)] Rearranging yields
    		\begin{equation*}
    			F_u(S_u(z)) = z \quad \Longleftrightarrow \quad S_u(z)+A(S_u(z))u = z \quad \Longleftrightarrow \quad S_u(z) = z - A(S_u(z)).
    		\end{equation*}
    		\item[(V)] In order to use the inverse function theorem, we first need to ensure that $D F_u$ has a nonzero determinant. 
    		For $h \in \mathbb{R}^{d^{\prime}}$, define
    		\begin{equation*}
    			D F_u(y)[h]=h+(D A(y)[h])u.
    		\end{equation*}
    		Set the linear operator $G(y, u)\colon \mathbb{R}^{d^{\prime}} \rightarrow \mathbb{R}^{d^{\prime}}$ with $ G(y, u)[h]=(D A(y)[h]) u$, 
    		then
    		\begin{equation*}
    			D F_u(y)=I_{d^{\prime}}+G(y, u), \quad \text{and} \quad \| G(y, u)\| \leq \| DA(y)\| |u|\leq c_0 r < 1.
    		\end{equation*}

    		For the $i$-th largest singular value $s_i$, using Weyl's singular value perturbation bound, for an English reference see e.g. \cite{stewart1990},
    		\begin{equation*}
    			|s_i(D F_u(y)) - s_i (I_{d^{\prime}})|\leq \| G(y, u)\|\leq \|DA(y)\| |u| \leq c_0 r < 1.
    		\end{equation*}
    		Since the absolute value of the determinant is the product of the singular values, we obtain
    		\begin{equation*}
    			(1-c_0 r)^{d^{\prime}}\leq |\operatorname{det}(DF_u(y))|\leq (1+c_0r)^{d^{\prime}}.
    		\end{equation*}
    		Moreover, $t \mapsto \operatorname{det}\left(I_{d^{\prime}}+t G(y, u)\right)$ is continuous, unequal to $0$ and equals 1 at $t=0$. Hence, $\operatorname{det}\left(D F_u(y)\right)>0$.
    		Since $\|DA(y)\||u| \leq c_0r <1$, a Neumann series yields
    		\begin{equation*}
    			\|(DF_u(y))^{-1}\| = \|(I_{d^{\prime }} -  (-G(y,u)))^{-1}\|
    			\leq (1 - \|G(y,u)\|)^{-1} \leq \frac{1}{1-c_0r}
    		\end{equation*}
    		Since $F_u$ is injective, we have by the inverse function theorem that
    		\begin{equation*}
    			DS_u(y)= (DF_u(S_u(y)))^{-1}, \quad \text{ thus} \quad \| DS_u(y) \|\leq \frac{1}{1-c_0r}.
    		\end{equation*}
    		\item[(VI)]  
    		First, we differentiate $S_u(z) = z- A(S_u(z))u$ to obtain for $h \in \mathbb{R}^{d^{\prime}}$
    		\begin{equation}\label{ableitung_umkehrfunktion}
    			DS_u(z)[h] = h - DA(S_u(z))[DS_u(z)[h]]u.
    		\end{equation}
    		Defining the operator $B(z,u) \coloneqq (DA(S_u(z)))[\cdot]u$ for $v \in \mathbb{R}^{d^{\prime}}$ and rearranging yields on a matrix level
    		\begin{equation*}
    			(I_{d^{\prime}} + B(z,u))DS_u(z) = I_{d^{\prime}} \quad \Longleftrightarrow \quad DS_u(z) = (I_{d^{\prime}} + B(z,u))^{-1},
    		\end{equation*}
    		where the invertibility follows from
    		\begin{equation*}
    			\| B(z,u)\| = \sup_{|v|=1}\|(DA(S_u(z))[v])u\|\leq c_0 |u|<1.
    		\end{equation*}
    		The inequality follows from property (II). Thus
    		\begin{equation*}
    			\operatorname{det}(DS_u(z)) = (\operatorname{det}(I+B(z,u)))^{-1}.
    		\end{equation*}
    		By the same singular value perturbation bound that was used in (V), we obtain
    		\begin{equation*}
    			|s_i(I+B(z,u) ) - s_i(I)|\leq \|B(z,u) \|\leq 
    			c_0 |u|. \end{equation*}
    		This implies
    		\begin{equation*}
    			(1-c_0 |u|)^{d^{\prime}} \leq |\operatorname{det}(I + B(z,u))|\leq (1+c_0 |u|)^{d^{\prime}}, 
    		\end{equation*}
    		which yields
    		\begin{equation*}
    			(1+c_0 |u|)^{-d^{\prime}} -1\leq |\operatorname{det}(DS_u(z))|-1 \leq (1-c_0 |u|)^{-d^{\prime}}-1.
    		\end{equation*}
    		We obtain
    		\begin{equation*}
    			||\operatorname{det}(S_u(z))|-1 |\leq \max\big( 1 - (1-c_0 |u|)^{-d^{\prime}}, (1+c_0|u|)^{-d^{\prime}}-1\big).
    		\end{equation*}
            Two Taylor expansions yield for $\xi \in [0, c_0 |u|] \subset [0, c_0 r]$
            \begin{align*}
                |(1+ \xi)^{-d^{\prime}}-1| &\leq \sup_{\zeta \in [0, c_0r]}d^{\prime}(1+ \zeta)^{-d^{\prime}-1}\xi\leq d^{\prime} \xi,\\
                         |(1- \xi)^{-d^{\prime}}-1| &\leq \sup_{\zeta \in [0, c_0r]}d^{\prime}(1- \zeta)^{-d^{\prime}-1}\xi\leq d^{\prime}(1-c_0r)^{d^{\prime} -1}\xi,
            \end{align*}
            which implies
            \begin{equation*}
                ||\operatorname{det}(S_u(z))|-1 |\leq d^{\prime}c_0(1-c_0r)^{d^{\prime} -1}|u|.
            \end{equation*}
    	\end{enumerate}
        Since $\sup _y\|D A(y)\||u|<1$, $F_u$ is surjective. Additionally, we showed that $S_u$ exists and from \eqref{ableitung_umkehrfunktion} and the properties of $g$ we can conclude that the derivative of $S_u$ is continuous, we obtain that $F_u$ is a diffeomorphism. 
    \end{proof}
	
	\appendix
	\section{Appendix}\label{appendix}
   \begin{theorem}\label{thm:kde_rate_nonsmooth}
	Assume $p^* \in B_{1, \infty}^{\alpha}(\Lambda, \mathbb{R}^{d}), \alpha \in (0,1], \Lambda >0$. Further assume $K$ is a non negative, $d$-dimensional kernel such that $\int  K_{\sigma_{\min}}(y)y \; \mathrm{d}y = 0 $,
	\begin{align*}
		\int  \max (|u|^2 , |u|^{d+4} )   K^2(u)   \;\mathrm{d}u < \infty &\quad \text{and} \quad \int  |x|^{d+4} p^*(x) \;  \mathrm{d}x < \infty.
	\end{align*}
	If $\sigma_{\min} \leq 1$, then there are $C_1$ and $C_2$ such that for any $\tilde{v}\colon [0,1] \times\mathbb{R}^d \rightarrow \mathbb{R}^d$ and $v$ from \eqref{eq:def_vt},
	\begin{equation*}
		\mathbb{E}\big[	\mathsf{W}_1(\mathbb{P}^*, \mathbb{P}^{\psi^n_1(Z)})  \big] \leq C_1 \sigma_{\min}^{1+\alpha} + \frac{C_2}{\sqrt{n\sigma_{\min}^d}}.
	\end{equation*}
	
\end{theorem}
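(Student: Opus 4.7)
The plan is to follow the bias/variance decomposition used in the proof of \Cref{thm:kde_rate_smooth}, replacing the smoothness-based control of the stochastic error with a moment-based $L^2$ argument that accommodates kernels without differentiability. First I would decompose
\[
\mathsf{W}_1(\mathbb{P}^*,\mathbb{P}^{\psi^n_1(Z)}) \leq \sup_{f\in\mathrm{Lip}(1)}\int f(z)(p^*-K_{\sigma_{\min}}\ast p^*)(z)\,\mathrm{d}z + \sup_{f\in\mathrm{Lip}(1)}\int f(z)(K_{\sigma_{\min}}\ast p^*-K_{\sigma_{\min}}\ast\mu_n)(z)\,\mathrm{d}z,
\]
with $\mu_n\coloneqq n^{-1}\sum_{i=1}^n\delta_{X_i^*}$, and handle each term separately. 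The bias (first) term would be bounded exactly as in \Cref{thm:kde_rate_smooth}, yielding $C_1\sigma_{\min}^{1+\alpha}$; the only ingredients are $p^*\in B^{\alpha}_{1,\infty}$, $\int yK_{\sigma_{\min}}(y)\,\mathrm{d}y=0$, and $\int|u|^{1+\alpha}K(u)\,\mathrm{d}u<\infty$ for $\alpha\in(0,1]$, the last of which follows from $\int\max(|u|^2,|u|^{d+4})K^2(u)\,\mathrm{d}u<\infty$ together with $\int K=1$.

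The heart of the argument is the stochastic (second) term, where the smoothness argument of \Cref{thm:kde_rate_smooth} is unavailable. Since constants integrate to zero against the signed measure $K_{\sigma_{\min}}\ast(p^*-\mu_n)$, the supremum may be restricted to $f\in\mathrm{Lip}(1)$ with $f(0)=0$, and thus $|f(z)|\leq|z|$. Writing $\hat p_n\coloneqq K_{\sigma_{\min}}\ast\mu_n$ and $\bar p\coloneqq K_{\sigma_{\min}}\ast p^*$, taking expectations, and invoking Jensen's inequality, the stochastic term is dominated by $\int|z|\sqrt{\mathrm{Var}(\hat p_n(z))}\,\mathrm{d}z$. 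I would then apply a weighted Cauchy--Schwarz inequality with polynomial weight,
\[
\int|z|\sqrt{\mathrm{Var}(\hat p_n(z))}\,\mathrm{d}z \leq \Bigl(\int\frac{|z|^2}{(1+|z|)^{d+4}}\,\mathrm{d}z\Bigr)^{1/2}\Bigl(\int(1+|z|)^{d+4}\,\mathrm{Var}(\hat p_n(z))\,\mathrm{d}z\Bigr)^{1/2},
\]
in which the first factor is a finite constant since the integrand decays like $|z|^{-(d+2)}$ at infinity.

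The second factor would be reduced using the pointwise variance bound $\mathrm{Var}(\hat p_n(z))\leq n^{-1}\int K_{\sigma_{\min}}(z-y)^2p^*(y)\,\mathrm{d}y$, Fubini's theorem, the substitution $u=(z-y)/\sigma_{\min}$, and the elementary inequality $(1+|y+\sigma_{\min}u|)^{d+4}\lesssim(1+|y|)^{d+4}+\sigma_{\min}^{d+4}|u|^{d+4}$ to a sum of terms of the form
\[
\frac{C}{n\sigma_{\min}^d}\int(1+|y|)^{d+4}p^*(y)\,\mathrm{d}y\int K^2(u)\,\mathrm{d}u + \frac{C\sigma_{\min}^{4}}{n}\int|u|^{d+4}K^2(u)\,\mathrm{d}u,
\]
which is finite under the stated moment hypotheses and of order $(n\sigma_{\min}^d)^{-1}$ (using $\sigma_{\min}\leq1$). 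Taking the square root yields the $C_2/\sqrt{n\sigma_{\min}^d}$ contribution.

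The main obstacle will be the calibration of the Cauchy--Schwarz weight: the exponent must simultaneously make $\int|z|^2(1+|z|)^{-k}\,\mathrm{d}z$ convergent (which needs $k>d+2$) and keep the weighted $L^2$-norm of the variance finite under matching moment conditions. The choice $k=d+4$ aligns exactly with the stated $(d+4)$-th moment hypotheses on $p^*$ and $K^2$, while the gap between $d+2$ and $d+4$ provides integrability with a safe margin. The assumption $\sigma_{\min}\leq1$ is what allows the $\sigma_{\min}^{4}/n$ contribution to be absorbed into the dominant $1/(n\sigma_{\min}^d)$ term.
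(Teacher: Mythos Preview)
Your proposal is correct and follows essentially the same route as the paper: the same bias/variance split, the same reduction of the stochastic term via $|f(z)|\le |z|$ (the paper cites \cite[Theorem~6.15]{Villani2008} for this step), and the same weighted Cauchy--Schwarz with a polynomial weight $(1+|z|)^{k}$ followed by the pointwise variance bound and the substitution $z=y+\sigma_{\min}u$. The only cosmetic difference is that you place the factor $|z|^2$ on the constant side of the Cauchy--Schwarz split while the paper keeps it on the variance side (choosing $2\rho=d+2$ so that the combined weight $|z|^2(1+|z|)^{2\rho}$ matches your $(1+|z|)^{d+4}$); either way the required moment exponent is $d+4$, in line with the hypotheses.
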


\begin{proof}   
 As in \Cref{thm:kde_rate_smooth},
	\begin{align}
		\mathsf{W}_1(\mathbb{P}^*, \mathbb{P}^{\psi^n_1(Z)}) &= \sup_{f \in \mathrm{Lip}(1)}  \int f(x)(p^*(z) - (K_{\sigma_{\min}} \ast \mu_n)(z)) \; \mathrm{d}z \notag \\
		& \leq \sup_{f \in \mathrm{Lip}(1)}  \int f(z)(p^*(z) - (K_{\sigma_{\min}} \ast p^*)(z)) \; \mathrm{d}z\notag \\ & \quad +\sup_{f \in \mathrm{Lip}(1)}  \int f(z)((K_{\sigma_{\min}} \ast p^*)(z)- (K_{\sigma_{\min}} \ast \mu_n)(z)) \; \mathrm{d}z. \label{dreiecksugl_kernschaetzer_wasserstein}
	\end{align}
	For the first term, we obtain as in \Cref{thm:kde_rate_smooth}
	\begin{align*}
		\sup_{f \in \mathrm{Lip}(1)}  \int f(z)(p^*(z) - (K_{\sigma_{\min}} \ast p^*)(z)) \; \mathrm{d}z \leq   C_1\sigma_{\min}^{1+\alpha} .
	\end{align*}
	\underline{Second term:}	
	To bound the expectation of  \eqref{dreiecksugl_kernschaetzer_wasserstein} we first use that by \cite[Theorem 6.15]{Villani2008}
	\begin{equation*}
		\sup_{f \in \mathrm{Lip}(1)}  \int f(z)((K_{\sigma_{\min}} \ast p^*)(z)- (K_{\sigma_{\min}} \ast \mu_n)(z)) \; \mathrm{d}z \leq    \int |z||(K_{\sigma_{\min}} \ast p^*)(z)- (K_{\sigma_{\min}} \ast \mu_n)(z)| \; \mathrm{d}z.
	\end{equation*}
	Then for $\rho > \frac{d}{2}$ \begin{align}
		\mathbb{E}[ \int &|z||(K_{\sigma_{\min}} \ast p^*)(z) - (K_{\sigma_{\min}} \ast \mu_n)(z)| \; \mathrm{d}z] \notag \\ & = \int (1+ |z|)^{-\rho}|z| (1+ |z|)^{\rho}  \mathbb{E}\Big[ \Big|\frac{1}{n}\sum_{i =1}^{n}K_{\sigma_{\min}}(z-X_i) - \mathbb{E}[K_{\sigma_{\min}}(z-X_i)] \Big| \Big]\; \mathrm{d}z \notag \\
		& \leq \frac{1}{n} \sqrt{\int (1+|z|)^{-2\rho}}\; \mathrm{d}z \sqrt{\int |z|^2 (1+|z|)^{2\rho} \mathbb{E}\Big[ \Big(\sum_{i =1}^{n}K_{\sigma_{\min}}(z-X_i) - \mathbb{E}[K_{\sigma_{\min}}(z-X_i)] \Big)^2\Big] \; \mathrm{d}z},\label{eq:TV_jensen}
				\end{align}
	where the equality holds by Fubini's theorem and the inequality holds by the Jensen inequality together with the Cauchy-Schwarz inequality.\\
	Next we bound the expectation. Since $\mathbb{E}[K_{\sigma_{\min}}(z-X_i) - \mathbb{E}[K_{\sigma_{\min}}(z-X_i)]] = 0$ and $X_1,\dots,X_n$ are i.i.d. we have that
	\begin{align} 
		\mathbb{E}\Big[ \Big(\sum_{i =1}^{n}K_{\sigma_{\min}}(z-X_i) - \mathbb{E}[K_{\sigma_{\min}}(z-X_i)] \Big)^2\Big] & = n \operatorname{Var}\big(  K_{\sigma_{\min}}(z-X_1) \big)\notag \\
		& \leq n \mathbb{E}[  (K_{\sigma_{\min}}(z-X_1) )^2]. \notag
	\end{align}

Therefore
\begin{align*}
    \int |z|^2 (1+|z|)^{2\rho} &\mathbb{E}\Big[ \Big(\sum_{i =1}^{n}K_{\sigma_{\min}}(z-X_i) - \mathbb{E}[K_{\sigma_{\min}}(z-X_i)] \Big)^2\Big] \; \mathrm{d}z\\
    & = n \int \mathbb{E}\Big[ |X_1 + \sigma_{\min}y|^2 \big(1+ |X_1 + \sigma_{\min}y|^{2 \rho}\big)^2 K^2(y)\Big] \; \mathrm{d}y\\
    & \lesssim \frac{n}{\sigma_{\min}^d} \int \mathbb{E}\big[ |X_1 + \sigma_{\min}y|^2 + |X_1 + \sigma_{\min}y|^{2 \rho+2} \big] K^2(y)\; \mathrm{d}y\\
    & \lesssim  \frac{n}{\sigma_{\min}^d} \int \big(\mathbb{E}\big[ |X_1|^{2 \rho +2}\big]+ \sigma_{\min}^2 |y|^2 + \sigma_{\min}^{2\rho +2} |y|^{2\rho +2} \big) K^2(y)\; \mathrm{d}y\\
    & \lesssim \frac{n}{\sigma_{\min}^d}(1 + \sigma^2_{\min} + \sigma_{\min}^{2 \rho +2}) \int \max (|y|, |y|^{2\rho + 2}) K^2(y) \; \mathrm{d}y.
\end{align*}
    Using that $\sigma_{\min} \leq 1$ we can bound \eqref{eq:TV_jensen} for $\rho > \frac{d}{2}$ further by
\begin{align*}
    \mathbb{E}[ \int |z||(K_{\sigma_{\min}} \ast p^*)(z) - (K_{\sigma_{\min}} \ast \mu_n)(z)| \; \mathrm{d}z] & \lesssim \frac{1}{\sqrt{n \sigma_{\min}^d}} \int \max (|y|, |y|^{2\rho + 2}) K^2(y) \; \mathrm{d}y.
\end{align*}
	For $p = \frac{d}{2} +1$ the last integral is finite and we thus obtain
	\begin{align*}
		\mathbb{E}[ \int |z||(K_{\sigma_{\min}} \ast p^*)(z) &- (K_{\sigma_{\min}} \ast \mu_n)(z)| \; \mathrm{d}z] \leq \frac{C_2}{\sqrt{n\sigma_{\min}^d}} .
	\end{align*}
 In the end, we get for 
	\eqref{dreiecksugl_kernschaetzer_wasserstein} that
	\begin{equation*}
		\mathbb{E}\big[	\mathsf{W}_1(\mathbb{P}^*, \mathbb{P}^{\psi^n_1(Z)})  \big]\leq C_1 \sigma_{\min}^{1+\alpha} + \frac{C_2\max(1,{\sigma_{\min}},\sigma_{\min}^{p+1})}{\sqrt{n\sigma_{\min}^d}}.\qedhere
	\end{equation*}
	
\end{proof}
For comprehensiveness, we translate the following result from German including the proof.
	\begin{lemma}\cite[Satz 4.4.3]{Gabcke1979} \label{upper_incomplete_gamma_function}
		Let $a \geq 1$ and $x>a.$ The upper incomplete gamma function 
		\begin{equation}\label{eq:incompl_gamma}
		\Gamma(a, x):=\int_x^{\infty} e^{-v} v^{a-1} \; \mathrm{d} v 
		\end{equation}
		can be bounded by
		\begin{equation*}
		\Gamma(a, x) \leq a e^{-x} x^{a-1}.
		\end{equation*}
	\end{lemma}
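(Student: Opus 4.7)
The plan is to obtain the bound by rewriting the integrand as the derivative of an explicit antiderivative up to a correction factor, then bounding the correction factor by $a$. Start from the identity
\begin{equation*}
\frac{d}{dv}\bigl(-v^{a-1}e^{-v}\bigr) = v^{a-2}e^{-v}\bigl(v-(a-1)\bigr) = v^{a-1}e^{-v}\Bigl(1-\tfrac{a-1}{v}\Bigr),
\end{equation*}
which rearranges to
\begin{equation*}
v^{a-1}e^{-v} = \frac{1}{1-(a-1)/v}\cdot\frac{d}{dv}\bigl(-v^{a-1}e^{-v}\bigr).
\end{equation*}

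Next I would use the hypothesis $x>a\ge 1$ to control the factor $1/(1-(a-1)/v)$ on the interval of integration. For every $v\ge x$ one has $1-(a-1)/v \ge 1-(a-1)/x = (x-a+1)/x$, so
\begin{equation*}
\frac{1}{1-(a-1)/v} \le \frac{x}{x-a+1}.
\end{equation*}
Since this upper bound is a constant with respect to $v$, it can be pulled out of the integral in \eqref{eq:incompl_gamma}, giving
\begin{equation*}
\Gamma(a,x) \le \frac{x}{x-a+1}\int_x^{\infty}\frac{d}{dv}\bigl(-v^{a-1}e^{-v}\bigr)\,\mathrm{d}v = \frac{x}{x-a+1}\,x^{a-1}e^{-x},
\end{equation*}
where the integral is evaluated using the fact that $v^{a-1}e^{-v}\to 0$ as $v\to\infty$.

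Finally, I would verify the elementary inequality $x/(x-a+1)\le a$ for $x\ge a\ge 1$. Rearranging, this is equivalent to $(a-1)(x-a)\ge 0$, which is immediate under the assumptions. Combining the two estimates yields $\Gamma(a,x)\le a\,e^{-x}x^{a-1}$. There is no substantial obstacle in this argument; the only subtlety is noticing the algebraic identity that exhibits $v^{a-1}e^{-v}$ as a derivative up to the factor $1/(1-(a-1)/v)$, after which the bound reduces to two monotonicity checks.
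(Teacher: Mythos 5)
Your proof is correct, and it takes a genuinely different route than the paper's. The paper (following Gabcke) first substitutes $v = 1/w$, shows the transformed integrand $h(w) = e^{-1/w}w^{-a-1}$ is increasing on $(0,1/x]$ when $x > a+1$ to get $\Gamma(a,x) \le e^{-x}x^{a}$, and then uses the integration-by-parts recursion $\Gamma(a,x) = e^{-x}x^{a-1} + (a-1)\Gamma(a-1,x)$ to combine the two facts. You instead work directly with the original integrand: you factor $v^{a-1}e^{-v}$ as $\bigl(1-(a-1)/v\bigr)^{-1}$ times the derivative $\tfrac{d}{dv}\bigl(-v^{a-1}e^{-v}\bigr)$, which is nonnegative on $[x,\infty)$ since $v \ge x > a-1$; the correction factor is decreasing in $v$ and hence bounded by its value $x/(x-a+1)$ at $v=x$; integrating the exact derivative then gives $\Gamma(a,x) \le \tfrac{x}{x-a+1}\,x^{a-1}e^{-x}$, and the algebraic inequality $(a-1)(x-a)\ge 0$ closes the gap. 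Your argument avoids both the substitution and the recursive reduction to $\Gamma(a-1,x)$, is fully self-contained, and in fact yields the slightly sharper intermediate bound $\tfrac{x}{x-a+1}x^{a-1}e^{-x}$, which is strictly smaller than $a\,e^{-x}x^{a-1}$ when $a>1$ and $x>a$. The paper's route, by contrast, sets up the recursion $\Gamma(a,x) = e^{-x}x^{a-1} + (a-1)\Gamma(a-1,x)$, which could be iterated if one wanted a full asymptotic expansion; for the one-step bound needed here, the two are of comparable length and yours is arguably the more elementary.
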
 
	\begin{proof}
	Let $a \geq 0$. Substituting $v=1 / w$ in \eqref{eq:incompl_gamma} leads to
		\begin{equation*}\label{eq:incompl_gamma_subs}
		\Gamma(a, x)=\int_0^{1 / x} e^{-1 / w} w^{-a-1} d w .
	\end{equation*}	
		Set
		\begin{equation*}
		h(w):=e^{-1 / w} w^{-a-1},
		\end{equation*}
		the first derivative of $h$ is
		\begin{equation*}
		h^{\prime}(w)=e^{-1 / w} w^{-a-2}\left[\frac{1}{w}-(a+1)\right].
		\end{equation*}
	If $x>a+1$, then $h^{\prime}(w)>0$ for all $w\in [0, \frac{1}{x}]$. Hence $h$ is strictly increasing and 
	\begin{equation}\label{eq:incompl_gamma_bound_1}
		\Gamma(a, x) \leq \frac{1}{x} h\left(\frac{1}{x}\right)=e^{-x} x^a \quad(a \geq 0, x>a+1) .
			\end{equation}		
		For $a \geq 1$ integration by parts in \eqref{eq:incompl_gamma} leads to
		\begin{equation*}
		\Gamma(a, x)=e^{-x} x^{a-1}+(a-1) \Gamma(a-1, x) .
		\end{equation*}
		Using \eqref{eq:incompl_gamma_bound_1} with $a-1$ instead of $a,$ we get
		\begin{equation*}
		\Gamma(a, x) \leq e^{-x} x^{a-1}+(a-1) e^{-x} x^{a-1}=a e^{-x} x^{a-1} \quad(a \geq 1, x>a). \qedhere
		\end{equation*}
	\end{proof}
\newpage
\textbf{\large Numerical illustration with smaller networks}

\begin{figure}[h]
    \centering
    
  \begin{subfigure}[b]{0.49\textwidth}
         \centering
    \includegraphics[width=\linewidth]{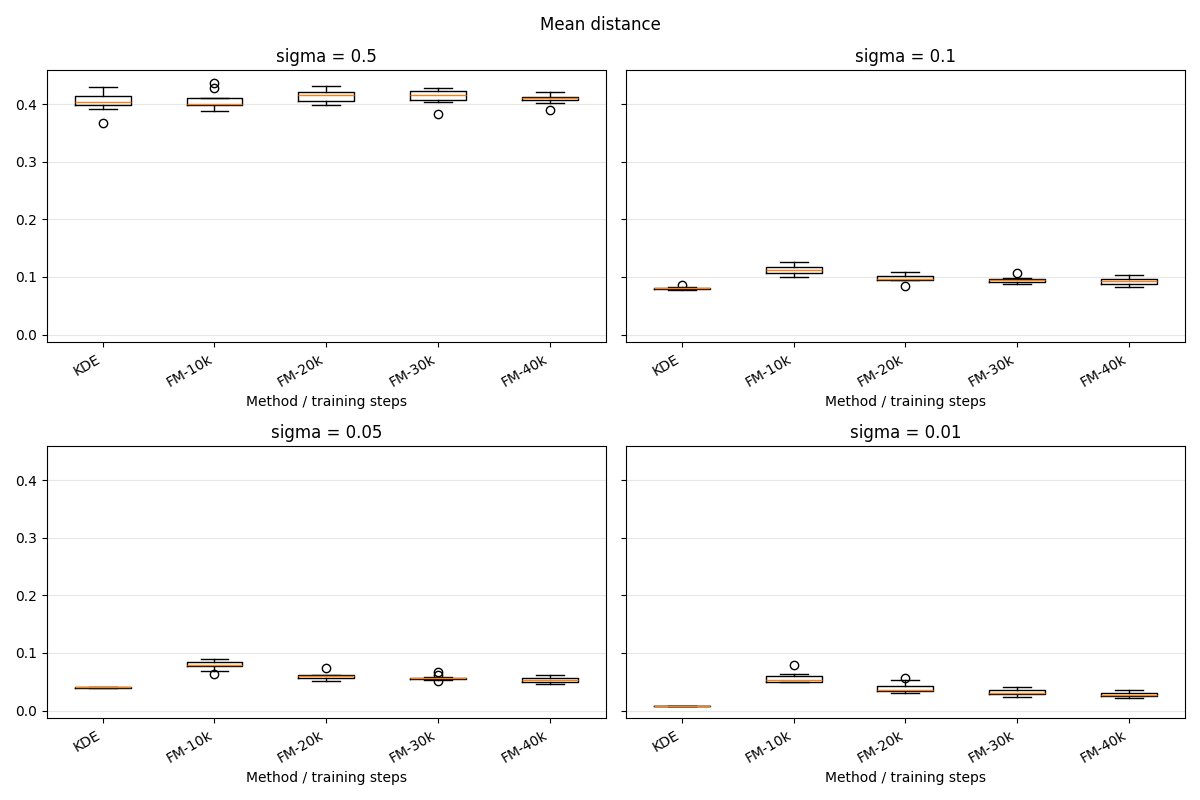}
    \caption{Mean distance of the generated samples to the manifold.}
    \end{subfigure}
    \hfill
    \begin{subfigure}[b]{0.49\textwidth}
        \centering
    \includegraphics[width=\linewidth]{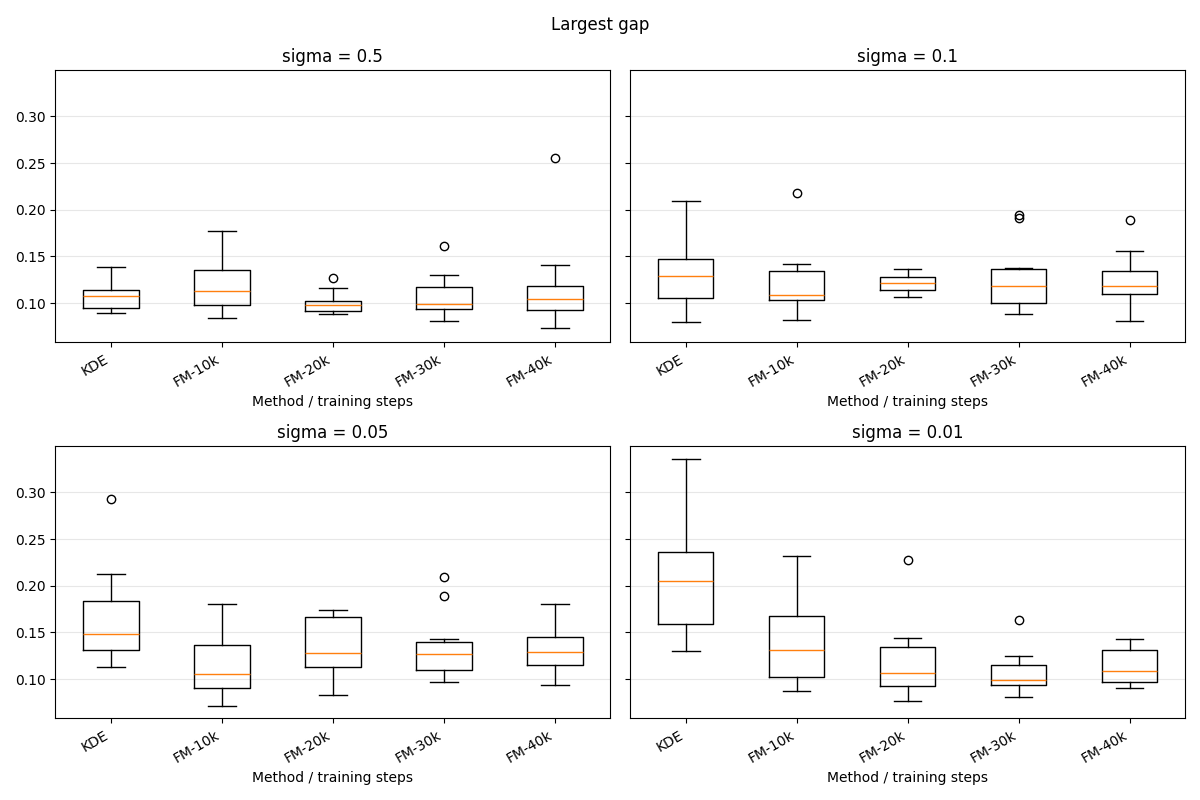}
    \caption{Largest gap on the manifold between the projections of the generated samples.}
    \end{subfigure}
    
    \vspace{0.5cm}

    \begin{subfigure}[b]{0.49\textwidth}
         \centering
    \includegraphics[width=\linewidth]{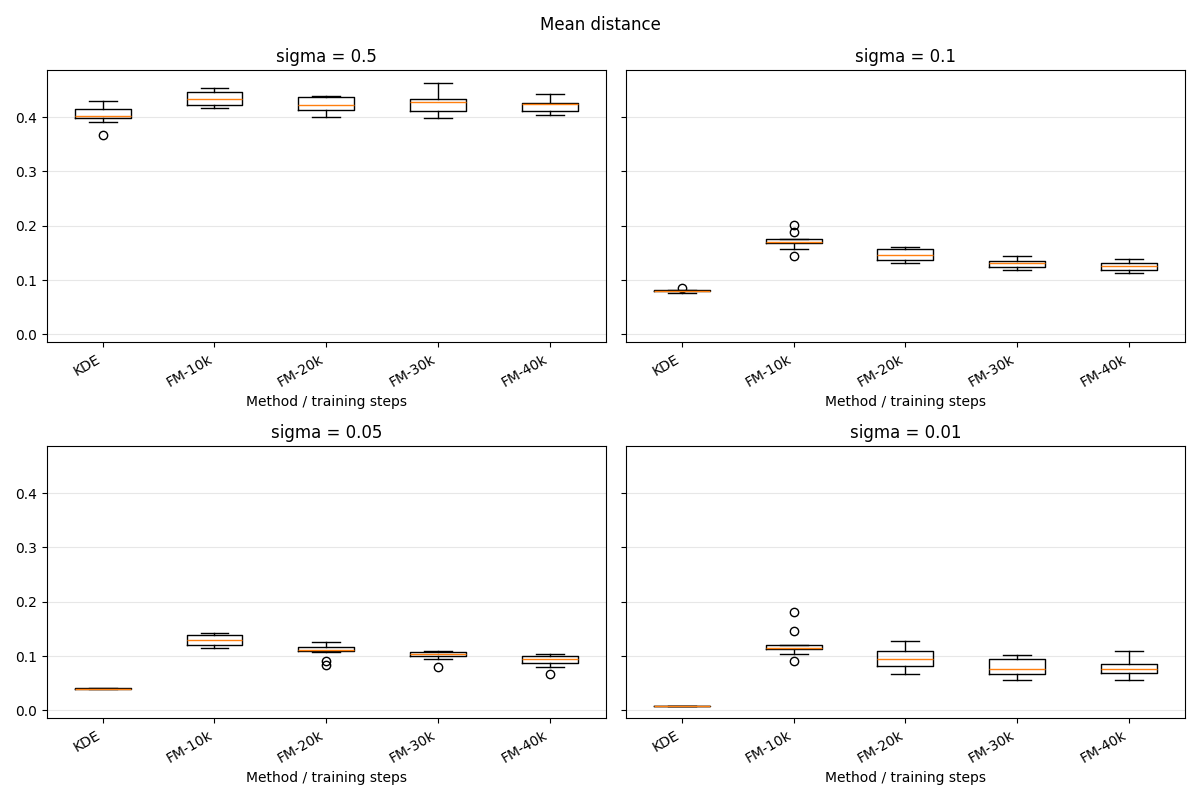}
    \caption{Mean distance of the generated samples to the manifold.}
    \end{subfigure}
    \hfill
    \begin{subfigure}[b]{0.49\textwidth}
        \centering
    \includegraphics[width=\linewidth]{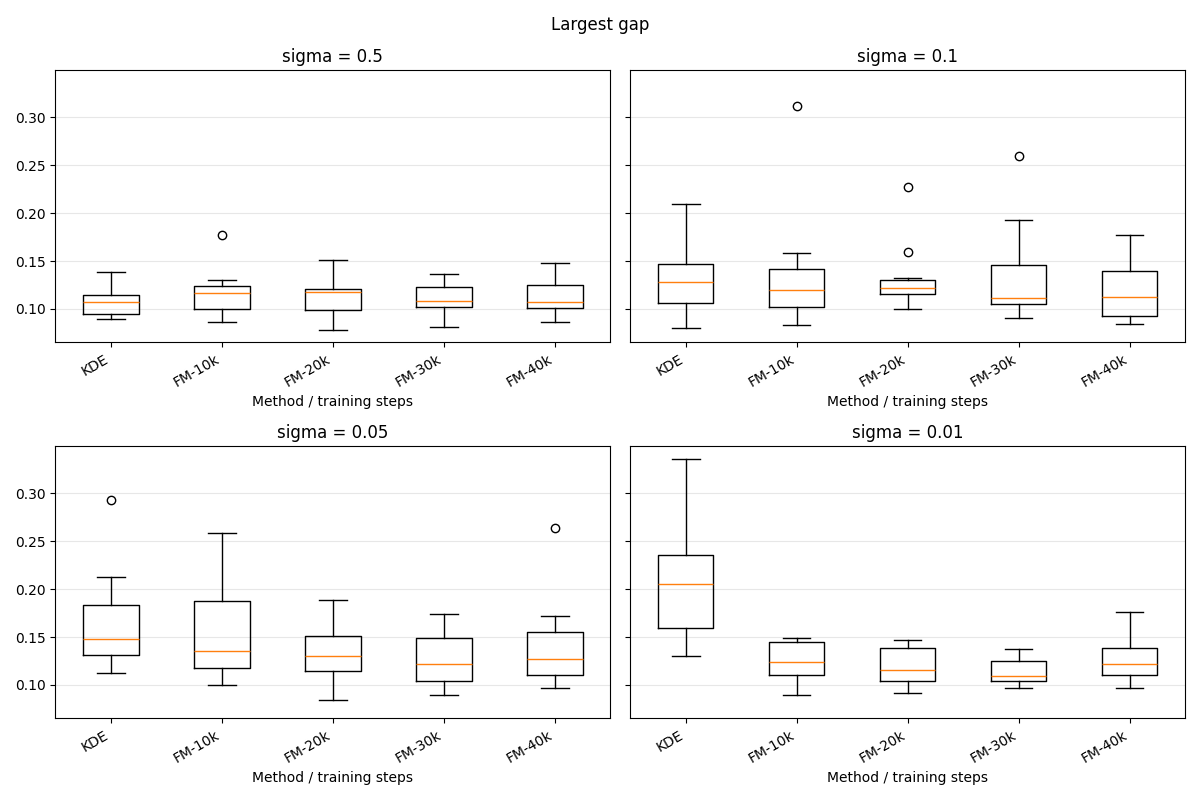}
    \caption{Largest gap on the manifold between the projections of the generated samples. }
    \end{subfigure}
    
    \caption{Comparison between the Flow Matching estimator and the KDE in terms of distance to the manifold and the largest gap of the projection estimator on the manifold. Top row uses a three hidden layer SeLU net of width $32$ and bottom row uses a three hidden layer SeLU net of width $8$.}
    \label{fig:four_images}
\end{figure}

	\newpage
	\bibliography{literatur}

\begin{thebibliography}{81}
\providecommand{\natexlab}[1]{#1}
\providecommand{\url}[1]{\texttt{#1}}
\expandafter\ifx\csname urlstyle\endcsname\relax
  \providecommand{\doi}[1]{doi: #1}\else
  \providecommand{\doi}{doi: \begingroup \urlstyle{rm}\Url}\fi

\bibitem[Aamari et~al.(2019)Aamari, Kim, Chazal, Michel, Rinaldo, and
  Wasserman]{amari_estimatingreach}
Eddie Aamari, Jisu Kim, Fr{\'e}d{\'e}ric Chazal, Bertrand Michel, Alessandro
  Rinaldo, and Larry Wasserman.
\newblock {Estimating the reach of a manifold}.
\newblock \emph{Electronic Journal of Statistics}, 13\penalty0 (1):\penalty0
  1359 -- 1399, 2019.
\newblock \doi{10.1214/19-EJS1551}.
\newblock URL \url{https://doi.org/10.1214/19-EJS1551}.

\bibitem[Albergo and {Vanden-Eijnden}(2023)]{albergo2022}
Michael~Samuel Albergo and Eric {Vanden-Eijnden}.
\newblock Building normalizing flows with stochastic interpolants.
\newblock In \emph{The {{Eleventh International Conference}} on {{Learning
  Representations}}}, 2023.

\bibitem[Atanackovic et~al.(2024)Atanackovic, Zhang, Amos, Blanchette, Lee,
  Bengio, Tong, and Neklyudov]{atanackovic2024}
Lazar Atanackovic, Xi~Zhang, Brandon Amos, Mathieu Blanchette, Leo~J. Lee,
  Yoshua Bengio, Alexander Tong, and Kirill Neklyudov.
\newblock Meta flow matching: Integrating vector fields on the wasserstein
  manifold.
\newblock \emph{arXiv preprint arXiv:2408.14608}, 2024.

\bibitem[Azangulov et~al.(2024)Azangulov, Deligiannidis, and
  Rousseau]{azangulov2024}
Iskander Azangulov, George Deligiannidis, and Judith Rousseau.
\newblock Convergence of diffusion models under the manifold hypothesis in
  high-dimensions.
\newblock \emph{arXiv preprint arXiv:2409.18804}, 2024.

\bibitem[Benton et~al.(2024)Benton, Deligiannidis, and Doucet]{benton2024}
Joe Benton, George Deligiannidis, and Arnaud Doucet.
\newblock Error bounds for flow matching methods.
\newblock \emph{Transactions on Machine Learning Research}, 2024.

\bibitem[Berenfeld and Hoffmann(2021)]{Berenfeld2021}
Cl{\'e}ment Berenfeld and Marc Hoffmann.
\newblock {Density estimation on an unknown submanifold}.
\newblock \emph{Electronic Journal of Statistics}, 15\penalty0 (1):\penalty0
  2179 -- 2223, 2021.

\bibitem[Berry and Sauer(2017)]{berry2017}
Tyrus Berry and Timothy Sauer.
\newblock Density estimation on manifolds with boundary.
\newblock \emph{Computational Statistics \& Data Analysis}, 107:\penalty0
  1--17, 2017.

\bibitem[Bieringer et~al.(2024)Bieringer, Kasieczka, Kieseler, and
  Trabs]{bieringer2024classifier}
Sebastian Bieringer, Gregor Kasieczka, Jan Kieseler, and Mathias Trabs.
\newblock Classifier surrogates: sharing {AI}-based searches with the world.
\newblock \emph{The European Physical Journal C}, 84\penalty0 (9):\penalty0
  972, 2024.

\bibitem[Boissard and Gouic(2014)]{Boissard2014}
Emmanuel Boissard and Thibaut~Le Gouic.
\newblock {On the mean speed of convergence of empirical and occupation
  measures in Wasserstein distance}.
\newblock \emph{Annales de l'Institut Henri Poincaré, Probabilités et
  Statistiques}, 50\penalty0 (2):\penalty0 539 -- 563, 2014.

\bibitem[Boissonnat et~al.(2019)Boissonnat, Lieutier, and
  Wintraecken]{Boissonnat2019-ya}
Jean-Daniel Boissonnat, Andr{\'e} Lieutier, and Mathijs Wintraecken.
\newblock The reach, metric distortion, geodesic convexity and the variation of
  tangent spaces.
\newblock \emph{ournal of Applied and Computational Topology}, 3\penalty0
  (1):\penalty0 29--58, July 2019.

\bibitem[Bose et~al.(2024)Bose, Akhound-Sadegh, Huguet, Fatras, Rector-Brooks,
  Liu, Nica, Korablyov, Bronstein, and Tong]{bose2024}
Joey Bose, Tara Akhound-Sadegh, Guillaume Huguet, Kilian Fatras, Jarrid
  Rector-Brooks, Cheng-Hao Liu, Andrei~Cristian Nica, Maksym Korablyov,
  Michael~M. Bronstein, and Alexander Tong.
\newblock {SE}(3)-stochastic flow matching for protein backbone generation.
\newblock In \emph{The Twelfth International Conference on Learning
  Representations}, 2024.

\bibitem[Cao et~al.(2024)Cao, Tan, Gao, Xu, Chen, Heng, and Li]{cao2024}
Hanqun Cao, Cheng Tan, Zhangyang Gao, Yilun Xu, Guangyong Chen, Pheng-Ann Heng,
  and Stan~Z Li.
\newblock A survey on generative diffusion models.
\newblock \emph{IEEE Transactions on Knowledge and Data Engineering}, 2024.

\bibitem[Chen et~al.(2023{\natexlab{a}})Chen, Lee, and Lu]{chen2023a}
Hongrui Chen, Holden Lee, and Jianfeng Lu.
\newblock Improved analysis of score-based generative modeling: User-friendly
  bounds under minimal smoothness assumptions.
\newblock In \emph{International Conference on Machine Learning}, pages
  4735--4763. PMLR, 2023{\natexlab{a}}.

\bibitem[Chen et~al.(2023{\natexlab{b}})Chen, Huang, Zhao, and Wang]{chen2023b}
Minshuo Chen, Kaixuan Huang, Tuo Zhao, and Mengdi Wang.
\newblock Score approximation, estimation and distribution recovery of
  diffusion models on low-dimensional data.
\newblock In \emph{International Conference on Machine Learning}, pages
  4672--4712. PMLR, 2023{\natexlab{b}}.

\bibitem[Chen and Lipman(2024)]{chen2024}
Ricky T.~Q. Chen and Yaron Lipman.
\newblock Flow matching on general geometries.
\newblock In \emph{The Twelfth International Conference on Learning
  Representations}, 2024.

\bibitem[Chen et~al.(2018)Chen, Rubanova, Bettencourt, and Duvenaud]{chen2018}
Ricky~TQ Chen, Yulia Rubanova, Jesse Bettencourt, and David~K Duvenaud.
\newblock Neural ordinary differential equations.
\newblock \emph{Advances in neural information processing systems}, 31, 2018.

\bibitem[Chen et~al.(2023{\natexlab{c}})Chen, Chewi, Li, Li, Salim, and
  Zhang]{chen2023c}
Sitan Chen, Sinho Chewi, Jerry Li, Yuanzhi Li, Adil Salim, and Anru Zhang.
\newblock Sampling is as easy as learning the score: theory for diffusion
  models with minimal data assumptions.
\newblock In \emph{The Eleventh International Conference on Learning
  Representations}, 2023{\natexlab{c}}.

\bibitem[Devroye and Lugosi(2001)]{Devroye2001}
Luc Devroye and Gábor Lugosi.
\newblock \emph{Combinatorial Methods in Density Estimation}.
\newblock Springer New York, 2001.

\bibitem[Dinh et~al.(2015)Dinh, Krueger, and Bengio]{dinh2015}
Laurent Dinh, David Krueger, and Yoshua Bengio.
\newblock {NICE:} non-linear independent components estimation.
\newblock In \emph{International Conference on Learning Representations, 2015,
  Workshop Track Proceedings}, 2015.

\bibitem[Divol(2022)]{divol2022}
Vincent Divol.
\newblock Measure estimation on manifolds: an optimal transport approach.
\newblock \emph{Probability Theory and Related Fields}, 183\penalty0
  (1):\penalty0 581--647, 2022.

\bibitem[Dudley(1969)]{Dudley1969}
R.~M. Dudley.
\newblock The speed of mean {Glivenko-Cantelli} convergence.
\newblock \emph{The Annals of Mathematical Statistics}, 40\penalty0
  (1):\penalty0 40 -- 50, 1969.

\bibitem[Dunn and Koes(2024)]{dunn2024}
Ian Dunn and David~Ryan Koes.
\newblock Mixed continuous and categorical flow matching for 3d de novo
  molecule generation.
\newblock In \emph{NeurIPS, Workshop: Machine Learning in Structural Biology},
  2024.

\bibitem[Esser et~al.(2024)Esser, Kulal, Blattmann, Entezari, M{\"u}ller,
  Saini, Levi, Lorenz, Sauer, Boesel, et~al.]{esser2024}
Patrick Esser, Sumith Kulal, Andreas Blattmann, Rahim Entezari, Jonas
  M{\"u}ller, Harry Saini, Yam Levi, Dominik Lorenz, Axel Sauer, Frederic
  Boesel, et~al.
\newblock Scaling rectified flow transformers for high-resolution image
  synthesis.
\newblock In \emph{Forty-first International Conference on Machine Learning},
  2024.

\bibitem[Federer(1959)]{federer1959curvature}
Herbert Federer.
\newblock Curvature measures.
\newblock \emph{Transactions of the American Mathematical Society}, 93\penalty0
  (3):\penalty0 418--491, 1959.

\bibitem[Fukumizu et~al.(2025)Fukumizu, Suzuki, Isobe, Oko, and
  Koyama]{fukumizu2024}
Kenji Fukumizu, Taiji Suzuki, Noboru Isobe, Kazusato Oko, and Masanori Koyama.
\newblock Flow matching achieves almost minimax optimal convergence.
\newblock In \emph{The Thirteenth International Conference on Learning
  Representations}, 2025.

\bibitem[Gabcke(1979)]{Gabcke1979}
Wolfgang Gabcke.
\newblock \emph{Neue Herleitung und explizite Restabsch\"{a}tzung der
  Riemann-Siegel-Formel}.
\newblock PhD thesis, University Goettingen Repository, 1979.

\bibitem[Gao et~al.(2024)Gao, Huang, Jiao, and Zheng]{gao2024}
Yuan Gao, Jian Huang, Yuling Jiao, and Shurong Zheng.
\newblock Convergence of continuous normalizing flows for learning probability
  distributions.
\newblock \emph{arXiv preprint arXiv:2404.00551}, 2024.

\bibitem[Gat et~al.(2024)Gat, Remez, Shaul, Kreuk, Chen, Synnaeve, Adi, and
  Lipman]{gat2024}
Itai Gat, Tal Remez, Neta Shaul, Felix Kreuk, Ricky~TQ Chen, Gabriel Synnaeve,
  Yossi Adi, and Yaron Lipman.
\newblock Discrete flow matching.
\newblock \emph{Advances in Neural Information Processing Systems},
  37:\penalty0 133345--133385, 2024.

\bibitem[Genovese et~al.(2014)Genovese, Perone-Pacifico, Verdinelli, and
  Wasserman]{Genovese_ridge_estimation}
Christopher~R. Genovese, Marco Perone-Pacifico, Isabella Verdinelli, and Larry
  Wasserman.
\newblock {Nonparametric ridge estimation}.
\newblock \emph{The Annals of Statistics}, 42\penalty0 (4):\penalty0 1511 --
  1545, 2014.
\newblock \doi{10.1214/14-AOS1218}.
\newblock URL \url{https://doi.org/10.1214/14-AOS1218}.

\bibitem[Goldfeld et~al.(2020)Goldfeld, Greenewald, Niles-Weed, and
  Polyanskiy]{goldfeld2020}
Ziv Goldfeld, Kristjan Greenewald, Jonathan Niles-Weed, and Yury Polyanskiy.
\newblock Convergence of smoothed empirical measures with applications to
  entropy estimation.
\newblock \emph{IEEE Transactions on Information Theory}, 66\penalty0
  (7):\penalty0 4368--4391, 2020.

\bibitem[Gong et~al.(2025)Gong, Li, Liang, Long, Shi, Song, and Tian]{gong2025}
Chengyue Gong, Xiaoyu Li, Yingyu Liang, Jiangxuan Long, Zhenmei Shi, Zhao Song,
  and Yu~Tian.
\newblock Theoretical guarantees for high order trajectory refinement in
  generative flows.
\newblock \emph{arXiv preprint arXiv:2503.09069}, 2025.

\bibitem[Goodfellow et~al.(2014)Goodfellow, Pouget-Abadie, Mirza, Xu,
  Warde-Farley, Ozair, Courville, and Bengio]{goodfellow}
Ian Goodfellow, Jean Pouget-Abadie, Mehdi Mirza, Bing Xu, David Warde-Farley,
  Sherjil Ozair, Aaron Courville, and Yoshua Bengio.
\newblock Generative adversarial nets.
\newblock \emph{Advances in neural information processing systems}, 27, 2014.

\bibitem[Grathwohl et~al.(2019)Grathwohl, Chen, Bettencourt, and
  Duvenaud]{grathwohl2018}
Will Grathwohl, Ricky T.~Q. Chen, Jesse Bettencourt, and David Duvenaud.
\newblock Scalable reversible generative models with free-form continuous
  dynamics.
\newblock In \emph{International Conference on Learning Representations}, 2019.

\bibitem[G{\"u}hring et~al.(2020)G{\"u}hring, Kutyniok, and
  Petersen]{Guehring2020}
Ingo G{\"u}hring, Gitta Kutyniok, and Philipp Petersen.
\newblock {Error bounds for approximations with deep {ReLU} neural networks in
  {$W^{s,p}$} norms}.
\newblock \emph{Analysis and Applications}, 18\penalty0 (05):\penalty0
  803--859, 2020.

\bibitem[Guo et~al.(2024)Guo, Du, Ma, Chen, and Yu]{guo2024}
Yiwei Guo, Chenpeng Du, Ziyang Ma, Xie Chen, and Kai Yu.
\newblock Voiceflow: Efficient text-to-speech with rectified flow matching.
\newblock In \emph{ICASSP 2024-2024 IEEE International Conference on Acoustics,
  Speech and Signal Processing (ICASSP)}, pages 11121--11125. IEEE, 2024.

\bibitem[Hyv{{\"a}}rinen(2005)]{hyvarinen2005}
Aapo Hyv{{\"a}}rinen.
\newblock Estimation of non-normalized statistical models by score matching.
\newblock \emph{Journal of Machine Learning Research}, 6\penalty0
  (24):\penalty0 695--709, 2005.

\bibitem[Kerrigan et~al.(2024)Kerrigan, Migliorini, and Smyth]{kerrigan2023}
Gavin Kerrigan, Giosue Migliorini, and Padhraic Smyth.
\newblock Functional flow matching.
\newblock In \emph{Proceedings of The 27th International Conference on
  Artificial Intelligence and Statistics}, volume 238 of \emph{Proceedings of
  Machine Learning Research}, pages 3934--3942. PMLR, 2024.

\bibitem[Kingma and Ba(2014)]{kingma2014}
Diederik~P. Kingma and Jimmy Ba.
\newblock Adam: A method for stochastic optimization.
\newblock \emph{International Conference on Learning Representations}, 2014.

\bibitem[Klambauer et~al.(2017)Klambauer, Unterthiner, Mayr, and
  Hochreiter]{klambauer2017self}
G{\"u}nter Klambauer, Thomas Unterthiner, Andreas Mayr, and Sepp Hochreiter.
\newblock Self-normalizing neural networks.
\newblock \emph{Advances in neural information processing systems}, 30, 2017.

\bibitem[Kobyzev et~al.(2020)Kobyzev, Prince, and Brubaker]{kobyzev2020}
Ivan Kobyzev, Simon~JD Prince, and Marcus~A Brubaker.
\newblock Normalizing flows: An introduction and review of current methods.
\newblock \emph{IEEE transactions on pattern analysis and machine
  intelligence}, 43\penalty0 (11):\penalty0 3964--3979, 2020.

\bibitem[Kohler and Langer(2021)]{kohler2021}
Michael Kohler and Sophie Langer.
\newblock {On the rate of convergence of fully connected deep neural network
  regression estimates}.
\newblock \emph{The Annals of Statistics}, 49\penalty0 (4):\penalty0 2231 --
  2249, 2021.

\bibitem[Kunkel(2025)]{kunkel2025distribution}
Lea Kunkel.
\newblock Distribution estimation via flow matching with lipschitz guarantees.
\newblock \emph{arXiv preprint arXiv:2509.02337}, 2025.

\bibitem[Kunkel and Trabs(2025)]{kunkel2025}
Lea Kunkel and Mathias Trabs.
\newblock A {W}asserstein perspective of {V}anilla {GAN}s.
\newblock \emph{Neural Networks}, 181:\penalty0 106770, 2025.

\bibitem[Liang(2021)]{Liang2018}
Tengyuan Liang.
\newblock {How well generative adversarial networks learn distributions}.
\newblock \emph{The Journal of Machine Learning Research}, 22\penalty0
  (1):\penalty0 10366--10406, 2021.

\bibitem[Lipman et~al.(2023)Lipman, Chen, Ben-Hamu, Nickel, and Le]{lipman2023}
Yaron Lipman, Ricky T.~Q. Chen, Heli Ben-Hamu, Maximilian Nickel, and Matthew
  Le.
\newblock Flow matching for generative modeling.
\newblock In \emph{The Eleventh International Conference on Learning
  Representations}, 2023.

\bibitem[Liu et~al.(2022)Liu, Gong, and Liu]{liu2022}
Xingchao Liu, Chengyue Gong, and Qiang Liu.
\newblock Flow straight and fast: Learning to generate and transfer data with
  rectified flow.
\newblock \emph{International Conference on Learning Representations}, 2022.

\bibitem[Marzouk et~al.(2023)Marzouk, Ren, Wang, and Zech]{marzouk2023}
Youssef Marzouk, Zhi Ren, Sven Wang, and Jakob Zech.
\newblock Distribution learning via neural differential equations: a
  nonparametric statistical perspective.
\newblock \emph{arXiv preprint arXiv:2309.01043}, 2023.

\bibitem[Niles-Weed and Berthet(2022)]{niles_weed2022}
Jonathan Niles-Weed and Quentin Berthet.
\newblock {Minimax estimation of smooth densities in Wasserstein distance}.
\newblock \emph{The Annals of Statistics}, 50\penalty0 (3):\penalty0 1519 --
  1540, 2022.

\bibitem[Niyogi et~al.(2008)Niyogi, Smale, and Weinberger]{Niyogi2008-de}
Partha Niyogi, Stephen Smale, and Shmuel Weinberger.
\newblock Finding the homology of submanifolds with high confidence from random
  samples.
\newblock \emph{Discrete Comput. Geom.}, 39\penalty0 (1-3):\penalty0 419--441,
  2008.

\bibitem[Oko et~al.(2023)Oko, Akiyama, and Suzuki]{oko2023}
Kazusato Oko, Shunta Akiyama, and Taiji Suzuki.
\newblock Diffusion models are minimax optimal distribution estimators.
\newblock In \emph{Proceedings of the 40th International Conference on Machine
  Learning}, volume 202 of \emph{Proceedings of Machine Learning Research},
  pages 26517--26582. PMLR, 23--29 Jul 2023.

\bibitem[Parzen(1962)]{Parzen1962}
Emanuel Parzen.
\newblock {On Estimation of a Probability Density Function and Mode}.
\newblock \emph{The Annals of Mathematical Statistics}, 33\penalty0
  (3):\penalty0 1065 -- 1076, 1962.

\bibitem[Poli et~al.(2025)Poli, Massaroli, Yamashita, Asama, Park, and
  Ermon]{politorchdyn}
Michael Poli, Stefano Massaroli, Atsushi Yamashita, Hajime Asama, Jinkyoo Park,
  and Stefano Ermon.
\newblock Torchdyn: Implicit models and neural numerical methods in pytorch,
  2025.

\bibitem[Rezende and Mohamed(2015)]{rezende2015}
Danilo Rezende and Shakir Mohamed.
\newblock Variational inference with normalizing flows.
\newblock In \emph{International Conference on Machine Learning}, pages
  1530--1538. PMLR, 2015.

\bibitem[Ronneberger et~al.(2015)Ronneberger, Fischer, and
  Brox]{ronneberger2015}
Olaf Ronneberger, Philipp Fischer, and Thomas Brox.
\newblock U-net: Convolutional networks for biomedical image segmentation.
\newblock In \emph{Medical image computing and computer-assisted
  intervention--MICCAI 2015: 18th international conference, Munich, Germany,
  October 5-9, 2015, proceedings, part III 18}, pages 234--241. Springer, 2015.

\bibitem[Rosenblatt(1956)]{Rosenblatt1956}
Murray Rosenblatt.
\newblock {Remarks on Some Nonparametric Estimates of a Density Function}.
\newblock \emph{The Annals of Mathematical Statistics}, 27\penalty0
  (3):\penalty0 832 -- 837, 1956.

\bibitem[Schmidt-Hieber(2020)]{Schmidt_Hieber2020}
Johannes Schmidt-Hieber.
\newblock {Nonparametric regression using deep neural networks with {ReLU}
  activation function}.
\newblock \emph{The Annals of Statistics}, 48\penalty0 (4), 2020.

\bibitem[Schreuder(2020)]{Schreuder2020_2}
Nicolas Schreuder.
\newblock {Bounding the expectation of the supremum of empirical processes
  indexed by H{\"o}lder classes}.
\newblock \emph{Mathematical Methods of Statistics}, 29\penalty0 (1):\penalty0
  76--86, 2020.

\bibitem[Schreuder et~al.(2021)Schreuder, Brunel, and Dalalyan]{Schreuder2020}
Nicolas Schreuder, Victor-Emmanuel Brunel, and Arnak Dalalyan.
\newblock Statistical guarantees for generative models without domination.
\newblock \emph{Algorithmic Learning Theory}, pages 1051--1071, 2021.

\bibitem[Scott(1992)]{scott1992}
David~W. Scott.
\newblock \emph{Multivariate density estimation :}.
\newblock Wiley series in probability and mathematical statistics. Wiley,,
  1992.

\bibitem[{Sohl-Dickstein} et~al.(2015){Sohl-Dickstein}, Weiss, Maheswaranathan,
  and Ganguli]{sohl2015}
Jascha {Sohl-Dickstein}, Eric Weiss, Niru Maheswaranathan, and Surya Ganguli.
\newblock Deep unsupervised learning using nonequilibrium thermodynamics.
\newblock In \emph{{International Conference on Machine Learning}}, pages
  2256--2265. PMLR, 2015.

\bibitem[Song et~al.(2021)Song, Sohl-Dickstein, Kingma, Kumar, Ermon, and
  Poole]{song2020}
Yang Song, Jascha Sohl-Dickstein, Diederik~P Kingma, Abhishek Kumar, Stefano
  Ermon, and Ben Poole.
\newblock Score-based generative modeling through stochastic differential
  equations.
\newblock In \emph{International Conference on Learning Representations}, 2021.

\bibitem[St{\'e}phanovitch et~al.(2024)St{\'e}phanovitch, Aamari, and
  Levrard]{stephanovitch2023}
Arthur St{\'e}phanovitch, Eddie Aamari, and Cl{\'e}ment Levrard.
\newblock {Wasserstein generative adversarial networks are minimax optimal
  distribution estimators}.
\newblock \emph{The Annals of Statistics}, 52\penalty0 (5):\penalty0 2167 --
  2193, 2024.

\bibitem[Stewart(1990)]{stewart1990}
G.~W. Stewart.
\newblock Perturbation theory for the singular value decomposition.
\newblock Technical report, University of Maryland at College Park, USA, 1990.

\bibitem[Stewart and Sun(1990)]{Stewart1990-lg}
Gilbert~W Stewart and Ji-Guang Sun.
\newblock \emph{Matrix Perturbation Theory}.
\newblock Computer Science and Scientific Computing. Academic Press, San Diego,
  CA, July 1990.

\bibitem[Su et~al.(2025)Su, Hu, Pi, and Liu]{su2025flow}
Maojiang Su, Jerry Yao-Chieh Hu, Sophia Pi, and Han Liu.
\newblock On flow matching {KL} divergence.
\newblock \emph{arXiv preprint arXiv:2511.05480}, 2025.

\bibitem[Suzuki(2019)]{suzuki2018}
Taiji Suzuki.
\newblock Adaptivity of deep re{LU} network for learning in {Besov} and mixed
  smooth {Besov} spaces: optimal rate and curse of dimensionality.
\newblock In \emph{International Conference on Learning Representations}, 2019.

\bibitem[Tabak and Turner(2013)]{tabak2013}
E.~G. Tabak and Cristina~V. Turner.
\newblock A family of nonparametric density estimation algorithms.
\newblock \emph{Communications on Pure and Applied Mathematics}, 66\penalty0
  (2):\penalty0 145--164, 2013.

\bibitem[Tabak and Vanden-Eijnden(2010)]{tabak2010}
Esteban~G. Tabak and Eric Vanden-Eijnden.
\newblock {Density estimation by dual ascent of the log-likelihood}.
\newblock \emph{Communications in Mathematical Sciences}, 8\penalty0
  (1):\penalty0 217 -- 233, 2010.

\bibitem[Tang and Yang(2024)]{tang24}
Rong Tang and Yun Yang.
\newblock Adaptivity of diffusion models to manifold structures.
\newblock In \emph{Proceedings of The 27th International Conference on
  Artificial Intelligence and Statistics}, volume 238 of \emph{Proceedings of
  Machine Learning Research}, pages 1648--1656. PMLR, 2024.

\bibitem[Tong et~al.(2024)Tong, Fatras, Malkin, Huguet, Zhang, Rector-Brooks,
  Wolf, and Bengio]{tong2024}
Alexander Tong, Kilian Fatras, Nikolay Malkin, Guillaume Huguet, Yanlei Zhang,
  Jarrid Rector-Brooks, Guy Wolf, and Yoshua Bengio.
\newblock Improving and generalizing flow-based generative models with
  minibatch optimal transport.
\newblock \emph{Transactions on Machine Learning Research}, 2024.
\newblock ISSN 2835-8856.

\bibitem[Tsybakov(2009)]{tsybakov2009}
Alexandre~B. Tsybakov.
\newblock \emph{Introduction to Nonparametric Estimation}.
\newblock Springer Publishing Company, Incorporated, 1st edition, 2009.

\bibitem[Vardanyan et~al.(2024)Vardanyan, Hunanyan, Galstyan, Minasyan, and
  Dalalyan]{vardanyan2024statistically}
Elen Vardanyan, Sona Hunanyan, Tigran Galstyan, Arshak Minasyan, and Arnak~S.
  Dalalyan.
\newblock Statistically optimal generative modeling with maximum deviation from
  the empirical distribution.
\newblock In \emph{Forty-first International Conference on Machine Learning},
  2024.

\bibitem[Villani(2008)]{Villani2008}
Cedric Villani.
\newblock \emph{{Optimal Transport: Old and New}}.
\newblock Grundlehren der mathematischen Wissenschaften. Springer Berlin
  Heidelberg, 2008.

\bibitem[Vincent(2011)]{vincent2011}
Pascal Vincent.
\newblock A connection between score matching and denoising autoencoders.
\newblock \emph{Neural Computation}, 23\penalty0 (7):\penalty0 1661--1674,
  2011.

\bibitem[Walter(1970)]{Walter1970}
Wolfgang Walter.
\newblock \emph{Differential and Integral Inequalities}.
\newblock Springer Berlin Heidelberg, 1970.

\bibitem[Weed(2018)]{weed2018}
Jonathan Weed.
\newblock Sharper rates for estimating differential entropy under gaussian
  convolutions.
\newblock \emph{Massachusetts Institute of Technology (MIT), Technical report},
  2018.

\bibitem[Wu and Wu(2022)]{wu2022}
Hau-Tieng Wu and Nan Wu.
\newblock Strong uniform consistency with rates for kernel density estimators
  with general kernels on manifolds.
\newblock \emph{Information and Inference: A Journal of the IMA}, 11\penalty0
  (2):\penalty0 781--799, 2022.

\bibitem[Yakovlev and Puchkin(2025)]{yakovlev2025}
Konstantin Yakovlev and Nikita Puchkin.
\newblock Generalization error bound for denoising score matching under relaxed
  manifold assumption.
\newblock In \emph{Proceedings of Thirty Eighth Conference on Learning Theory},
  volume 291 of \emph{Proceedings of Machine Learning Research}, pages
  5824--5891. PMLR, 30 Jun--04 Jul 2025.

\bibitem[Yang et~al.(2025)Yang, Cheng, Yang, Liu, and Lin]{yang2024}
Xiaofeng Yang, Chen Cheng, Xulei Yang, Fayao Liu, and Guosheng Lin.
\newblock Text-to-image rectified flow as plug-and-play priors.
\newblock In \emph{The Thirteenth International Conference on Learning
  Representations}, 2025.

\bibitem[Yarotsky(2017)]{Yarotsky2017}
Dmitry Yarotsky.
\newblock {Error bounds for approximations with deep {ReLU} networks}.
\newblock \emph{Neural Networks}, 94:\penalty0 103--114, 2017.

\bibitem[Zhang et~al.(2024)Zhang, Yin, Liang, and Liu]{zhang2024}
Kaihong Zhang, Heqi Yin, Feng Liang, and Jingbo Liu.
\newblock Minimax optimality of score-based diffusion models: Beyond the
  density lower bound assumptions.
\newblock In \emph{Proceedings of the 41st International Conference on Machine
  Learning}, volume 235 of \emph{Proceedings of Machine Learning Research},
  pages 60134--60178. PMLR, 21--27 Jul 2024.

\end{thebibliography}
\end{document}